\newtheorem{theorem}{Theorem}
\newtheorem{corollary}{Corollary}
\newtheorem{lemma}{Lemma}
\newtheorem{assumption}{Assumption}
\title{Why Adversarial Training of ReLU Networks \\Is Difficult?}
\author{%
  Xu Cheng\footnotemark[1]\thanks{Contribute equally to this paper.}\\
  Shanghai Jiao Tong University\\
  \texttt{xcheng8@sjtu.edu.cn}\\
  \And
  Hao Zhang\footnotemark[1]\\
  Shanghai Jiao Tong University\\
  \texttt{1603023-zh@sjtu.edu.cn}\\
   \And
  Yue Xin \\
  Shanghai Jiao Tong University\\
  \texttt{xinyuexiong@sjtu.edu.cn}\\
     \And
  Wen Shen\\
  Shanghai Jiao Tong University\\
    \texttt{wen\_shen@sjtu.edu.cn}\\
      \And
  Quanshi Zhang\thanks{Quanshi Zhang is the corresponding author. He is with the Department of Computer Science and Engineering,
the John Hopcroft Center and the MoE Key Lab of Artificial Intelligence, AI Institute, at the Shanghai Jiao Tong University, China.} \\
  Shanghai Jiao Tong University\\
  \texttt{zqs1022@sjtu.edu.cn} \\
}
\begin{document}

\maketitle

\begin{abstract}
This paper mathematically derives an analytic solution of the adversarial perturbation on a ReLU network, and theoretically explains the difficulty of adversarial training.
Specifically, we formulate the dynamics of the adversarial perturbation generated by the multi-step attack, which shows that the adversarial perturbation tends to strengthen eigenvectors corresponding to a few top-ranked eigenvalues of the Hessian matrix of the loss \textit{w.r.t.} the input.
We also prove that adversarial training tends to strengthen the influence of unconfident input samples with large gradient norms in an exponential manner.
Besides, we find that adversarial training strengthens the influence of the Hessian matrix of the loss~\textit{w.r.t.} network parameters, which makes the adversarial training more likely to oscillate along directions of a few samples, and boosts the difficulty of adversarial training.
Crucially, our proofs provide a unified explanation for previous findings in understanding adversarial training~\cite{liu2020loss,kanai2021smoothness,wu2020adversarial,yamada2021adversarial,athalye2018obfuscated,tsipras2018robustness,ilyas2019adversarial,liu2021impact,chen2020robust,rice2020overfitting}.
\end{abstract}

\section{Introduction}
Although deep neural networks (DNNs) have shown promise in different tasks, the DNN was usually fooled by specific imperceptible perturbations of the input data~\cite{goodfellow2014explaining,lecun2015deep}, which were termed  \textit{adversarial examples}.
To defend against adversarial examples, the most widely-used strategy is adversarial training~\cite{kurakin2016adversarial,madry2018towards}.
Despite the effectiveness of adversarial training, extensive experiments have shown that adversarial training is much more difficult to optimize than vanilla training.
Previous studies explained this phenomenon from different perspectives,
such as the sharp loss landscape~\cite{liu2020loss,kanai2021smoothness,wu2020adversarial,yamada2021adversarial},
obfuscated gradients~\cite{athalye2018obfuscated}, and inhomogeneous data distribution~\cite{sinha2017certifying,zhang2019defense,miyato2018virtual}.

Unlike previous research, this paper aims to derive an analytic solution to adversarial perturbations on a ReLU network, and further theoretically proves why adversarial training is difficult.
Specifically, we formulate the dynamics of the adversarial perturbation of the multi-step attack with an analytic solution, and further explain the effects of adversarial perturbations on adversarial training.
Hence, based on the above proofs, we obtain the following three conclusions.

(1) The adversarial perturbation strengthens eigenvectors corresponding to a few top-ranked eigenvalues of the Hessian matrix of the loss~\textit{w.r.t.} the input.

(2) Adversarial training mainly focuses on a few unconfident input samples with large gradient norms.
Furthermore, we prove that the normalization/regularization of perturbations in $\ell_2$ attacks and $\ell_\infty$ attacks alleviate such an imbalance.

(3) Adversarial training strengthens the influence of the Hessian matrix of the loss \textit{w.r.t.} network parameters.
Hence, adversarial training is more likely to make network parameters oscillate, which explains the difficulty of adversarial training, as well.

Note that there is a common trick in adversarial training that people usually learn a robust network on relatively weak adversarial perturbations~\cite{wong2020fast}, which often have not reached the constraint for perturbations.
In this way, we can ignore the constraint of adversarial perturbations.

More crucially, our theoretical proof also provides a theoretical foundation, which may explain various previous findings/understandings of adversarial training~\cite{liu2020loss,kanai2021smoothness,wu2020adversarial,yamada2021adversarial,athalye2018obfuscated,tsipras2018robustness,ilyas2019adversarial,liu2021impact,chen2020robust,rice2020overfitting}.

Contributions of this paper are summarized as follows.
(1) We derive an analytic solution that explains the dynamics of the adversarial perturbation.
(2) We prove that adversarial training strengthens the influence of a few input samples, and increases the likelihood of the oscillation of network parameters, which boosts the difficulty of adversarial training.
(3) Our proofs can explain the benefit of the normalization/regularization of perturbations in $\ell_2$ attacks and $\ell_\infty$ attacks, and can provide a unified view to understand a total of ten previous studies in adversarial training.

\section{Related work}

Some previous studies~\cite{liu2020loss,kanai2021smoothness,wu2020adversarial,yamada2021adversarial,yu2018interpreting} considered that the sharp loss landscape~\textit{w.r.t.} network parameters resulted in the difficulty of adversarial training.
\citet{kurakin2016adversarial} demonstrated that label leaking hindered adversarial training.
\citet{tsipras2018robustness} had proven that compared to vanilla training, adversarial training relied on robust features and did not use non-robust features for inference, which resulted in the inferior classification performance.
The gradient-masking phenomenon~\cite{papernot2017practical,athalye2018obfuscated,tramer2018ensemble} led to a false sense of security in defenses against adversarial examples.
\citet{ilyas2019adversarial} had proven that adversarial examples were attributed to the presence of highly predictive but non-robust features.
Some works~\cite{sinha2017certifying,zhang2019defense,miyato2018virtual} demonstrated adversarial examples generated in the supervised way usually corrupted the underlying data structure, which hindered adversarial training~\cite{qian2022perturbation}.

Crucially, it has been discovered that adversarial training usually has a more significant overfitting problem than vanilla training~\cite{rice2020overfitting}.
\citet{liu2021impact} had proven that the overfitting in adversarial training was caused by the model’s attempt to fit hard adversarial examples.
\citet{chen2020robust} considered that the model overfitted the attacks generated in the early stage of adversarial training, and failed to generalize to the attacks in the late stage.
\citet{stutz2020confidence} demonstrated that the overfitting in adversarial training was a result of enforcing high-confidence predictions on adversarial examples.
\citet{schmidt2018adversarially} and~\citet{zhai2019adversarially} considered that the significantly high adversarial data complexity made adversarial training difficult to achieve good generalization capacity.
\citet{rice2020overfitting} used early stopping to reduce overfitting in adversarial training.

Unlike previous studies, this paper analyzes the dynamics of adversarial perturbations, and theoretically explains the difficulty of adversarial training, based on the derived analytic solution.
More crucially, our proof can also provide a theoretical explanation for previous findings/understandings of adversarial training~\cite{liu2020loss,kanai2021smoothness,wu2020adversarial,yamada2021adversarial,athalye2018obfuscated,tsipras2018robustness,ilyas2019adversarial,liu2021impact,chen2020robust,rice2020overfitting} in Sec.~\ref{sec:4}.


\section{Explaining adversarial perturbations and adversarial training}
\label{sec:3}

Let us first revisit adversarial training.
Given a DNN $f_{\theta}$ parametrized by $\theta$ and an input sample $x \in \mathbb{R}^{n}$ with its true label $y$, the adversarial attack adds a human-imperceptible perturbation $\delta$ to fool the DNN with the adversarial example $x+\delta$, whose objective is usually formulated as follows.
\begin{equation}
\label{main_eqn:adv_attack}
\max_{\delta}   L(f_{\theta}(x+\delta), y) , \quad \text{ s.t. } \quad   \Vert \delta \Vert_{p} \leq \epsilon,
\end{equation}
where $f_{\theta}(x+\delta) $ denotes the network output, and $L(f_{\theta}(x+\delta ), y)$ represents the loss function.
$\epsilon$ is the constraint of the $ \ell_p$ norm of the adversarial perturbation.
To defend against adversarial attacks, adversarial training is often formulated as a min-max game~\cite{madry2018towards}.
\begin{equation}
\label{eqn:adv_train}
\min_{\theta} \; \mathbb{E}_{\{x,y\}} \big[ \max_{ \delta}   L(f_{\theta}(x+\delta), y) \big], \quad \text{ s.t. } \quad  \Vert \delta \Vert_{p} \leq \epsilon,
\end{equation}

\subsection{Analysis of adversarial perturbations}
\label{sec:3.1}

To analyze the dynamics of adversarial perturbations, let us consider the multi-step attack as follows, where $\delta^{(t)}$ is referred to as the perturbation generated after attacking for $t$ steps; $m$ represents the total number of steps; $\alpha$ denotes the step size.
\begin{equation}
\label{main_eqn:adv_per}
\delta^{(m)} =\sum\nolimits_{t=0}^{m-1}  \alpha \cdot  g_{x+ \delta^{(t)}}.
\end{equation}
To simplify the story, we first analyze the most straightforward solution to the multi-step adversarial attack, {\small$g_{x+ \delta^{(t)}} =\frac{\partial}{\partial x} L(f(x+ \delta^{(t)}), y)$}.
Then, we will extend the analysis to the widely-used $\ell_2$ attack and the $\ell_\infty$ attack~\cite{dong2018boosting,goodfellow2014explaining,madry2018towards}, where they regularize or normalize the gradient as
{\small$g_{x+ \delta^{(t)}}^{(\ell_2)}=\frac{\partial}{\partial x} L(f(x+ \delta^{(t)}), y)/\Vert \frac{\partial}{\partial x} L(f(x+ \delta^{(t)}), y) \Vert$}, and
{\small$g_{x+ \delta^{(t)}}^{(\ell_\infty)}=\text{sign} (\frac{\partial}{\partial x} L(f(x+ \delta^{(t)}), y))$}, respectively.

Without loss of generality, let us consider a ReLU network $f$ and an input sample $x$.
$z(x)$ denotes the input feature of the top layer (\textit{e.g.} a softmax layer
{\small$f(x)=\textrm{softmax}(z(x))$}, or a sigmoid layer {\small$f(x)=\textrm{sigmoid}(z(x))$}).
The following equation formulates how the network uses the feature $h$ of the $j$-th linear layer to compute $z(x)$.
\begin{equation}
z(x) = W_{l}^{T}( \dots \Sigma_{j+1}(W_{j+1}^{T}\Sigma_j h +b_{j+1}) \dots) +b_{l},
\end{equation}
where {\small$h=W^{T}_{j} x' +b_{j}$} denotes the linear transformation in the $j$-th layer, subject to {\small$x'=\Sigma_{j-1}(W_{j-1}^{T}(\dots \Sigma_1(W^{T}_{1}x+b_1)\dots)+b_{j-1})$}.
$W_{j}$ and $b_j$ denote the weight and bias of the $j$-th linear layer, respectively.
The matrix {\small$\Sigma_j=diag(\sigma_{j,1},\sigma_{j,2},\dots,\sigma_{j,D}) \in\mathbb{R}^{D \times D}$} represents gating states of the $j$-th
gating layer (\textit{e.g.} a ReLU layer, or a MaxPooling layer), {\small$\sigma_{j,d} \in\{0,1\}$}.
For simplicity, we use  {\small$W= W_{j}$} and  {\small$b= b_{j}$} in the following manuscript without causing ambiguity, thereby {\small$h=W^{T} x' +b$}.

\begin{assumption}
\label{assumption:relu}

Because the change of gating states in multi-step attacks is usually chaotic and unpredictable for analysis, we simplify our research into an idealized adversarial attack, whose adversarial perturbation does not significantly change gating states in gating layers.
In this scenario, we approximate the ReLU network $f$ to a linear model,~\textit{i.e.,} {\small$z(x) \approx \boldsymbol{w} x + \boldsymbol{b}$}, {\small$\boldsymbol{w}=W_{l}^{T}\Sigma_{l-1}\cdots\Sigma_{2}W_2^{T}\Sigma_{1}W_1^{T}$}, to simplify the proof.
\end{assumption}

We have conducted experiments to verify the above assumption.
Experimental results in supplementary material shows that the analytic solution to adversarial perturbations in Theorem~\ref{theorem:adv_per} and Theorem~\ref{theorem:adv_per_2} derived on Assumption~\ref{assumption:relu} well match real perturbations.
It is because the change of gating states is usually unpredictable.
Thus, the analysis under Assumption~\ref{assumption:relu} can still reflect the properties of a ReLU network, to some extent.

Before the later analysis of the $\ell_2$ attack and the $\ell_\infty$ attack, we first focus on the original form of the multi-step attack,~\textit{i.e.} perturbation generated via
{\small$g_{x+ \delta^{(t)}} =\frac{\partial}{\partial x} L(f(x+ \delta^{(t)}), y)$}.
Note that there exists a common trick in adversarial training,~\textit{i.e.,} people usually generate weak adversarial examples for training~\cite{wong2020fast}, which often have not reached the constraint
{\small$\Vert \delta \Vert_{p} < \epsilon$}.
In this way, we can ignore the constraint of perturbations, which does not significantly hurt the trustworthiness of the analysis.

\begin{theorem}
\label{theorem:adv_per}
(Dynamics of perturbations of the m-step attack).
Based on Assumption~\ref{assumption:relu},
the adversarial perturbation {\small$\delta^{(m)}$} is given as follows,
where {\small$\lambda_{i}$} and {\small$v_{i}$} denote the $i$-th largest eigenvalue of the Hessian matrix {\small$H_x= \frac{\partial^{2}}{\partial x \partial x^{T}} L(f(x),y)$} and its corresponding eigenvector, respectively.
{\small$\gamma_i = g_x^{T} v_{i} \in \mathbb{R}$} represents the projection of the gradient {\small$g_x=\frac{\partial}{\partial x} L(f(x),y)$} on the eigenvector {\small$v_{i}$}.
{\small$\mathcal{R}_2(\alpha)$} denote the sum of terms no less than the second order in Taylor expansion~\textit{w.r.t.} {\small$\delta^{(m)}$}.
\vspace{-8pt}

\begin{small}
\begin{equation}
\label{main_eqn:solution_adv_per}
\delta^{(m)} =  \sum_{i=1}^{n}  \frac{(1+\alpha \lambda_{i} ) ^{m}-1}{\lambda_{i}} \gamma_{i} v_{i} + \mathcal{R}_2(\alpha),\quad
g_{x+\delta^{(m)}} = \sum_{i=1}^{n} (1+\alpha \lambda_{i} ) ^{m} \gamma_{i} v_{i}.
\end{equation}\end{small}
\end{theorem}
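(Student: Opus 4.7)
The plan is to exploit Assumption~\ref{assumption:relu} to make the gradient map $x \mapsto g_x$ smooth and tractable, turn the multi-step attack into a linear recursion, and solve it in closed form using the eigenbasis of $H_x$. First I would rewrite (\ref{main_eqn:adv_per}) in incremental form, $\delta^{(t+1)} = \delta^{(t)} + \alpha\, g_{x+\delta^{(t)}}$ with $\delta^{(0)}=0$. Under the assumption the gating matrices are frozen, so $z(x+\delta)$ is affine in $\delta$ and the only remaining nonlinearity of $L$ comes from the top softmax/sigmoid layer. Taylor expanding the gradient around $x$ therefore gives $g_{x+\delta^{(t)}} = g_x + H_x \delta^{(t)} + r(\delta^{(t)})$, where the remainder $r(\delta)$ is at least quadratic in $\delta$.

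Next I would solve the leading linear piece. Discarding $r$ yields the affine recursion $\delta^{(t+1)} = (I + \alpha H_x)\delta^{(t)} + \alpha g_x$, which by induction unrolls to $\delta^{(m)} = \alpha \sum_{k=0}^{m-1}(I + \alpha H_x)^k g_x$. Diagonalising $H_x = \sum_i \lambda_i v_i v_i^T$ and writing $g_x = \sum_i \gamma_i v_i$ decouples the recursion on each eigendirection, and the scalar geometric sum $\alpha \sum_{k=0}^{m-1}(1+\alpha\lambda_i)^k = ((1+\alpha\lambda_i)^m - 1)/\lambda_i$ produces exactly the main term $\sum_i \frac{(1+\alpha\lambda_i)^m - 1}{\lambda_i}\gamma_i v_i$; all accumulated higher-order contributions from $r$ are collected into $\mathcal{R}_2(\alpha)$. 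The companion formula for $g_{x+\delta^{(m)}}$ then drops out from a single Taylor expansion: using $H_x v_i = \lambda_i v_i$, one has $g_{x+\delta^{(m)}} \approx g_x + H_x \delta^{(m)} = \sum_i \gamma_i v_i + \sum_i ((1+\alpha\lambda_i)^m - 1)\gamma_i v_i = \sum_i(1+\alpha\lambda_i)^m \gamma_i v_i$, matching the second identity.

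The main obstacle is the clean bookkeeping of the remainder $r$ across iterations. Each invocation contributes $O(\|\delta^{(t)}\|^2)$, but the leading iterates grow like $(1+\alpha\lambda_1)^t$, so one must verify that for small $\alpha$ (the regime emphasised in the paper, where the perturbation budget is not yet saturated) the compounded nonlinear contribution still collapses into the single symbol $\mathcal{R}_2(\alpha)$ -- essentially an induction that tracks the order of $\alpha$ carried by each Taylor coefficient. Assumption~\ref{assumption:relu} is critical here: without frozen gating states the map $\delta \mapsto g_{x+\delta}$ would only be piecewise smooth, and neither the Taylor expansion of the gradient nor the closed-form solution of the resulting linear recursion would be justified.
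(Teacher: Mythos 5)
Your proposal is correct and follows essentially the same route as the paper's proof: Taylor-expand the gradient to get the affine recursion in $\delta^{(t)}$, unroll it into the geometric sum $\alpha\sum_{k=0}^{m-1}(I+\alpha H_x)^k g_x$, diagonalise $H_x$ to evaluate each scalar geometric series as $((1+\alpha\lambda_i)^m-1)/\lambda_i$, and obtain the gradient formula from one further first-order expansion using $H_x v_i=\lambda_i v_i$. The paper phrases the induction on the per-step increments $\Delta x^{(t)}$ rather than on $\delta^{(t)}$ directly, and likewise sweeps the accumulated remainders into $\mathcal{R}_2(\alpha)$ by asserting each is $O(\alpha^2)$, so the bookkeeping concern you flag is present there too.
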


Additionally, we can extend the $m$-step attack in Theorem~\ref{theorem:adv_per} to a more idealized case,~\textit{i.e.,} the infinite-step attack with the infinitesimal step size, as follows.

\begin{theorem}
\label{theorem:adv_per_2}
(Perturbations of the infinite-step attack).
{\small$\beta = \alpha m$} reflects the overall adversarial strength of the infinite-step attack with the step number {\small$m \to + \infty$} and the step size {\small$\alpha=\beta/m \to 0$}.
Then, this infinite-step adversarial perturbation {\small$\hat\delta = \lim_{m\to+ \infty }\alpha \sum\nolimits_{t=0}^{m-1} \frac{\partial}{\partial x} L(f(x+ \delta^{(t)}), y)$} can be re-written as follows, where {\small$\hat{\mathcal{R}}_2(\alpha)$} denote the sum of terms no less than the second order in Taylor expansion~\textit{w.r.t.} {\small$\hat\delta$}.
\vspace{-7pt}

\begin{small}
\begin{equation}
\label{main_eqn:solution_adv_per_2}
\hat\delta =  \sum_{i=1}^{n}  \frac{\exp(\beta \lambda_{i})-1}{\lambda_{i}} \gamma_{i} v_{i} +\hat{\mathcal{R}}_2(\alpha),\quad
g_{x+\hat\delta} = \sum_{i=1}^{n} \exp(\beta \lambda_{i}) \gamma_{i} v_{i}.
\end{equation}\end{small}
\end{theorem}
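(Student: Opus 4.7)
The cleanest route is to take the continuum limit of Theorem~\ref{theorem:adv_per} under the prescribed scaling $\alpha=\beta/m$, $m\to+\infty$. Since Theorem~\ref{theorem:adv_per} has already produced a closed-form expression in the eigenbasis of $H_x$, the only remaining task is to pass to the limit term-by-term in the eigen-decomposition.

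The plan is as follows. First, I would substitute $\alpha=\beta/m$ into the expression
\begin{equation*}
\delta^{(m)} = \sum_{i=1}^{n}\frac{(1+\alpha\lambda_i)^m - 1}{\lambda_i}\gamma_i v_i + \mathcal{R}_2(\alpha),
\end{equation*}
and invoke the classical exponential limit $\lim_{m\to\infty}(1+\beta\lambda_i/m)^m = \exp(\beta\lambda_i)$ componentwise, which immediately yields the leading term $\sum_i\frac{\exp(\beta\lambda_i)-1}{\lambda_i}\gamma_i v_i$. The same substitution in $g_{x+\delta^{(m)}} = \sum_i (1+\alpha\lambda_i)^m\gamma_i v_i$ gives $g_{x+\hat\delta} = \sum_i \exp(\beta\lambda_i)\gamma_i v_i$. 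The finite sum over the $n$ eigendirections makes the interchange of sum and limit trivial, so no uniformity argument is needed for the principal part.

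An alternative, perhaps more conceptually satisfying, derivation is to recognize that the infinite-step scheme is the Euler discretization of the gradient flow $\dot\delta(t) = g_{x+\delta(t)}$ with $\delta(0)=0$ and terminal time $t=\beta$. Under Assumption~\ref{assumption:relu} the network is linear, so $L$ is quadratic in its input up to the second-order Taylor remainder and $g_{x+\delta} = g_x + H_x\delta + R_1(\delta)$, where $R_1$ collects terms of order $\geq 2$. Ignoring $R_1$ gives the linear ODE $\dot\delta = g_x + H_x \delta$, whose solution projected onto each eigenvector $v_i$ is $\delta_i(\beta) = \frac{e^{\beta\lambda_i}-1}{\lambda_i}\gamma_i$, and the corresponding gradient is $g_x+H_x\hat\delta$ projected as $e^{\beta\lambda_i}\gamma_i$ in direction $v_i$. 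This matches the claim, and the discarded $R_1$-contribution is collected into $\hat{\mathcal{R}}_2(\alpha)$.

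The main technical subtlety I expect is bookkeeping for the higher-order remainder. In Theorem~\ref{theorem:adv_per}, $\mathcal{R}_2(\alpha)$ already absorbs all $O(\alpha^2)$ and higher contributions coming from the Taylor expansion of the gradient along the trajectory; one must check that after the rescaling $\alpha=\beta/m$, the cumulative remainder across $m$ iterations still collapses to a well-defined $\hat{\mathcal{R}}_2(\alpha)$ of at least second order in $\hat\delta$, rather than blowing up. This is the only step requiring care: the straightforward argument is that each Euler step contributes an $O(\alpha^2)$ local error, so the total defect is $O(m\alpha^2) = O(\beta^2/m) \to 0$ in the leading-order sense, while the remaining higher-order-in-$\hat\delta$ terms persist and are exactly what $\hat{\mathcal{R}}_2(\alpha)$ denotes. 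After dispatching this bookkeeping, the theorem follows directly.
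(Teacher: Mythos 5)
Your primary route is essentially the paper's own proof: the appendix likewise writes $\hat\delta$ as the geometric sum $\alpha[I+(I+\alpha H_x)+\cdots+(I+\alpha H_x)^{m-1}]g_x+\hat{\mathcal{R}}_2(\alpha)$, diagonalizes $H_x=V\Lambda V^T$, and evaluates each diagonal entry via $\lim_{m\to\infty}(1+\beta\lambda_k/m)^m=\exp(\beta\lambda_k)$, then obtains $g_{x+\hat\delta}$ from the first-order Taylor expansion $g_x+H_x\hat\delta$. Your gradient-flow reformulation and the $O(m\alpha^2)=O(\beta^2/m)$ accounting of the accumulated remainder are consistent with (indeed slightly more explicit than) the paper's treatment, which simply declares $\hat{\mathcal{R}}_2(\alpha)\propto\alpha^2$ and drops it.
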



Theorem~\ref{theorem:adv_per} and Theorem~\ref{theorem:adv_per_2} are proven in the supplementary with the following two conclusions.\\
\textbf{(C. 1)} The adversarial perturbation strengthens gradient components in $g_x$ along eigenvectors corresponding to a few top-ranked eigenvalues {\small$\lambda_{i}$} of the Hessian matrix {\small$H_x$} exponentially.
Furthermore, a larger adversarial strength $\beta$, such as attacking for more steps, is more likely to force the perturbation to change along fewer top-ranked eigenvectors.\\
\textbf{(C. 2)}
Both the gradient norm  {\small$\Vert g_{x+\hat\delta} \Vert$}~\textit{w.r.t.} the adversarial perturbation,
and the perturbation norm {\small$\Vert\hat\delta\Vert$} increase along with the step number $m$ exponentially.

Note that different parameter settings of multi-step attacks (such as the step size or the step number) may make slight differences on the generation of adversarial perturbations.
Thus, to remove side effects of such settings and simplify the story, in the following manuscript, we use the idealized case of the infinite-step attack in Theorem~\ref{theorem:adv_per_2}
to analyze adversarial training, without loss of generality.


$\bullet\quad$\textit{Experimental verification 1 of Theorem~\ref{theorem:adv_per_2}}.
Here, we examined whether the norm of the gradient {\small$\Vert g_{x+\hat\delta} \Vert$}~\textit{w.r.t.} the adversarial perturbation, and the norm of the adversarial perturbation {\small$\Vert\hat\delta\Vert$} increased with the step number $m$ in an approximately exponential manner.
Specifically, we generated perturbations {\small$\hat\delta$} in Theorem~\ref{theorem:adv_per_2} based on
VGG-11~\cite{simonyan2014very}, AlexNet~\cite{krizhevsky2012imagenet}, and ResNet-18~\cite{he2016deep}, which were learned on the MNIST dataset~\cite{lecun1998gradient}, respectively.
Then, the perturbation {\small$\hat\delta$} was crafted by the gradient {\small$g_{x+ \hat\delta^{(t)}} =\frac{\partial}{\partial x} L(f(x+ \hat\delta^{(t)}), y)$}.
Besides, we also generated two baseline perturbations via the $\ell_2$ attack and the $\ell_\infty$ attack for comparison,~\textit{i.e.,} applying {\small$g_{x+ \delta^{(t)}}^{(\ell_2)}$}, and {\small$g_{x+ \delta^{(t)}}^{(\ell_\infty)}$} defined under Eq.~\eqref{main_eqn:adv_per}.
Because the step number {\small$m_\text{success}$} that successfully attacked was different for each sample, in Fig.~\ref{delta}, we used {\small$m/m_\text{success}$} as the horizontal axis to measure the rate of progress in the adversarial attack.
Fig.~\ref{delta} shows that both the gradient norm {\small$\Vert g_{x+\hat\delta} \Vert$}, and the perturbation norm {\small$\Vert\hat\delta\Vert$} increased exponentially with the step number, which verified Theorem~\ref{theorem:adv_per_2}.
Please see supplementary materials for more details.

$\bullet\quad$\textit{Experimental verification 2 of Theorem~\ref{theorem:adv_per_2}}.
Unlike the above experiment, in this experiment, we checked whether the solution {\small$\hat\delta$} derived in Theorem~\ref{theorem:adv_per_2} well fit the real perturbation measured in practice.
Specifically, we generated adversarial perturbations on sixteen different ReLU networks, including various MLPs, CNNs, and ResNets.
Experimental results show that for each network, the theoretically derived perturbation {\small$\hat\delta$} well fit the real one, which successfully verified Theorem~\ref{theorem:adv_per_2}.
Please see supplementary materials for details.

\begin{figure}[t]
	\centering
	\includegraphics[width=\linewidth]{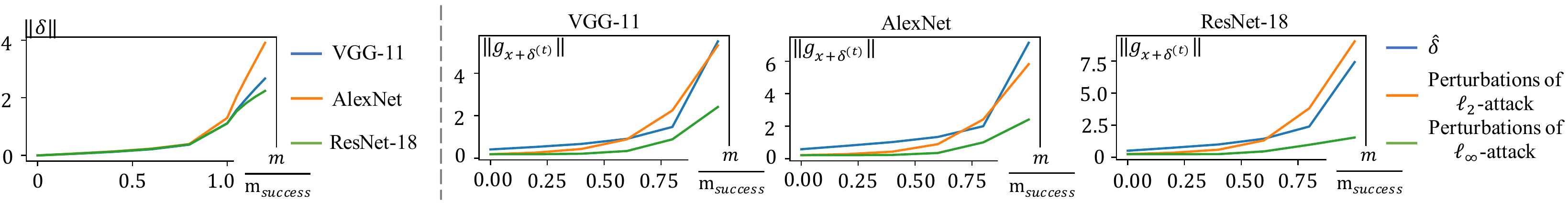}
	\vspace{-15pt}
	\caption{Exponential increases of perturbation norms {\small$\Vert\hat\delta\Vert$} and gradient norms {\small$\Vert g_{x+ \hat\delta^{(t)}}\Vert$} with the step number {\small$m$} of the adversarial attack.}
	\label{delta}
	\vspace{-8pt}
\end{figure}


\textbf{For $\ell_2$ attacks and $\ell_\infty$ attacks.}
As two typical attacking methods, the $\ell_2$ attack and the $\ell_\infty$ attack usually regularize/normalize the adversarial strength in each step by applying {\small$g_{x+ \delta^{(t)}}^{(\ell_2)}=\frac{\partial}{\partial x} L(f(x+ \delta^{(t)}), y)/\Vert \frac{\partial}{\partial x} L(f(x+ \delta^{(t)}), y) \Vert$},
and {\small$g_{x+ \delta^{(t)}}^{(\ell_\infty)}=\text{sign} (\frac{\partial}{\partial x} L(f(x+ \delta^{(t)}), y))$}, respectively.
In fact, for the $\ell_\infty$ attack, we can roughly consider that only the gradient component {\small$o_{x}^{T}g_{x+ \delta^{(t)}}^{(\ell_\infty)} o_{x}$} disentangled from {\small$g_{x+ \delta^{(t)}}^{(\ell_\infty)}$} along {\small$\frac{\partial}{\partial x} L(f(x), y)$} is effective,
where {\small$o_{x}=\frac{\partial}{\partial x} L(f(x), y)/ \Vert \frac{\partial}{\partial x} L(f(x), y) \Vert$} is the unit vector in the direction of {\small$\frac{\partial}{\partial x} L(f(x), y)$}.
However, it is quite complex to analyze the exact attacking behavior.
Therefore, in Corollary~\ref{theorem:adv_per_norm}, we just normalize the perturbation in Theorem~\ref{theorem:adv_per_2} to roughly approximate the regularization/normalization of perturbations in  $\ell_2$ attacks and $\ell_\infty$ attacks.


\begin{corollary}
\label{theorem:adv_per_norm}
(Normalized perturbation of the infinite-step attack).
Based on Theorem~\ref{theorem:adv_per_2}, if we ignore residual terms {\small$R_2(\hat\delta)$},
the perturbation of the infinite-step $\ell_2$ attack generated via  {\small$g_{x+ \delta^{(t)}}^{(\ell_2)}$}, and the perturbation of the infinite-step $\ell_\infty$ attack generated via  {\small$g_{x+ \delta^{(t)}}^{(\ell_\infty)}$} can be approximated as follows.
Here, $C \in \mathbb{R}$ is roughly considered to be proportional to the step number $m$ of the adversarial attack, which is explained in supplementary materials.
\vspace{-6pt}

\begin{small}
\begin{equation}
\label{main_eqn:solution_adv_per_norm}
\hat\delta^{\text{(norm)}} \approx  C \cdot \hat\delta/{\Vert \hat\delta \Vert}
=  C \cdot \sum\nolimits_{i=1}^{n}  \frac{\exp(\beta \lambda_{i})-1}{\lambda_{i}} \gamma_{i} v_{i} \Bigg/
{ \sqrt {\sum\nolimits_{i=1}^{n}  (\frac{\exp(\beta \lambda_{i})-1}{\lambda_{i}}\gamma_{i})^{2}}}.
\end{equation}\end{small}
\end{corollary}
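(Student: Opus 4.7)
The plan is to leverage Theorem~\ref{theorem:adv_per_2} directly by observing that normalizing the gradient at each step is a positive rescaling and therefore preserves the per-step update direction. Consequently, the $\ell_2$ attack (and, up to a further approximation, the $\ell_\infty$ attack) differs from the unnormalized infinite-step attack only in the magnitude of each increment. The claim thus reduces to two sub-claims: (i) the cumulative direction of the normalized iterates coincides with that of $\hat\delta$, and (ii) the overall length $C$ is governed by the step budget and is roughly proportional to $m$.

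First I would write out the $\ell_2$ recursion, $\delta^{(t+1)}_{\ell_2} = \delta^{(t)}_{\ell_2} + \alpha\, g_{x+\delta^{(t)}_{\ell_2}}/\Vert g_{x+\delta^{(t)}_{\ell_2}} \Vert$, and apply Assumption~\ref{assumption:relu} to linearize $g_{x+\delta} \approx g_x + H_x \delta$. Diagonalizing $H_x = \sum_i \lambda_i v_i v_i^T$ and expanding every iterate in the eigenbasis, the same reasoning used to establish Theorem~\ref{theorem:adv_per_2} shows that the spectral profile of the iterate still concentrates exponentially on the top eigenvector $v_1$. Since $\hat\delta = \sum_i \tfrac{\exp(\beta \lambda_i)-1}{\lambda_i}\gamma_i v_i$ has the same asymptotic direction, the unit vector $\hat\delta/\Vert\hat\delta\Vert$ matches the direction of the normalized iterate. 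The magnitude $C$ then follows by counting: each of the $m$ steps adds a vector of (effectively) unit length scaled by $\alpha$, and once the per-step directions align along $v_1$, they sum approximately in phase, giving $C \propto m\alpha$, consistent with the supplementary remark that $C$ is roughly proportional to $m$.

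For the $\ell_\infty$ attack, I would use the observation already noted in the text that the effective contribution of $\text{sign}(\partial_x L(f(x),y))$ is its projection onto $o_x = \partial_x L(f(x),y)/\Vert \partial_x L(f(x),y) \Vert$, namely $o_x^T\, \text{sign}(\partial_x L)\, o_x$. This projection is parallel to the gradient, so the per-step update is again aligned with the unnormalized direction, reducing the $\ell_\infty$ case to the same scaling argument with a different but still $m$-proportional constant $C$.

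The main obstacle is that the normalized and unnormalized trajectories $\{\delta^{(t)}_{\ell_2}\}$ and $\{\delta^{(t)}\}$ visit different points, so their gradients at matching steps are not literally equal; the argument relies on both iterations converging rapidly to the top-eigenvector direction. Dropping the residual $R_2(\hat\delta)$, as the statement explicitly allows, is what makes this tractable by linearizing the dynamics around $x$; a fully quantitative bound would additionally require controlling how fast the normalized iterate aligns with $v_1$, which is the standard behavior of a normalized power iteration. The $\ell_\infty$ case is further lossy because the sign map is genuinely nonlinear, so the $o_x$-projection is itself only an approximation, as acknowledged in the text.
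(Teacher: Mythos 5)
The paper's own justification of this corollary is far more modest than what you attempt: it does not analyze the $\ell_2$ or $\ell_\infty$ recursion at all. It simply \emph{defines} the approximation $\hat\delta^{\text{(norm)}} \approx C\,\hat\delta/\Vert\hat\delta\Vert$ by normalizing the result of Theorem~\ref{theorem:adv_per_2}, and the only thing it argues is the scaling of $C$: since $\hat\delta^{\text{(norm)}} = \lim_m \sum_{t}\alpha\, g^{(\ell_2)}_{x+\delta^{(t)}}$ (resp.\ the $\ell_\infty$ version) accumulates increments of regularized/normalized magnitude, its norm grows roughly linearly in the step number, hence $C\propto m$. Your proposal is therefore a genuinely different and more ambitious route, since you try to derive the directional agreement from the dynamics of the normalized iteration itself. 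Your $C\propto m\alpha$ argument matches the paper's.

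The gap is in your directional sub-claim. You justify that the normalized iterate and $\hat\delta$ share a direction by arguing that both concentrate exponentially on $v_1$. That only establishes agreement in the strong-attack limit, but the corollary is used (conclusion C.3) precisely to contrast the weak regime, where $\hat\delta^{\text{(norm)}}$ is claimed to be parallel to $g_x$, with the strong regime, where it is parallel to $v_1$; for intermediate $\beta$ your argument says nothing about whether the two directions coincide. The cleaner observation that actually does the work under Assumption~\ref{assumption:relu} is that the normalized per-step update $(g_x+H_x\delta)/\Vert g_x+H_x\delta\Vert$ is a pointwise positive rescaling of the unnormalized one, so the two iterations trace the same curve in $\mathbb{R}^n$ and differ only in how far along it they travel; the normalized endpoint is then $\hat\delta(\beta')/$ for some effective strength $\beta'$ determined by the arc length $m\alpha$, which both yields the stated form and reveals that the $\beta$ in Eq.~\eqref{main_eqn:solution_adv_per_norm} is not literally the $\alpha m$ of the unnormalized attack. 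The paper glosses over this entirely, so your gap is no worse than the paper's, but as written your power-iteration argument does not prove the formula in the regime where it carries its content. Your treatment of the $\ell_\infty$ case via the $o_x$ projection matches the heuristic the paper itself offers and is acceptable at the paper's level of rigor.
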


\textbf{(C. 3)}
Corollary~\ref{theorem:adv_per_norm} reveals that a weak adversarial strength {\small$\beta$} makes the normalized perturbation {\small$\hat\delta^{\text{(norm)}}$} approximately parallel to the gradient {\small$g_x$}.
Whereas, a large adversarial strength makes the normalized perturbation {\small$\hat\delta^{\text{(norm)}}$} approximately parallel to the eigenvector {\small$v_1$}~\textit{w.r.t.} the largest eigenvalue.

\subsection{Explaining the difficulty of adversarial training}
\label{sec:3.2}

In this subsection, we explain the effects of adversarial perturbations on weight optimization in adversarial training.
Without loss of generality, we analyze the learning dynamics of the $j$-th linear layer of the ReLU network $f$.
Specifically, if we use vanilla training to fine-tune the network on the original input sample $x$ for a single step, then the gradient of the loss~\textit{w.r.t.} the weight of the $j$-th layer {\small$W=W_j$} is given as {\small$g_{W}= \frac{\partial}{\partial W} L(f(x),y)$}.
In comparison, if we train the network on the adversarial example {\small$x+\hat\delta$} for a single step, then we will get the gradient {\small$g^{\text{(adv)}}_{W} =\frac{\partial}{\partial W}  L(f(x+\hat\delta),y)$}.
In this way, {\small $\Delta g_{W} = g^{\text{(adv)}}_{W}- g_{W}$} denotes additional effects of adversarial training on the gradient.
\begin{equation}
\Delta g_{W} = g^{\text{(adv)}}_{W}- g_{W}= \frac{\partial}{\partial W} L(f(x+\hat\delta),y) -\frac{\partial}{\partial W}  L(f(x),y).
\end{equation}
Similarly, {\small $ \Delta g_{W}^{\text{(norm)}} =  g^{\text{(adv,norm)}}_{W}- g_{W}$} represents additional effects on the gradient brought by adversarial training, when we use the normalized perturbation {\small$\hat\delta^{\text{(norm)}}$} in Corollary~\ref{theorem:adv_per_norm} (related to the $\ell_2$ attack and the $\ell_\infty$ attack).
\begin{equation}
\Delta g_{W}^{\text{(norm)}} = g^{\text{(adv,norm)}}_{W}- g_{W}= \frac{\partial}{\partial W} L(f(x+\hat\delta^{\text{(norm)}}),y) - \frac{\partial}{\partial W}  L(f(x),y).
\end{equation}

\begin{assumption}
(proven in the supplementary materials).
\label{assumption:sigmoid}
The analysis of binary classification based on a sigmoid function, {\small$f(x)= \frac{1}{1+exp(-z(x))}, z(x) \in \mathbb{R}$},
can also explain the multi-category classification with a softmax function, {\small$f(x) = \frac{exp(z’_{1})}{\sum_{i=1}^{c} exp(z’_{i})}, z’ \in \mathbb{R}^{c}$}, if the second-best category is much stronger than other categories.
In this case, attacks on the multi-category classification can be approximated by attacks on the binary classification between the best and the second-best categories,~\textit{i.e.,} {\small$f(x) \approx \frac{1}{1+exp(-z)}$}, subject to {\small$z=z'_1-z'_2\in \mathbb{R}$}.
$z'_1$ and $z'_{2}$ are referred to as network outputs corresponding to the best category and the second-best category, respectively.
\end{assumption}

\begin{lemma}
(proven in the supplementary materials).
\label{lemma:hessian}
Let us focus on the cross-entropy loss {\small$L(f(x),y)$}.
If the classification is based on a softmax operation, then the Hessian matrix {\small$H_z =\frac{\partial^{2}}{\partial z \partial z^{T}}   L(f(x),y)$} is positive semi-definite.
If the classification is based on a sigmoid operation, the scalar {\small$H_z \geq g_z^{2} \geq 0$}, as long as the attacking has not finished (still {\small$z(x) \cdot y>0, y \in \{-1,+1\}$}).
Here, {\small$g_z = \frac{\partial}{\partial z} L(f(x),y)  \in \mathbb{R}$}.
\end{lemma}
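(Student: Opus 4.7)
The plan is to handle the softmax and sigmoid cases separately by direct differentiation of the cross-entropy loss with respect to $z$, and then to recognize the resulting expressions as structured objects---a variance in the softmax case and a parabola on $(0,1)$ in the sigmoid case---that immediately imply the claimed inequalities.

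\textbf{Softmax case.} First I would write the softmax probabilities as $p_i = e^{z_i}/\sum_j e^{z_j}$ and the cross-entropy loss as $L = -\log p_y$. The standard computation gives $\partial L/\partial z_i = p_i - \mathbf{1}[i=y]$, so the Hessian takes the familiar diagonal-minus-rank-one form $H_z = \mathrm{diag}(p) - p p^T$. To show $H_z$ is positive semi-definite I would pick an arbitrary $v \in \mathbb{R}^c$ and observe that $v^T H_z v = \sum_i p_i v_i^2 - (\sum_i p_i v_i)^2$ is exactly the variance of $v_I$ under $I \sim p$, and is therefore non-negative (equivalently, Cauchy--Schwarz with weights $p_i$, or Jensen's inequality applied to $x \mapsto x^2$).

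\textbf{Sigmoid case.} Using $y \in \{-1,+1\}$, I would rewrite the binary cross-entropy loss in the unified form $L = \log(1 + e^{-yz})$. Direct differentiation yields $g_z = -y\,\sigma(-yz)$ and, via the identity $\sigma'(u) = \sigma(u)(1-\sigma(u))$, $H_z = \sigma(-yz)(1-\sigma(-yz))$. Letting $p := \sigma(-yz) \in (0,1)$, one has $g_z^2 = p^2$ and $H_z = p(1-p)$, so the desired inequality $H_z \geq g_z^2$ is equivalent to $1 - p \geq p$, i.e., $p \leq 1/2$, i.e., $-yz \leq 0$. The hypothesis that attacking has not finished, $y \cdot z(x) > 0$, delivers this (in fact strictly), and $g_z^2 \geq 0$ is trivial.

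\textbf{Main obstacle.} Neither part poses a serious analytic difficulty; the only thing needing care is fixing conventions. One must commit to the form $L = \log(1+e^{-yz})$ with $y \in \{-1,+1\}$ (rather than a $\{0,1\}$-labeled binary cross-entropy) so that $y \cdot z(x) > 0$ corresponds exactly to the correctly-classified side of the decision boundary, and one must check that this is consistent with the softmax-to-sigmoid reduction declared in Assumption~\ref{assumption:sigmoid} (where $z$ plays the role of $z'_1 - z'_2$). Once conventions are pinned down, both claims reduce to short linear-algebra and one-variable calculus computations.
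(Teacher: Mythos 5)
Your proposal is correct, and it reaches the same two key objects as the paper: the softmax Hessian $H_z=\mathrm{diag}(\mathbf{p})-\mathbf{p}\mathbf{p}^T$ and, for the sigmoid, the ratio $H_z/g_z^2=e^{yz}$ (your $(1-p)/p$ with $p=\sigma(-yz)$ is exactly this quantity), so the sigmoid half coincides with the paper's argument up to a change of parametrization. The one genuine divergence is how positive semi-definiteness of the softmax Hessian is established: the paper bounds the eigenvalues via the Gershgorin circle theorem, exploiting that the $k$-th diagonal entry $p_k(1-p_k)$ equals the $k$-th off-diagonal row sum $\sum_{k'\neq k}p_kp_{k'}$, whereas you evaluate the quadratic form directly and recognize $v^TH_zv=\sum_ip_iv_i^2-(\sum_ip_iv_i)^2$ as a variance (Jensen/Cauchy--Schwarz). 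Your route is the more elementary and self-contained one -- it avoids invoking a named spectral localization theorem and also makes transparent exactly when the form degenerates (on constant vectors $v$) -- while the paper's Gershgorin argument additionally yields an explicit upper bound $\lambda\leq\max_k 2p_k(1-p_k)$ on the eigenvalues, which the variance argument does not give for free (though the paper never uses that upper bound). Your closing remark about pinning down the $\{-1,+1\}$ labeling convention is well taken and matches the paper's choice $L=-\log\frac{e^{yz}}{1+e^{yz}}$, which is identical to your $\log(1+e^{-yz})$.
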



Theorems~\ref{theorem2} and~\ref{theorem3} explain training effects of the perturbation {\small$\hat\delta$} in Theorem~\ref{theorem:adv_per_2} on adversarial training.

\begin{theorem}
\label{theorem2}
(proven in the supplementary materials).
Based on Assumptions~\ref{assumption:relu} and \ref{assumption:sigmoid}, the effect of the adversarial perturbation {\small$\hat\delta$} in Eq.~\eqref{main_eqn:solution_adv_per_2} on the change of the gradient {\small$\tilde{g}_x = \frac{\partial z(x)}{\partial x}$} is formulated as follows.
{\small$\Delta \tilde{g}_x = - \eta \Delta g_{W} \tilde{g}_h$} represents the additional effects of adversarial training on changing {\small$\tilde{g}_x$}, because
adversarial training makes an additional change {\small$- \eta \Delta g_{W}$} on {\small${W}$}\footnote[1]{It is because adversarial training changes {\small${W}$} by {\small$- \eta g_{W}^{\text{(adv)}}$}, and vanilla training changes {\small${W}$} by  {\small$- \eta g_{W}$}, {\small$\eta>0$}.}.
In this way, {\small$\tilde{g}_x ^{T} \Delta \tilde{g}_x$} measures the significance of such additional changes along the direction of the gradient {\small$\tilde{g}_x$}.

\begin{small}
\begin{equation}
\label{main_eqn:adv_train}
\tilde{g}_x ^{T} \Delta \tilde{g}_x
=-\eta\tilde{g}_x ^{T} \Delta g_{W} \tilde{g}_h
= (e^{\mathcal{A}} -1) \tilde{g}_x ^{T}  \Delta \tilde{g}_x^{\text{(ori)}}
- \frac{ \eta g_z^{2} \;\Vert \tilde{g}_h \Vert ^{2} }{H_z}  (e^{2\mathcal{A}} -e^{\mathcal{A}}),
\end{equation}\end{small}
where {\small$\tilde{g}_h = \frac{\partial z(x)}{\partial h}$}, {\small$\mathcal{A} = \beta H_z \Vert \tilde{g}_x \Vert ^{2} \in \mathbb{R}$}, and $\eta$ denotes the learning rate to update the weight.
Considering the footnote\footnotemark[1], {\small$\Delta \tilde{g}_x^{\text{(ori)}} =- \eta g_{W} \tilde{g}_h$} measures the effects of vanilla training on changing {\small$\tilde{g}_x$} in the current back-propagation.
\end{theorem}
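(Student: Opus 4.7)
The plan is to exploit two drastic simplifications offered by the assumptions. Under Assumption~\ref{assumption:relu} the map $x\mapsto z(x)$ becomes affine, so $\tilde g_x=\partial z/\partial x$ is constant in $x$ and the input Hessian reduces to the rank-one matrix $H_x = H_z\,\tilde g_x\tilde g_x^{T}$. Its unique non-zero eigenvalue is $\lambda_1 = H_z\|\tilde g_x\|^2=\mathcal{A}/\beta$, with eigenvector $v_1=\tilde g_x/\|\tilde g_x\|$. Under Assumption~\ref{assumption:sigmoid} the logit $z$ is scalar, so the gradient $g_x=g_z\,\tilde g_x$ is itself aligned with $v_1$: $\gamma_1=g_z\|\tilde g_x\|$ and $\gamma_i=0$ for $i\geq 2$. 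Plugging these into Theorem~\ref{theorem:adv_per_2} collapses $\hat\delta$ to a single eigendirection,
$$\hat\delta \;=\; \frac{e^{\mathcal{A}}-1}{H_z\,\|\tilde g_x\|^2}\,g_z\,\tilde g_x,$$
and therefore $\tilde g_x^{T}\hat\delta = (e^{\mathcal{A}}-1)\,g_z/H_z$.

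Next I would Taylor-expand the scalar loss-gradient at the adversarial input. Because $z$ is affine in $x$, a first-order expansion gives $g_z^{\text{(adv)}} = g_z + H_z\bigl(z(x+\hat\delta)-z(x)\bigr) = g_z + H_z\,\tilde g_x^{T}\hat\delta = e^{\mathcal{A}}\,g_z$. This clean multiplicative relation is the central identity from which both exponential factors in the claimed formula will emerge. With it in hand, I would decompose the weight gradient using the layer identity $g_W=g_z\,x'\,\tilde g_h^{T}$ and its perturbed counterpart $g_W^{\text{(adv)}} = g_z^{\text{(adv)}}(x'+\hat\delta')\,\tilde g_h^{T}$, where $\hat\delta'$ denotes $\hat\delta$ propagated forward to the input of layer $j$. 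Subtraction yields
$$\Delta g_W \;=\; (e^{\mathcal{A}}-1)\,g_z\,x'\,\tilde g_h^{T} \;+\; e^{\mathcal{A}}\,g_z\,\hat\delta'\,\tilde g_h^{T}.$$

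I would then contract this expression as required by the theorem. Right-multiplying by $\tilde g_h$ produces the common factor $\|\tilde g_h\|^2$, and left-multiplying by $\tilde g_x^{T}$ uses $\tilde g_x^{T}\hat\delta = (e^{\mathcal{A}}-1)\,g_z/H_z$, since only the component of $\hat\delta'$ parallel to $\tilde g_x$ survives the contraction (this is again controlled by the rank-one Hessian). The first resulting summand, $-\eta(e^{\mathcal{A}}-1)\,g_z\|\tilde g_h\|^2\,\tilde g_x^{T} x'$, is immediately $(e^{\mathcal{A}}-1)\,\tilde g_x^{T}\,\Delta\tilde g_x^{\text{(ori)}}$ because $\Delta\tilde g_x^{\text{(ori)}}=-\eta g_W\tilde g_h=-\eta g_z\|\tilde g_h\|^2 x'$; the second rearranges to $-\eta\,g_z^{2}\,\|\tilde g_h\|^2\,(e^{2\mathcal{A}}-e^{\mathcal{A}})/H_z$. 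Together they give precisely Eq.~\eqref{main_eqn:adv_train}.

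The main obstacle is bookkeeping rather than any deep step. One must keep four pieces aligned simultaneously: the rank-one structure of $H_x$, the perfect alignment of $g_x$ with its top eigenvector (which makes Theorem~\ref{theorem:adv_per_2} collapse to one term), the first-order scalar Taylor expansion that gives the exact multiplicative law $g_z^{\text{(adv)}}=e^{\mathcal{A}}\,g_z$ inside the linearization, and the chain-rule correspondence between a weight update at layer $j$ and the induced change in the input-output gradient $\tilde g_x$. The last point is the most subtle, since it requires tracking how the forward-propagated perturbation $\hat\delta'$ interacts with $\tilde g_x^{T}$; once one observes that this interaction again collapses along the single eigendirection $v_1$, the two exponential terms in the theorem drop out by direct substitution.
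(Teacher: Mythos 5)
Your proposal is correct, and it reaches Eq.~\eqref{main_eqn:adv_train} by a genuinely different and noticeably leaner route than the paper. The paper's proof stays in weight/feature space: it Taylor-expands $g_{h+\Delta h}$, left-multiplies $\Delta g_W$ by $H_x$, runs several inductions on matrix powers of $(I+\alpha WH_hW^T)$ to show $(H_h\Delta h)^T=[(1+\alpha\mathcal{B})^m-1]\,g_h^T$ with $\mathcal{B}=H_z\tilde g_h^TW^TW\tilde g_h$, invokes an extra appendix assumption that $W^TW$ is invertible in order to cancel it, takes $m\to\infty$ to produce $e^{\mathcal{A}}$, and only then contracts with $\tilde g_x^T(\cdot)\tilde g_h$. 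You instead exploit at the outset that under Assumptions~\ref{assumption:relu} and~\ref{assumption:sigmoid} the Hessian $H_x=H_z\tilde g_x\tilde g_x^T$ is rank one with $g_x=g_z\tilde g_x$ exactly aligned with its only eigendirection, so Theorem~\ref{theorem:adv_per_2} collapses to $\hat\delta=\frac{(e^{\mathcal{A}}-1)g_z}{H_z\Vert\tilde g_x\Vert^2}\tilde g_x$, whence $\tilde g_x^T\hat\delta=(e^{\mathcal{A}}-1)g_z/H_z$ and the scalar law $g_z^{\text{(adv)}}=e^{\mathcal{A}}g_z$; the two terms of the theorem then fall out of a two-line contraction. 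Your $\Delta g_W=(e^{\mathcal{A}}-1)g_z\,x\,\tilde g_h^T+e^{\mathcal{A}}g_z\,\hat\delta\,\tilde g_h^T$ agrees exactly with the paper's $x(H_h\Delta h)^T+\hat\delta(g_h+H_h\Delta h)^T$ after substituting $H_h=H_z\tilde g_h\tilde g_h^T$. What your route buys is brevity, no matrix inductions, no need for the $W^TW$ invertibility assumption, and a clear statement that the whole effect lives in the one-dimensional subspace spanned by $\tilde g_x$; what the paper's route buys is that its intermediate lemma (the $H_x\Delta g_W$ identity) is reused verbatim to prove Theorem~\ref{theorem4} for normalized perturbations. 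Both arguments discard the same second-order Taylor remainders, so your proof operates at the same level of rigor as the paper's. The only cosmetic caveat is notational: the theorem's identity $\Delta\tilde g_x=-\eta\Delta g_W\tilde g_h$ presupposes the appendix convention in which $W$ is the composite map and $h\approx W^Tx+b$, so your $x'$ and forward-propagated $\hat\delta'$ should be read as $x$ and $\hat\delta$ themselves; under that reading every step you wrote checks out.
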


\begin{theorem}
\label{theorem3}
(proven in the supplementary materials).
Based on Assumptions~\ref{assumption:relu} and~\ref{assumption:sigmoid}, we derived the following equation~\textit{w.r.t.} adversarial training based on perturbations {\small$\hat\delta$} in Theorem~\ref{theorem:adv_per_2}.
Considering the footnote\footnotemark[1], {\small$\Delta \tilde{g}_x^{\text{(adv)}} =-\eta g_{W}^{\text{(adv)}} \tilde{g}_h$} reflects effects of adversarial training on changing the gradient {\small$\tilde{g}_x$}.
In this way, {\small$\tilde{g}_x ^{T}  \Delta \tilde{g}_x^{\text{(adv)}}$} represents the significance of such effects along the direction of the gradient {\small$\tilde{g}_x$}.

\begin{small}
\begin{equation}
\label{main_eqn:adv_train_1}
\tilde{g}_x ^{T}  \Delta \tilde{g}_x^{\text{(adv)}}
= -\eta\tilde{g}_x ^{T} g_{W}^{\text{(adv)}} \tilde{g}_h
= e^{ \mathcal{A}} \tilde{g}_x ^{T} \Delta \tilde{g}_x^{\text{(ori)}}
- \frac{ \eta g_z^{2}  (e^{2 \mathcal{A}} -e^{ \mathcal{A}})}{H_z}  \Vert \tilde{g}_h \Vert ^{2}.
\end{equation}
\end{small}
\end{theorem}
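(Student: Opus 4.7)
The plan is to reduce Theorem~\ref{theorem3} directly to Theorem~\ref{theorem2} by exploiting the exact identity $g_W^{\text{(adv)}} = g_W + \Delta g_W$, which holds by the very definition $\Delta g_W = g_W^{\text{(adv)}} - g_W$ given just before Theorem~\ref{theorem2}. Contracting both sides with $\tilde g_x^{T}$ on the left and $-\eta \tilde g_h$ on the right immediately splits the left-hand side of Eq.~\eqref{main_eqn:adv_train_1} into
\begin{equation*}
-\eta\, \tilde g_x^{T} g_W^{\text{(adv)}} \tilde g_h \;=\; \bigl(-\eta\, \tilde g_x^{T} g_W \tilde g_h\bigr) + \bigl(-\eta\, \tilde g_x^{T} \Delta g_W \tilde g_h\bigr).
\end{equation*}
The first summand equals $\tilde g_x^{T} \Delta \tilde g_x^{\text{(ori)}}$ by the definition $\Delta \tilde g_x^{\text{(ori)}} = -\eta g_W \tilde g_h$ recorded in Theorem~\ref{theorem2}, while the second summand is exactly the quantity $\tilde g_x^{T} \Delta \tilde g_x$ that Theorem~\ref{theorem2} already evaluates in closed form.

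Substituting that closed form and collecting like terms then finishes the proof. Adding $\tilde g_x^{T} \Delta \tilde g_x^{\text{(ori)}}$ to the right-hand side of Eq.~\eqref{main_eqn:adv_train} collapses the coefficient in front of $\tilde g_x^{T} \Delta \tilde g_x^{\text{(ori)}}$ from $1 + (e^{\mathcal{A}} - 1)$ to $e^{\mathcal{A}}$, while the residual $-\eta g_z^{2}\|\tilde g_h\|^{2}(e^{2\mathcal{A}} - e^{\mathcal{A}})/H_z$ carries through unchanged. This matches the right-hand side of Eq.~\eqref{main_eqn:adv_train_1} exactly, so Theorem~\ref{theorem3} drops out of Theorem~\ref{theorem2} in two lines of algebra.

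The genuine obstacle is therefore not Theorem~\ref{theorem3} itself but the content absorbed from Theorem~\ref{theorem2}. In an independent derivation I would use Assumption~\ref{assumption:relu} to linearize $z(x) \approx \boldsymbol{w} x + \boldsymbol{b}$, which forces $H_x = H_z \tilde g_x \tilde g_x^{T}$ to be rank one with top eigenvector $v_1 \propto \tilde g_x$, top eigenvalue $\lambda_1 = H_z \|\tilde g_x\|^{2} = \mathcal{A}/\beta$, and projection coefficient $\gamma_1 = g_z \|\tilde g_x\|$ (with $\gamma_i = 0$ for $i \geq 2$). Theorem~\ref{theorem:adv_per_2} then collapses to the single-direction perturbation $\hat\delta = (e^{\mathcal{A}} - 1) g_z \tilde g_x / (H_z \|\tilde g_x\|^{2})$ and adversarial gradient $g_{x+\hat\delta} = e^{\mathcal{A}} g_z \tilde g_x$. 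Combining this with the chain-rule factorization $g_W = g_z\, x'\, \tilde g_h^{T}$ at layer $j$ (valid under Assumption~\ref{assumption:relu}'s fixed gating states), the first-order Taylor expansion $g_z' \approx g_z + H_z\, \tilde g_x^{T} \hat\delta = e^{\mathcal{A}} g_z$ (which uses Assumption~\ref{assumption:sigmoid} to reduce to a scalar $z$ and Lemma~\ref{lemma:hessian} to guarantee $H_z \geq g_z^{2} > 0$), and the analogous expansion of $x'$ through the earlier layers, one recovers Theorem~\ref{theorem2} after bookkeeping. The delicate step is isolating the $e^{2\mathcal{A}}$ contribution: it arises from multiplying the $e^{\mathcal{A}}$ factor in $g_z'$ by a second $e^{\mathcal{A}}$ factor that only surfaces once the contraction $\tilde g_x^{T} \Delta g_W \tilde g_h$ re-introduces the perturbed hidden feature; tracking both simultaneously is where care is required, while all remaining pieces are routine.
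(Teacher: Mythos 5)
Your proposal is correct and is essentially identical to the paper's own proof: the supplementary proof of this theorem simply adds $\tilde g_x^{T} g_W \tilde g_h$ to both sides of the identity established for Theorem~\ref{theorem2}, uses $\Delta g_W = g_W^{\text{(adv)}} - g_W$ to recognize the left side as $\tilde g_x^{T} g_W^{\text{(adv)}}\tilde g_h$, and multiplies through by $-\eta$, which is exactly your two-line reduction collapsing $1+(e^{\mathcal{A}}-1)$ to $e^{\mathcal{A}}$. The additional sketch of how Theorem~\ref{theorem2} itself would be derived is not needed here, since that result may be taken as given.
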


A common understanding of adversarial training is to alleviate the current gradient {\small$g_x$},
\textit{i.e.,} having a trend towards {\small${g}_x ^{T} \Delta \tilde{g}_x <0$}, so as to boost the adversarial robustness.
In this scenario, Theorem~\ref{theorem2} and Theorem~\ref{theorem3} reveal the following two conclusions.\\
\textbf{(C. 4)}
Adversarial training usually has a potential of decreasing the significance of the current gradient,~\textit{i.e.,} pushing {\small$ \tilde{g}_x ^{T} \Delta \tilde{g}_x $} and {\small$ \tilde{g}_x ^{T}  \Delta \tilde{g}_x^{\text{(adv)}}$} towards negative values.
It is because the second term in Eq.~\eqref{main_eqn:adv_train} and Eq.~\eqref{main_eqn:adv_train_1} is non-positive, due to {\small$H_z > 0$} in Lemma~\ref{lemma:hessian}.
More crucially, if vanilla training has already alleviated the current gradient {\small$g_x$}
(\textit{i.e.,} {\small$ \tilde{g}_x ^{T} \Delta \tilde{g}_x^{\text{(ori)}}<0$}),
then adversarial training will further strengthen such an alleviation in an exponential manner.
\\
\textbf{(C. 5)}
Adversarial training exponentially strengthens the influence of a few unconfident input samples with large values of {\small$H_z \in \mathbb{R}$} and large gradient norms {\small$ \Vert \tilde{g}_x \Vert $}.
Such mechanisms make the adversarial training more likely to oscillate in directions of a few samples (cf. Theorem~\ref{theorem:oscillation}), which boosts the difficulty of adversarial training, as well.


$\bullet\quad$\textit{Experimental verification 1 of Theorem~\ref{theorem2}.}
For verification, we conducted experiments to examine whether the derived training effect well fit the real effect.
Specifically, we calculated the metric {\small$\kappa = {\mathbb{E}_{x}[ \| \phi^{*} - \hat\phi \|}/{ \|\phi^{*}\|}]$} to evaluate the fitting between the theoretical derivation {\small$\hat\phi$} computed using the right side of Eq.~\eqref{main_eqn:adv_train} and {\small$\phi^{*}=\tilde{g}_x ^{T} \Delta \tilde{g}_x$} measured in experiments,
where {\small$\phi^{*}$} was computed using real measurements of {\small$\tilde{g}_x, \eta, g_{W}^{\text{(adv)}}, g_{W}$}, and {\small$\tilde{g}_h$} on a ReLU network.
To this end, we learned four types of ReLU networks on the MNIST dataset via adversarial training.
We followed settings in~\cite{ren2022towards} to construct five MLPs, five CNNs, three MLPs with skip connections, and three CNNs with skip connections, respectively.
Experimental results show that the average {\small$\kappa$} over all sixteen networks was {\small$0.097$}, which verified the correctness of Theorem~\ref{theorem2}.
Please see supplementary materials for detailed experimental settings and {\small$\kappa$} values for different networks.


$\bullet\quad$\textit{Experimental verification 2 of Theorem~\ref{theorem2}.}
Unlike \textit{Experimental verification 1 of Theorem~\ref{theorem2}}, in this experiment, we examined whether input samples with large {\small$H_z$}, large  {\small$H_z\Vert\tilde{g}_{x}\Vert^2$} values, and large {\small$\mathcal{A}$} values had large impacts {\small$|\tilde{g}_x ^{T} \Delta \tilde{g}_x|$} and {\small$ \Vert \Delta g_{W}^{\text{(adv)}} \Vert$},
\textit{i.e.,} whether adversarial training boosted the influence of such samples.
Note that in real applications, the {\small$\mathcal{A}$} value changed in each step of the adversarial attack, because the step-wise perturbation sometimes changed
the Hessian matrix {\small$H_z$} and the gradient {\small$\tilde{g}_{x}$}.
Thus, to be precise, we estimated the real {\small$\mathcal{A}$} value in Theorem~\ref{theorem2} as
{\small $\hat{\mathcal{A}}=\sum\nolimits_{t=1}^{m}\alpha H_z^{(t)}\Vert\tilde{g}_{x+\hat\delta^{(t)}}\Vert^2$}, subject to {\small$\tilde{g}_{x+\hat\delta^{(t)}}= \frac{\partial}{\partial x} z(x+\hat\delta^{(t)})$} and {\small$H_z^{(t)}= \frac{\partial^{2}}{\partial z \partial z^{T}} L(f(x+\hat\delta^{(t)}),y)$}.
To this end, we learned AlexNet and VGG-11 on the MNIST dataset via adversarial training on PGD, respectively.
Fig.~\ref{adv_gw} shows that input samples with larger values of {\small$H_z$},  {\small$H_z\Vert\tilde{g}_{x}\Vert^2$}, and {\small $\hat{\mathcal{A}}$} usually yielded larger {\small$\Vert \tilde{g}_x ^{T} \Delta \tilde{g}_x \Vert$} and {\small$\Vert \Delta g_{W}^{\text{(adv)}} \Vert$} values, which indicated that adversarial training strengthened the influence of these samples.
Thus, the conclusion \textbf{C. 5} was verified.
Please see supplementary materials for more details.

\begin{figure}[t]
	\centering
	\includegraphics[width=\linewidth]{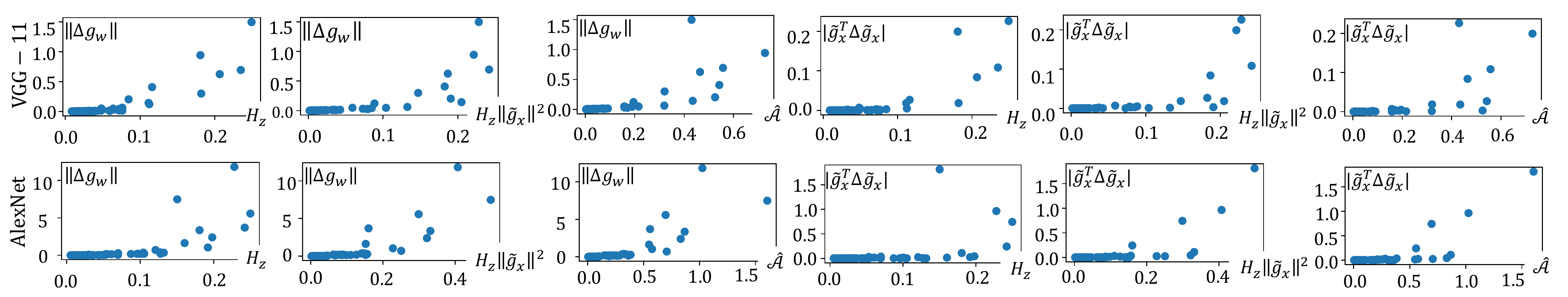}
	\vspace{-15pt}
	\caption{Impacts {\small$\Vert \Delta g_{W} \Vert$} and {\small$|\tilde{g}_x ^{T} \Delta \tilde{g}_x|$} of different input samples on adversarial training. Adversarial training boosts the influence of input samples with large  {\small$H_z$}, {\small$H_z \Vert \tilde{g}_x \Vert ^{2}$}, and {\small$\hat{\mathcal{A}}$} values.}
	\label{adv_gw}
	\vspace{-3pt}
\end{figure}

$\bullet\quad$\textit{Experimental verification 3 of Theorem~\ref{theorem2}.}
Here, we examined whether the optimization direction of adversarial training was dominated by a few input samples with large  {\small$\mathcal{A} = \beta H_z \Vert \tilde{g}_x \Vert ^{2}$} values.
Specifically,
let {\small$ \Delta g_{W} =g_{W}^{\text{(adv)}}-g_{W} $} denote the additional effect of adversarial training on a specific sample $x$ beyond vanilla training.
Then, based on the adversarially trained networks in \textit{experimental verification 2 of Theorem~\ref{theorem2}},
we measured the cosine similarity {\small$\cos( \Delta g_{W},  \Delta\overline{ g}_{W})$} between the training effect {\small$ \Delta g_{W} $} on a single adversarial example and the average effect {\small$\Delta\overline{g}_{W} =\mathbb{E}_{x+\hat\delta}[\Delta g_{W}]$} over different adversarial examples.
Fig.~\ref{cos} shows that the direction of the average effect  {\small$ \Delta\overline{g}_{W} $} was similar to (dominated by) training effects of a few input samples with large {\small$\hat{\mathcal{A}}$} values (the real {\small$\mathcal{A}$} calculated in experiments), which verified Theorem~\ref{theorem2}.
Please see supplementary materials for more details.

\begin{figure}[t]
	\centering
	\includegraphics[width=0.8\linewidth]{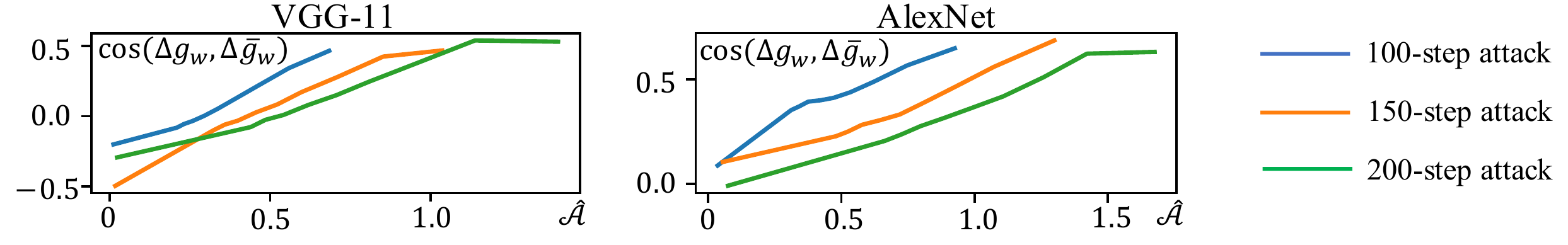}
	\vspace{-5pt}
	\caption{Average cosine similarity {\small$\mathbb{E}_{x}  [\cos(\Delta g_W|_x,  \Delta\overline{ g}_{W})]$} between {\small$ \Delta\overline{g}_{W} $} and each sample $x$ with a specific {\small$\hat{\mathcal{A}}$} value. {\small$ \Delta\overline{g}_{W} $} is similar to the direction of {\small$ \Delta g_{W}$}~\textit{w.r.t.} samples with large {\small$\hat{\mathcal{A}}$} values.}
	\label{cos}
	\vspace{-8pt}
\end{figure}


\textbf{Effects of normalized perturbations.}
As aforementioned, the $\ell_2$ attack and the $\ell_\infty$ attack can be roughly considered as the regularization/normalization of adversarial perturbations.
In this way, we analyze the effects of the normalized perturbation {\small$\hat\delta^{\text{(norm)}}$} on adversarial training,
which approximately explains adversarial training based on perturbations of the $\ell_2$ attack and the $\ell_\infty$ attack.

\begin{theorem}
\label{theorem4}
(proven in the supplementary materials).
Based on Assumptions~\ref{assumption:relu} and~\ref{assumption:sigmoid}, we derived the following equation~\textit{w.r.t.} adversarial training based on normalized perturbations {\small$\hat\delta^{\text{(norm)}}$}  in Corollary~\ref{theorem:adv_per_norm}.
Considering the footnote\footnotemark[1], {\small$\Delta \tilde{g}_x^{\text{(norm)}}=- \eta \Delta g_{W}^{\text{(norm)}} \tilde{g}_h = - \eta( g_{W}^{\text{(adv, norm)}}- g_{W}) \tilde{g}_h$}
represents additional effects of adversarial training on changing {\small$\tilde{g}_x$}.
In this way, {\small $\tilde{g}_x ^{T} \Delta \tilde{g}_x^{\text{(norm)}} =- \eta \tilde{g}_x ^{T}\Delta g_{W}^{\text{(norm)}} \tilde{g}_h$} reflects the significance of such additional effects along the direction of the gradient {\small$ \tilde{g}_x$}.
\vspace{-8pt}

\begin{small}
\begin{equation}
\label{main_eqn:norm_adv_train}
\tilde{g}_x ^{T} \Delta \tilde{g}_x^{\text{(norm)}}
= C \cdot \big(\frac{e^{\mathcal{A}}}{\Vert \hat\delta \Vert} -\frac{1}{\Vert \hat\delta \Vert}\big) \tilde{g}_x ^{T}  \Delta \tilde{g}_x^{\text{(ori)}}
- C \cdot \frac{   \eta g_z^{2}\; \Vert \tilde{g}_h \Vert ^{2} }{H_z }
\bigg(\frac{e^{\mathcal{A}}}{\Vert \hat\delta \Vert} -\frac{1}{\Vert \hat\delta \Vert} + C \cdot (\frac{e^{\mathcal{A}}}{\Vert \hat\delta \Vert} -\frac{1}{\Vert \hat\delta \Vert})^{2} \bigg).
\end{equation}
\end{small}
\end{theorem}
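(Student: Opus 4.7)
The plan is to repeat the argument that produced Theorem~\ref{theorem2} verbatim, but at the very last substitution replace the raw perturbation $\hat\delta$ by the normalized version $\hat\delta^{\text{(norm)}} \approx C\hat\delta/\|\hat\delta\|$ supplied by Corollary~\ref{theorem:adv_per_norm}. Under Assumption~\ref{assumption:relu} the top-layer feature is the affine map $z(x)\approx \boldsymbol{w}x+\boldsymbol{b}$, so the perturbation enters the loss only through the scalar $z(x+\hat\delta^{\text{(norm)}})-z(x)=\boldsymbol{w}\hat\delta^{\text{(norm)}}=C\boldsymbol{w}\hat\delta/\|\hat\delta\|$; consequently every $\hat\delta$-dependent contribution in the Theorem~\ref{theorem2} derivation is uniformly rescaled by the single numerical factor $C/\|\hat\delta\|$. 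Assumption~\ref{assumption:sigmoid} again reduces $L$ to a scalar function of $z\in\mathbb{R}$, so $H_z$ and $g_z$ play exactly the same roles as in Theorem~\ref{theorem2}.

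First I would expand $g_W^{\text{(adv,norm)}}=\partial L(f(x+\hat\delta^{\text{(norm)}}),y)/\partial W$ by the chain rule through $z$ into $g_z(x+\hat\delta^{\text{(norm)}})\cdot\partial z(x+\hat\delta^{\text{(norm)}})/\partial W$, and use a second-order Taylor expansion of $g_z$ about $z(x)$ to get $g_z(x+\hat\delta^{\text{(norm)}})\approx g_z+H_z\cdot\boldsymbol{w}\hat\delta^{\text{(norm)}}$. Under the linearization $\partial z/\partial W$ reduces to the same backward tensor that defines $\tilde{g}_h$, so $\Delta g_W^{\text{(norm)}}=g_W^{\text{(adv,norm)}}-g_W$ decomposes into a linear-in-$\hat\delta^{\text{(norm)}}$ contribution driven by $H_z$ and a quadratic-in-$\hat\delta^{\text{(norm)}}$ contribution from the product of the linearized $g_z$ shift with the perturbation-induced shift of $\partial z/\partial W$ through $h$. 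Contracting with $-\eta\tilde{g}_x^T$ on the left and $\tilde{g}_h$ on the right produces the quantity $\tilde{g}_x^T\Delta\tilde{g}_x^{\text{(norm)}}$ that appears on the left-hand side of Eq.~\eqref{main_eqn:norm_adv_train}.

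Next I would insert the eigendecomposition of $\hat\delta$ from Theorem~\ref{theorem:adv_per_2} and project onto $\tilde{g}_x$; as in the Theorem~\ref{theorem2} proof, the combination $\mathcal{A}=\beta H_z\|\tilde{g}_x\|^2$ emerges and each standalone factor $(e^{\mathcal{A}}-1)$ appearing there traces back to the magnitude of $\boldsymbol{w}\hat\delta$ along $\tilde{g}_x$. The substitution $\hat\delta\mapsto C\hat\delta/\|\hat\delta\|$ therefore replaces $(e^{\mathcal{A}}-1)\mapsto C(e^{\mathcal{A}}-1)/\|\hat\delta\|$ in the linear piece, yielding the first summand $C(e^{\mathcal{A}}/\|\hat\delta\|-1/\|\hat\delta\|)\,\tilde{g}_x^T\Delta\tilde{g}_x^{\text{(ori)}}$ of Eq.~\eqref{main_eqn:norm_adv_train}. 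For the quadratic piece I would use the identity $e^{2\mathcal{A}}-e^{\mathcal{A}}=(e^{\mathcal{A}}-1)+(e^{\mathcal{A}}-1)^2$ (implicit in the Theorem~\ref{theorem2} derivation) to split Theorem~\ref{theorem2}'s second term into a linear and a squared bracket; after normalization these become $C(e^{\mathcal{A}}-1)/\|\hat\delta\|$ and $C^2(e^{\mathcal{A}}-1)^2/\|\hat\delta\|^2$, which reassemble into precisely the bracketed expression $(e^{\mathcal{A}}/\|\hat\delta\|-1/\|\hat\delta\|)+C(e^{\mathcal{A}}/\|\hat\delta\|-1/\|\hat\delta\|)^2$ of the second summand.

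The main obstacle I expect is keeping the $C$-bookkeeping straight in the quadratic piece: the $(e^{\mathcal{A}}-1)^2$ factor already present in the unnormalized proof and the extra $C^2/\|\hat\delta\|^2$ introduced by $\hat\delta^{\text{(norm)}}=C\hat\delta/\|\hat\delta\|$ must be combined so that exactly one $C$ factors out in front of the whole bracket and a second $C$ sits inside the squared term, while the linear piece inside the same bracket carries only a single $C/\|\hat\delta\|$. A secondary subtlety is that Corollary~\ref{theorem:adv_per_norm} only approximates $\hat\delta^{\text{(norm)}}$ up to the residual $\mathcal{R}_2(\hat\delta)$ of Theorem~\ref{theorem:adv_per_2}; I would need to verify that these discarded residuals are of the same order as the higher-order terms already dropped in the Theorem~\ref{theorem2} derivation, so that substituting the approximation into a second-order expansion of $L$ introduces no new dominant error. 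Apart from these bookkeeping points the derivation is algebraic and parallels the Theorem~\ref{theorem2}/\ref{theorem3} proofs line-for-line.
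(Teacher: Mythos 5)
Your proposal is correct and follows essentially the same route as the paper's proof: the paper re-derives the Theorem~\ref{theorem2} machinery (its auxiliary lemma computing $H_x\,\Delta g_W^{\text{(norm)}}$) with $\hat\delta$ replaced by $C\hat\delta/\Vert\hat\delta\Vert$, so that the linear-in-$\hat\delta$ term $x(H_h\Delta h)^T$ picks up one factor $C/\Vert\hat\delta\Vert$ and the quadratic term $\hat\delta(g_h+H_h\Delta h)^T$ picks up the split $(e^{\mathcal{A}}-1)\bigl[1+\tfrac{C}{\Vert\hat\delta\Vert}(e^{\mathcal{A}}-1)\bigr]$, exactly the bookkeeping you describe via $e^{2\mathcal{A}}-e^{\mathcal{A}}=(e^{\mathcal{A}}-1)+(e^{\mathcal{A}}-1)^2$. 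The only cosmetic difference is that the paper routes the projection through the rank-one identity $H_x=H_z\tilde{g}_x\tilde{g}_x^T$ rather than through the eigendecomposition language you use, which changes nothing of substance.
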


It is because Theorem~\ref{theorem:adv_per_2} shows that an extremely weak adversarial strength {\small$\beta \to 0$} usually yields {\small$\Vert \hat\delta \Vert \to \Vert g_x \Vert$}, and a
relatively strong adversarial strength {\small$\beta$} usually makes {\small$\Vert \hat\delta \Vert \to \exp(\beta  \Vert \tilde{g}_x \Vert ^{2} g_z^{2} )/{ \Vert {g}_x \Vert}$} with an exponential strength.
In this way, given a relatively strong attack, we can ignore the term {\small$1/ \Vert \hat\delta \Vert \to 0$} in Eq.~\eqref{main_eqn:norm_adv_train}, and prove that the strength of
the training effect {\small$\tilde{g}_x ^{T} \Delta \tilde{g}_x^{\text{(norm)}}$} is mainly determined by the term {\small$\exp(\mathcal{A})/\Vert \hat\delta \Vert \approx \Vert {g}_x \Vert \cdot \exp( \beta  \Vert \tilde{g}_x \Vert ^{2} (H_z-g_z^{2}))$}.
Please see supplementary materials for the proof.
Besides, according to Lemma~\ref{lemma:hessian}, as long as the attack has not succeeded yet, we have {\small$H_z-g_z^{2} > 0$},
but for too confident samples {\small$z(x) \to \infty$} or too unconfident samples {\small$z(x) = 0$}, we get {\small$H_z-g_z^{2} = 0$}.
Hence, we obtain the following two conclusions.\\
\textbf{(C. 6)}
Adversarial training on the normalized perturbations strengthens the influence of a few input samples with large gradient norms {\small$ \Vert \tilde{g}_x \Vert $}, which are neither too confident nor too unconfident.\\
\textbf{(C. 7)}
Compared to Theorems~\ref{theorem2} and~\ref{theorem3}, the normalized perturbation {\small$\hat\delta^{\text{(norm)}}$} in Eq.~\eqref{main_eqn:solution_adv_per_norm} alleviates the imbalance between different samples, which proves the benefits of $\ell_2$ attacks and $\ell_\infty$ attacks.

\textbf{Oscillation of network parameters.}
Above proofs can explain that adversarial training makes network parameters oscillate in very few directions, which is considered as a common phenomenon in adversarial training.
Such an explanation is based on a typical claim in optimization~\cite{cohen2021gradient,wu2018sgd} that if the largest eigenvalue of the Hessian matrix of the loss~\textit{w.r.t} network parameters is large enough, network parameters will oscillate along the eigenvector corresponding to the largest eigenvalue.

Here, although we do not directly prove that adversarial training can boost the largest eigenvalue of the Hessian matrix of the loss~\textit{w.r.t} network parameters,
Theorems~\ref{theorem:adv_per} and~\ref{theorem:adv_per_2} show that training on adversarial examples is somewhat equivalent to
boosting the influence of the Hessian matrix.
Specifically, given a ReLU network $f$ and an adversarial example $x+\hat\delta$ for adversarial training,
let us temporarily consider the Hessian matrix {\small$H_{h}\overset{\text{def}}{=}\frac{\partial^{2} }{\partial h \partial h^{T}} L(f(x),y)$}~\textit{w.r.t} the output $h$ of the  $j$-th linear layer.
Then, the loss function on adversarial examples {\small$L(f(x+\hat\delta),y)$} can be represented as follows.

\begin{theorem}
\label{theorem:oscillation}
Let {\small$\Delta h = W^{T} \hat\delta \in \mathbb{R}^{1\times D}$} denote the change of the intermediate-layer feature $h$ caused by the perturbation {\small$\hat\delta$}, and
{\small$\text{Loss}(h+\Delta h)=L(f(x+\hat\delta),y)$} represents the loss function on the adversarial example {\small$x+\hat\delta$}.
Then, we use the second-order Taylor expansion to decompose the loss,~\textit{i.e.,}
{\small$\text{Loss}(h+\Delta h)=\text{Loss}(h)+ g_h^{T} \Delta h + \frac{1}{2!}\Delta h ^{T} H_h  \Delta h+R_3(\Delta h) = \text{Loss}(h) + g_h^{T} (W^{T} \hat\delta) + \frac{1}{2!} (W^{T} \hat\delta) ^{T} H_h  (W^{T} \hat\delta)+R_3(\Delta h)$}, where {\small$R_3(\Delta h)$} indicates terms no less than the third order.
In this way, if we focus on the $i$-th dimension of {\small$\hat\delta$}, {\small$\hat\delta_i \in \mathbb{R}$}, the loss can be re-written as follows,
where {\small$W_{i}$} denotes a row vector corresponding to the $i$-th row of the weight matrix $W$, and $\tau$ is a constant~\textit{w.r.t} the change of {\small$W_{i}$}.
\vspace{-5pt}

\begin{small}
\begin{equation}
\label{eqn:oscillation}
\text{Loss}(h+\Delta h)= \tau+ [\hat\delta_i \,g_{h,i}^{T}] W_{i}^{T} +  W_{i} [\frac{1}{2!}\hat\delta_{i}^{2} H_h] W_{i}^{T}.
\end{equation}\end{small}
\end{theorem}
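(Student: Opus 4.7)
The plan is to start from the second-order Taylor expansion already written in the statement and reorganize it as a polynomial in the free variable $W_i$, with all other rows of $W$ treated as fixed constants. First I would substitute $\Delta h = W^T \hat\delta$ into the expansion and rewrite the two low-order terms as explicit row sums: the linear piece becomes $g_h^T W^T \hat\delta = \sum_k \hat\delta_k W_k g_h$, and the quadratic piece becomes $\tfrac{1}{2}(W^T\hat\delta)^T H_h (W^T\hat\delta) = \tfrac{1}{2}\sum_{k,l}\hat\delta_k\hat\delta_l W_k H_h W_l^T$, using only the identity $(W^T\hat\delta)_k = \sum_i W_{ik}\hat\delta_i = $ the $k$-th coordinate read off from the row expansion of $W$.

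Next I would partition each sum by whether $W_i$ appears. From the linear sum, only the $k = i$ term $\hat\delta_i W_i g_h$ depends on $W_i$, and all other terms $\sum_{k\neq i}\hat\delta_k W_k g_h$ are constants. From the quadratic sum, the diagonal $k=l=i$ contribution $\hat\delta_i^2 W_i H_h W_i^T$ is the unique quadratic-in-$W_i$ piece; the off-diagonal contributions $\hat\delta_i\hat\delta_k W_i H_h W_k^T$ together with their $H_h$-symmetric counterparts are linear in $W_i$ with the coefficient $\sum_{k\neq i}\hat\delta_k H_h W_k^T$; and $\sum_{k,l \neq i}\hat\delta_k\hat\delta_l W_k H_h W_l^T$ is constant. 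Absorbing everything independent of $W_i$ (including the baseline $\mathrm{Loss}(h)$) into the single constant $\tau$, gathering the linear-in-$W_i$ pieces into the coefficient $\hat\delta_i g_{h,i}^T$ where $g_{h,i} = g_h + \sum_{k\neq i}\hat\delta_k H_h W_k^T$ plays the role of an effective gradient that depends on the row index $i$ through the fixed rows $\{W_k\}_{k\neq i}$, and keeping the quadratic piece as $W_i[\tfrac{1}{2}\hat\delta_i^2 H_h]W_i^T$ reproduces Eq.~\eqref{eqn:oscillation} exactly.

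The main subtlety, and essentially the only non-trivial step, is the treatment of the Taylor remainder $R_3(\Delta h)$. Since $\Delta h = W^T\hat\delta$ is linear in every row of $W$, a cubic (or higher) term in $\Delta h$ generally produces terms of order $\ge 3$ in the entries of $W_i$, which do not fit the pure quadratic-in-$W_i$ normal form of Eq.~\eqref{eqn:oscillation}. The natural resolution is to view $R_3$ as an $O(\|\hat\delta\|^3)$ correction consistent with the second-order approximation that the theorem explicitly invokes, and to either drop it or fold any residual $W_i$-independent pieces into $\tau$ and discard the rest as higher-order noise dominated by the leading quadratic-in-$W_i$ behavior, which is what drives the subsequent oscillation argument via the spectrum of $H_h$. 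Modulo this one approximation step, the rest of the proof is the routine bookkeeping of expanding a matrix-vector bilinear form row by row and exploiting the symmetry of $H_h$.
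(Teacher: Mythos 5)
Your proof is correct and follows essentially the same route as the paper, which does not give a separate appendix proof but embeds the derivation in the theorem statement itself: Taylor-expand, substitute $\Delta h = W^{T}\hat\delta$, and regroup the resulting expression by its dependence on the row $W_{i}$. If anything you are more careful than the paper, both in making explicit that the off-diagonal contributions $\hat\delta_i\hat\delta_k W_i H_h W_k^{T}$ ($k\neq i$) must be folded into the linear coefficient $g_{h,i}$ rather than into $\tau$, and in acknowledging that the remainder $R_3(\Delta h)$ is not genuinely constant in $W_i$ and is being discarded as a higher-order approximation consistent with the second-order expansion.
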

\textbf{(C. 8)}
Theorem~\ref{theorem:oscillation} reveals that adversarial training can be roughly considered to boost the influence of the Hessian matrix~\textit{w.r.t.} network parameters {\small$W_{i}$},~\textit{i.e.,} proportional to {\small$ \hat\delta_{i}^{2} H_h$},
because the perturbation $\hat\delta$ increases exponentially along with the step number, according to Theorems~\ref{theorem:adv_per} and~\ref{theorem:adv_per_2}.
In this way, adversarial training is more likely to make network parameters oscillate than vanilla training.

$\bullet\quad$\textit{Experimental verification of Theorem~\ref{theorem:oscillation}}.
For verification, we conducted experiments to check whether adversarial training boosted the influence of Hessian matrix~\textit{w.r.t.} the network parameters.
Specifically, we learned AlexNet and VGG-11 on the MNIST dataset, and measured effects of adversarial examples on the optimization of network parameters.
To this end, we used an original input sample $x$ and its corresponding adversarial example {\small$x+\delta$} to update the weight {\small$W \in \mathbb{R}^{D\times D}$} in each layer by the length
{\small$\Vert \Delta W\Vert$} and {\small$\Vert \Delta W^{(\text{adv})} \Vert$}, respectively.
Thus, vanilla training's influence and adversarial training's influence of such weight changes on the gradient could be estimated as
{\small$\Delta^{(\text{ori})} =\frac{1}{ D \Vert \Delta W\Vert} \cdot \Vert \frac{\partial L(f(x|W+\Delta W),y)}{\partial (W+\Delta W)} -   \frac{\partial L(f(x|W),y)}{\partial W}\Vert $}, and
{\small$\Delta^{(\text{adv})} = \frac{1} {D \Vert \Delta W^{(\text{adv})} \Vert} \cdot \Vert \frac{\partial L(f(x+\delta|W+\Delta W^{(\text{adv})}),y)}{\partial (W+\Delta W^{(\text{adv})})} -   \frac{\partial L(f(x+\delta|W),y)}{\partial W}\Vert $}, respectively. 
Fig.~\ref{oscillation} compares the influence of weight changes on gradients~\textit{w.r.t.} network parameters.
We discovered that compared to vanilla training, the weight change with a fixed strength in adversarial training usually affected the gradient much more significantly.
Such phenomenon demonstrated that adversarial training boosted the influence of Hessian matrix~\textit{w.r.t.} the network parameters, which verified Theorem~\ref{theorem:oscillation}.
Please see supplementary materials for more details.

\begin{figure}[t]
	\centering
	\includegraphics[width=0.9\linewidth]{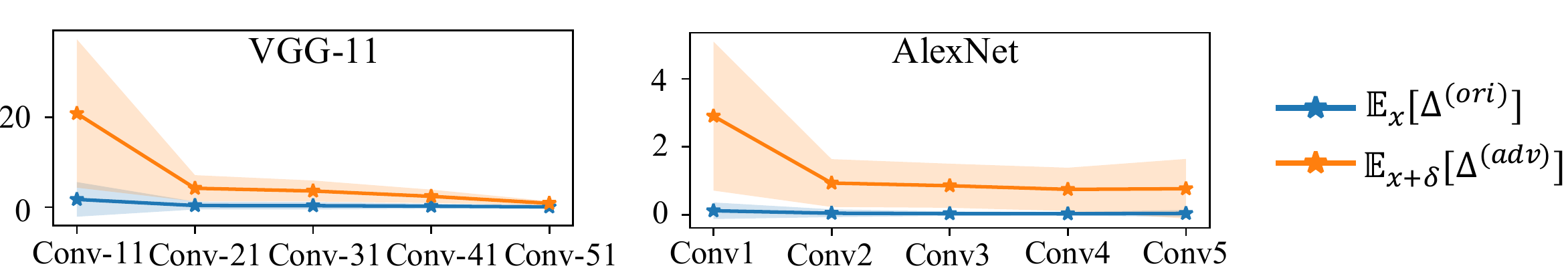}
	\vspace{-8pt}
	\caption{Influence of weight changes on gradients of the loss function~\textit{w.r.t.} network parameters (weights). The weight change in adversarial training makes more significant impacts {\small$\Delta^{(\text{adv})}$} on gradients than that in vanilla training {\small$\Delta^{(\text{ori})}$}, which verifies Theorem~\ref{theorem:oscillation}.}
	\label{oscillation}
	\vspace{-12pt}
\end{figure}

\section{A unified analysis of previous findings in adversarial training}
\label{sec:4}
In this section, we use our proofs to theoretically explain previous findings in adversarial training.

\textbullet\; Many previous studies~\cite{liu2020loss,kanai2021smoothness,wu2020adversarial,yamada2021adversarial} considered that the difficulty of adversarial training was caused by the sharp loss landscape
~\textit{w.r.t} network parameters.
To this end, Theorem~\ref{theorem:oscillation} verifies such an explanation.
Specifically, we have proven that adversarial training can be considered to strengthen the influence of the Hessian matrix of the loss~\textit{w.r.t.} network parameters,
which is equivalent to sharpening the loss landscape.

\textbullet\;\citet{athalye2018obfuscated} discovered that obfuscated gradients led to a false sense of security in defenses against adversarial examples, which hindered adversarial training~\cite{NEURIPS2019_d8700cbd}.
To this end, Theorem~\ref{theorem2} and Theorem~\ref{theorem3} explain the third type of obfuscated gradients in~\cite{athalye2018obfuscated},~\textit{i.e.,} vanishing gradients.
Specifically, we have proven that adversarial training significantly strengthens the influence of a few unconfident samples, and neglects the influence of many confident samples,
which makes the training process more likely to oscillate in directions of a few unconfident samples.
Such oscillation along optimization directions of a few hard samples usually significantly increases norms of weights along such directions, and causes over-confident predictions on some easy samples.
These over-confident predictions on easy samples may lead to vanishing gradients.

\textbullet\;\citet{tsipras2018robustness} clarified that compared to vanilla training, adversarial training mainly relied on robust features and did not use non-robust features for inference, which caused inferior classification performance.
To this end, Theorems~\ref{theorem2} and~\ref{theorem3} verify such a finding.
Specifically, we have proven that adversarial training is mainly dominated by a few samples, which easily makes network parameters oscillate in very few directions.
In other words, the training of non-robust features, or more precisely, training on samples with significant {\small $\hat{\mathcal{A}}$} values that are easily attacked,
is hard to converge.

\textbullet\;\citet{ilyas2019adversarial} demonstrated that adversarial examples were attributed to the presence of highly predictive but non-robust features.
To this end, Theorems~\ref{theorem:adv_per} and~\ref{theorem:adv_per_2} verify such a finding, which reveals that
in the multi-category classification, the direction of the largest eigenvalue of the Hessian matrix {\small$H_x$} suppresses features related to the target category, and promotes features related to the second-best category.
Here, the eigenvector~\textit{w.r.t.} the largest eigenvalue corresponds to non-robust features.

\textbullet\;\citet{liu2021impact} considered that the robust overfitting was caused by the fitting of hard samples, under the assumption that all training samples followed a Gaussian mixture distribution in a logistic regression problem.
To this end, Theorem~\ref{theorem2} and Theorem~\ref{theorem3} explain such a finding in a more generic classification task without assuming the data distribution.
Specifically, we have proven that compared to vanilla training, the adversarially trained network is more likely to be over-fitted to a few unconfident samples, which correspond to hard samples in adversarial training.

\textbullet\;\citet{chen2020robust} discovered that the overfitting in adversarial training was because the network overfitted to adversarial examples generated in the early stage of adversarial training, and failed to generalize to adversarial examples generated in the late stage.
To this end, we provide a deeper insight into such a phenomenon.
Specifically, according to Theorem~\ref{theorem2} and Theorem~\ref{theorem3}, only a few unconfident samples with large {\small$H_z$} and large gradient norms {\small$\Vert\tilde{g}_x\Vert$} influence the adversarial training.
In fact, the imbalance of the sample influence can easily make unconfident samples with large {\small$H_z$} values and large gradient norms {\small$\Vert\tilde{g}_x\Vert$} in the early stage of adversarial training significantly different from those in the late stage.
Such mechanisms lead to the overfitting in adversarial training.

\textbullet\;\citet{rice2020overfitting} demonstrated that early stopping could effectively reduce overfitting in adversarial training.
To this end, Theorem~\ref{theorem2} and Theorem~\ref{theorem3} also explain the effectiveness of the early stopping.
Specifically, during adversarial training process, the network becomes robust, and the number of unconfident samples decreases.
Because adversarially trained networks mainly focus on unconfident samples, the decreasing number of unconfident samples boosts the significance of overfitting.
In this way, early stopping can effectively reduce overfitting.


\section{Conclusion and discussion}
\label{sec:conclusion}
This paper theoretically analyzes the dynamics of adversarial perturbations via an analytic solution.
We also prove that adversarial training strengthens the influence of a few input samples, which boosts the difficulty of adversarial training.
Crucially, our proofs provide a theoretical explanation for previous studies in understanding adversarial training.
Note that our analysis is all based on the assumption that adversarial perturbations cannot significantly change the gating states of the ReLU network.
Despite this, experimental results show that our analysis can still explain most adversarial perturbations generated in real cases, when gating states change.
Besides, in this paper, we use the normalized perturbations to approximate adversarial perturbations of the $\ell_2$ attack and the $\ell_\infty$ attack, instead of deriving an exact formulation for these perturbations.
Nevertheless, experimental results show that our analysis can well explain the $\ell_2$ attack and the $\ell_\infty$ attack, to some extent.



\bibliographystyle{plainnat}
\bibliography{adv_train}

\begin{thebibliography}{34}
\providecommand{\natexlab}[1]{#1}
\providecommand{\url}[1]{\texttt{#1}}
\expandafter\ifx\csname urlstyle\endcsname\relax
  \providecommand{\doi}[1]{doi: #1}\else
  \providecommand{\doi}{doi: \begingroup \urlstyle{rm}\Url}\fi

\bibitem[Athalye et~al.(2018)Athalye, Carlini, and
  Wagner]{athalye2018obfuscated}
Anish Athalye, Nicholas Carlini, and David Wagner.
\newblock Obfuscated gradients give a false sense of security: Circumventing
  defenses to adversarial examples.
\newblock In \emph{International conference on machine learning}, pages
  274--283. PMLR, 2018.

\bibitem[Chen et~al.(2020)Chen, Zhang, Liu, Chang, and Wang]{chen2020robust}
Tianlong Chen, Zhenyu Zhang, Sijia Liu, Shiyu Chang, and Zhangyang Wang.
\newblock Robust overfitting may be mitigated by properly learned smoothening.
\newblock In \emph{International Conference on Learning Representations}, 2020.

\bibitem[Cohen et~al.(2021)Cohen, Kaur, Li, Kolter, and
  Talwalkar]{cohen2021gradient}
Jeremy Cohen, Simran Kaur, Yuanzhi Li, J~Zico Kolter, and Ameet Talwalkar.
\newblock Gradient descent on neural networks typically occurs at the edge of
  stability.
\newblock In \emph{International Conference on Learning Representations}, 2021.
\newblock URL \url{https://openreview.net/forum?id=jh-rTtvkGeM}.

\bibitem[Dong et~al.(2018)Dong, Liao, Pang, Su, Zhu, Hu, and
  Li]{dong2018boosting}
Yinpeng Dong, Fangzhou Liao, Tianyu Pang, Hang Su, Jun Zhu, Xiaolin Hu, and
  Jianguo Li.
\newblock Boosting adversarial attacks with momentum.
\newblock In \emph{Proceedings of the IEEE conference on computer vision and
  pattern recognition}, pages 9185--9193, 2018.

\bibitem[Goodfellow et~al.(2014)Goodfellow, Shlens, and
  Szegedy]{goodfellow2014explaining}
Ian~J Goodfellow, Jonathon Shlens, and Christian Szegedy.
\newblock Explaining and harnessing adversarial examples.
\newblock \emph{arXiv preprint arXiv:1412.6572}, 2014.

\bibitem[He et~al.(2016)He, Zhang, Ren, and Sun]{he2016deep}
Kaiming He, Xiangyu Zhang, Shaoqing Ren, and Jian Sun.
\newblock Deep residual learning for image recognition.
\newblock In \emph{Proceedings of the IEEE conference on computer vision and
  pattern recognition}, pages 770--778, 2016.

\bibitem[Ilyas et~al.(2019)Ilyas, Santurkar, Tsipras, Engstrom, Tran, and
  Madry]{ilyas2019adversarial}
Andrew Ilyas, Shibani Santurkar, Dimitris Tsipras, Logan Engstrom, Brandon
  Tran, and Aleksander Madry.
\newblock Adversarial examples are not bugs, they are features.
\newblock \emph{Advances in neural information processing systems}, 32, 2019.

\bibitem[Kanai et~al.(2021)Kanai, Yamada, Takahashi, Yamanaka, and
  Ida]{kanai2021smoothness}
Sekitoshi Kanai, Masanori Yamada, Hiroshi Takahashi, Yuki Yamanaka, and
  Yasutoshi Ida.
\newblock Smoothness analysis of adversarial training.
\newblock \emph{arXiv preprint arXiv:2103.01400}, 2021.

\bibitem[Krizhevsky et~al.(2012)Krizhevsky, Sutskever, and
  Hinton]{krizhevsky2012imagenet}
Alex Krizhevsky, Ilya Sutskever, and Geoffrey~E Hinton.
\newblock Imagenet classification with deep convolutional neural networks.
\newblock \emph{Advances in neural information processing systems}, 25, 2012.

\bibitem[Kurakin et~al.(2016)Kurakin, Goodfellow, and
  Bengio]{kurakin2016adversarial}
Alexey Kurakin, Ian Goodfellow, and Samy Bengio.
\newblock Adversarial machine learning at scale.
\newblock \emph{arXiv preprint arXiv:1611.01236}, 2016.

\bibitem[LeCun et~al.(1998)LeCun, Bottou, Bengio, and
  Haffner]{lecun1998gradient}
Yann LeCun, L{\'e}on Bottou, Yoshua Bengio, and Patrick Haffner.
\newblock Gradient-based learning applied to document recognition.
\newblock \emph{Proceedings of the IEEE}, 86\penalty0 (11):\penalty0
  2278--2324, 1998.

\bibitem[LeCun et~al.(2015)LeCun, Bengio, and Hinton]{lecun2015deep}
Yann LeCun, Yoshua Bengio, and Geoffrey Hinton.
\newblock Deep learning.
\newblock \emph{nature}, 521\penalty0 (7553):\penalty0 436--444, 2015.

\bibitem[Liu et~al.(2020)Liu, Salzmann, Lin, Tomioka, and
  S{\"u}sstrunk]{liu2020loss}
Chen Liu, Mathieu Salzmann, Tao Lin, Ryota Tomioka, and Sabine S{\"u}sstrunk.
\newblock On the loss landscape of adversarial training: Identifying challenges
  and how to overcome them.
\newblock \emph{Advances in Neural Information Processing Systems},
  33:\penalty0 21476--21487, 2020.

\bibitem[Liu et~al.(2021)Liu, Huang, Salzmann, Zhang, and
  S{\"u}sstrunk]{liu2021impact}
Chen Liu, Zhichao Huang, Mathieu Salzmann, Tong Zhang, and Sabine
  S{\"u}sstrunk.
\newblock On the impact of hard adversarial instances on overfitting in
  adversarial training.
\newblock \emph{arXiv preprint arXiv:2112.07324}, 2021.

\bibitem[Madry et~al.(2018)Madry, Makelov, Schmidt, Tsipras, and
  Vladu]{madry2018towards}
Aleksander Madry, Aleksandar Makelov, Ludwig Schmidt, Dimitris Tsipras, and
  Adrian Vladu.
\newblock Towards deep learning models resistant to adversarial attacks.
\newblock In \emph{International Conference on Learning Representations}, 2018.
\newblock URL \url{https://openreview.net/forum?id=rJzIBfZAb}.

\bibitem[Miyato et~al.(2018)Miyato, Maeda, Koyama, and
  Ishii]{miyato2018virtual}
Takeru Miyato, Shin-ichi Maeda, Masanori Koyama, and Shin Ishii.
\newblock Virtual adversarial training: a regularization method for supervised
  and semi-supervised learning.
\newblock \emph{IEEE transactions on pattern analysis and machine
  intelligence}, 41\penalty0 (8):\penalty0 1979--1993, 2018.

\bibitem[Papernot et~al.(2017)Papernot, McDaniel, Goodfellow, Jha, Celik, and
  Swami]{papernot2017practical}
Nicolas Papernot, Patrick McDaniel, Ian Goodfellow, Somesh Jha, Z~Berkay Celik,
  and Ananthram Swami.
\newblock Practical black-box attacks against machine learning.
\newblock In \emph{Proceedings of the 2017 ACM on Asia conference on computer
  and communications security}, pages 506--519, 2017.

\bibitem[QIAN et~al.(2022)QIAN, Zhang, Huang, Wang, Gu, Xiong, and
  Yi]{qian2022perturbation}
Zhuang QIAN, Shufei Zhang, Kaizhu Huang, Qiufeng Wang, Bin Gu, Huan Xiong, and
  Xinping Yi.
\newblock Perturbation diversity certificates robust generalisation, 2022.
\newblock URL \url{https://openreview.net/forum?id=jm1RxJFQdDN}.

\bibitem[Ren et~al.(2022)Ren, Li, Zhou, Chan, and Zhang]{ren2022towards}
Jie Ren, Mingjie Li, Meng Zhou, Shih-Han Chan, and Quanshi Zhang.
\newblock Towards theoretical analysis of transformation complexity of relu
  dnns.
\newblock \emph{arXiv preprint arXiv:2205.01940}, 2022.

\bibitem[Rice et~al.(2020)Rice, Wong, and Kolter]{rice2020overfitting}
Leslie Rice, Eric Wong, and Zico Kolter.
\newblock Overfitting in adversarially robust deep learning.
\newblock In \emph{International Conference on Machine Learning}, pages
  8093--8104. PMLR, 2020.

\bibitem[Schmidt et~al.(2018)Schmidt, Santurkar, Tsipras, Talwar, and
  Madry]{schmidt2018adversarially}
Ludwig Schmidt, Shibani Santurkar, Dimitris Tsipras, Kunal Talwar, and
  Aleksander Madry.
\newblock Adversarially robust generalization requires more data.
\newblock \emph{Advances in neural information processing systems}, 31, 2018.

\bibitem[Simonyan and Zisserman(2014)]{simonyan2014very}
Karen Simonyan and Andrew Zisserman.
\newblock Very deep convolutional networks for large-scale image recognition.
\newblock \emph{arXiv preprint arXiv:1409.1556}, 2014.

\bibitem[Sinha et~al.(2017)Sinha, Namkoong, Volpi, and
  Duchi]{sinha2017certifying}
Aman Sinha, Hongseok Namkoong, Riccardo Volpi, and John Duchi.
\newblock Certifying some distributional robustness with principled adversarial
  training.
\newblock \emph{arXiv preprint arXiv:1710.10571}, 2017.

\bibitem[Stutz et~al.(2020)Stutz, Hein, and Schiele]{stutz2020confidence}
David Stutz, Matthias Hein, and Bernt Schiele.
\newblock Confidence-calibrated adversarial training: Generalizing to unseen
  attacks.
\newblock In \emph{International Conference on Machine Learning}, pages
  9155--9166. PMLR, 2020.

\bibitem[Tramèr et~al.(2018)Tramèr, Kurakin, Papernot, Goodfellow, Boneh, and
  McDaniel]{tramer2018ensemble}
Florian Tramèr, Alexey Kurakin, Nicolas Papernot, Ian Goodfellow, Dan Boneh,
  and Patrick McDaniel.
\newblock Ensemble adversarial training: Attacks and defenses.
\newblock In \emph{International Conference on Learning Representations}, 2018.
\newblock URL \url{https://openreview.net/forum?id=rkZvSe-RZ}.

\bibitem[Tsipras et~al.(2019)Tsipras, Santurkar, Engstrom, Turner, and
  Madry]{tsipras2018robustness}
Dimitris Tsipras, Shibani Santurkar, Logan Engstrom, Alexander Turner, and
  Aleksander Madry.
\newblock Robustness may be at odds with accuracy.
\newblock In \emph{International Conference on Learning Representations}, 2019.
\newblock URL \url{https://openreview.net/forum?id=SyxAb30cY7}.

\bibitem[Wong et~al.(2020)Wong, Rice, and Kolter]{wong2020fast}
Eric Wong, Leslie Rice, and J~Zico Kolter.
\newblock Fast is better than free: Revisiting adversarial training.
\newblock \emph{arXiv preprint arXiv:2001.03994}, 2020.

\bibitem[Wu et~al.(2020)Wu, Xia, and Wang]{wu2020adversarial}
Dongxian Wu, Shu-Tao Xia, and Yisen Wang.
\newblock Adversarial weight perturbation helps robust generalization.
\newblock \emph{Advances in Neural Information Processing Systems},
  33:\penalty0 2958--2969, 2020.

\bibitem[Wu et~al.(2018)Wu, Ma, et~al.]{wu2018sgd}
Lei Wu, Chao Ma, et~al.
\newblock How sgd selects the global minima in over-parameterized learning: A
  dynamical stability perspective.
\newblock \emph{Advances in Neural Information Processing Systems}, 31, 2018.

\bibitem[Yamada et~al.(2021)Yamada, Kanai, Iwata, Takahashi, Yamanaka,
  Takahashi, and Kumagai]{yamada2021adversarial}
Masanori Yamada, Sekitoshi Kanai, Tomoharu Iwata, Tomokatsu Takahashi, Yuki
  Yamanaka, Hiroshi Takahashi, and Atsutoshi Kumagai.
\newblock Adversarial training makes weight loss landscape sharper in logistic
  regression.
\newblock \emph{arXiv preprint arXiv:2102.02950}, 2021.

\bibitem[Yu et~al.(2018)Yu, Liu, Wang, Zhao, and Chen]{yu2018interpreting}
Fuxun Yu, Chenchen Liu, Yanzhi Wang, Liang Zhao, and Xiang Chen.
\newblock Interpreting adversarial robustness: A view from decision surface in
  input space.
\newblock \emph{arXiv preprint arXiv:1810.00144}, 2018.

\bibitem[Zhai et~al.(2019)Zhai, Cai, He, Dan, He, Hopcroft, and
  Wang]{zhai2019adversarially}
Runtian Zhai, Tianle Cai, Di~He, Chen Dan, Kun He, John Hopcroft, and Liwei
  Wang.
\newblock Adversarially robust generalization just requires more unlabeled
  data.
\newblock \emph{arXiv preprint arXiv:1906.00555}, 2019.

\bibitem[Zhang and Wang(2019{\natexlab{a}})]{NEURIPS2019_d8700cbd}
Haichao Zhang and Jianyu Wang.
\newblock Defense against adversarial attacks using feature scattering-based
  adversarial training.
\newblock In H.~Wallach, H.~Larochelle, A.~Beygelzimer, F.~d\textquotesingle
  Alch\'{e}-Buc, E.~Fox, and R.~Garnett, editors, \emph{Advances in Neural
  Information Processing Systems}, volume~32. Curran Associates, Inc.,
  2019{\natexlab{a}}.
\newblock URL
  \url{https://proceedings.neurips.cc/paper/2019/file/d8700cbd38cc9f30cecb34f0c195b137-Paper.pdf}.

\bibitem[Zhang and Wang(2019{\natexlab{b}})]{zhang2019defense}
Haichao Zhang and Jianyu Wang.
\newblock Defense against adversarial attacks using feature scattering-based
  adversarial training.
\newblock \emph{Advances in Neural Information Processing Systems}, 32,
  2019{\natexlab{b}}.

\end{thebibliography}

\newpage
\appendix


\section{Proof of Theorem~\ref{theorem:adv_per}}
\label{supp_sec:theorem1}
In this section, we prove Theorem~\ref{theorem:adv_per} in Section~\ref{sec:3.1} of the main paper, which analyzes the dynamics of perturbations of the $m$-step attack.

Let us focus on the most straightforward solution to the multi-step adversarial attack.
In this scenario, given a ReLU network $f$ and an input sample $x \in \mathbb{R}^{n}$, the perturbation generated after attacking for $m$ steps is formulated as follows.
\begin{equation}
\label{eqn:adv_per}
\delta^{(m)} =\sum\nolimits_{t=0}^{m-1}  \alpha \cdot  g_{x+ \delta^{(t)}},
\end{equation}
where {\small$g_{x+ \delta^{(t)}} =\frac{\partial}{\partial x} L(f(x+ \delta^{(t)}), y)$} represents the gradient of the loss~\textit{w.r.t.} the input sample $x$, and $m$ denotes the step size.
Furthermore, we define the update of the perturbation at each step $t$ as follows.
\begin{equation}
\label{eqn:adv_per_step}
\Delta {x}^{(t)} \overset{\text{def}}{=} \alpha \cdot  g_{x+ \delta^{(t-1)}},
\end{equation}
In this way, the perturbation {\small$\delta^{(m)}$} generated after the $m$-step attack can be re-written as
\begin{equation}
\label{eqn:adv_per_sum}
\delta^{(m)} = \Delta {x}^{(1)} + \Delta {x}^{(2)} + \cdots + \Delta {x}^{(m)}.
\end{equation}

\begin{lemma}[in Appendix]
\label{lemma:adv_per_step}
The update of the perturbation with the multi-step attack at step $t$ can be represented as
{\small$\Delta {x}^{(t)} =\alpha(I+\alpha H_x)^{t-1}g_x + \alpha R_2( \delta^{(t-1)})$}, where
{\small$g_x = \frac{\partial}{\partial x} L(f(x), y)$};
{\small$H_x = \frac{\partial^{2}}{\partial x \partial x^{T}} L(f(x),y)$};
{\small$\delta^{(t-1)}$} represents the perturbation generated after the {\small$(t-1)$}-step attack,
and {\small $  R_2( \delta^{(t-1)})$} denotes terms of {\small$\delta^{(t-1)}$} no less than the second order in Taylor expansion.
\end{lemma}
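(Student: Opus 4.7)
The plan is to proceed by induction on $t$, with the Taylor expansion of the gradient around $x$ as the main tool. The base case $t=1$ is immediate: since $\delta^{(0)}=0$, we have $\Delta x^{(1)} = \alpha g_x = \alpha (I+\alpha H_x)^{0} g_x$, and the residual $\alpha R_2(\delta^{(0)})$ vanishes.

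For the inductive step, I would first Taylor-expand the gradient at the perturbed input: writing $g_{x+\delta^{(t-1)}} = g_x + H_x \delta^{(t-1)} + R_2(\delta^{(t-1)})$, where $R_2(\delta^{(t-1)})$ absorbs all terms of order $\geq 2$ in $\delta^{(t-1)}$. Multiplying by $\alpha$ gives
\begin{equation}
\Delta x^{(t)} = \alpha g_x + \alpha H_x \delta^{(t-1)} + \alpha R_2(\delta^{(t-1)}).
\end{equation}
The remaining task is to show that $\alpha g_x + \alpha H_x \delta^{(t-1)} = \alpha(I+\alpha H_x)^{t-1} g_x$, up to higher-order terms that can be folded into the residual.

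To close the induction, I would substitute the inductive hypothesis $\Delta x^{(k)} = \alpha(I+\alpha H_x)^{k-1} g_x + \alpha R_2(\delta^{(k-1)})$ for $k=1,\dots,t-1$ into $\delta^{(t-1)} = \sum_{k=1}^{t-1}\Delta x^{(k)}$. The leading part becomes the geometric sum $\alpha \sum_{k=0}^{t-2}(I+\alpha H_x)^{k} g_x$, which telescopes via the identity $(I+\alpha H_x)\sum_{k=0}^{t-2}(I+\alpha H_x)^{k} - \sum_{k=0}^{t-2}(I+\alpha H_x)^{k} = (I+\alpha H_x)^{t-1} - I$, giving $\alpha H_x \delta^{(t-1)} = [(I+\alpha H_x)^{t-1} - I] g_x$ modulo residuals. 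Adding $\alpha g_x$ and multiplying by $\alpha$ yields the desired form $\alpha(I+\alpha H_x)^{t-1} g_x$.

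The main obstacle will be bookkeeping the residual terms. Strictly speaking, substituting the inductive expression for $\delta^{(t-1)}$ into $H_x\delta^{(t-1)}$ and into $R_2(\delta^{(t-1)})$ produces a cascade of higher-order contributions coming from the accumulated $\alpha R_2$ pieces of earlier steps. I would handle this by noting that every such contribution is at least quadratic in the total perturbation $\delta^{(t-1)}$ (equivalently, of order $\alpha^{2}$ or higher once expanded in $\alpha$), so it can legitimately be absorbed into a single residual $\alpha R_2(\delta^{(t-1)})$ without affecting the leading-order expression. This is the same convention already used implicitly in the statements of Theorems~\ref{theorem:adv_per} and~\ref{theorem:adv_per_2}, where $\mathcal{R}_2$ collects all second-and-higher-order Taylor terms, so the accounting is consistent with the rest of the paper.
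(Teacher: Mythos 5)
Your proposal is correct and matches the paper's proof in essence: both proceed by induction on $t$ with the first-order Taylor expansion $g_{x+\delta^{(t-1)}}=g_x+H_x\delta^{(t-1)}+R_2(\delta^{(t-1)})$ as the key step, the only organizational difference being that the paper closes the induction via the one-step recurrence $\Delta x^{(t)}=(I+\alpha H_x)\Delta x^{(t-1)}+\alpha R_2(\Delta x^{(t-1)})$, whereas you sum the geometric series for $\delta^{(t-1)}$ and telescope, which is equivalent. The only blemish is a factor-of-$\alpha$ bookkeeping slip in your penultimate line: with $\delta^{(t-1)}=\alpha\sum_{k=0}^{t-2}(I+\alpha H_x)^{k}g_x$ (modulo residuals), the telescoping identity gives $H_x\delta^{(t-1)}=[(I+\alpha H_x)^{t-1}-I]g_x$ with no leading $\alpha$, so $\alpha g_x+\alpha H_x\delta^{(t-1)}=\alpha(I+\alpha H_x)^{t-1}g_x$ already has the desired form and no further multiplication by $\alpha$ is needed.
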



\begin{proof}
If the step {\small$t=1$}, according to Eq.~\eqref{eqn:adv_per_step}, we have {\small$\Delta {x}^{(1)}=\alpha g_x$}.

For $\forall t>1$, the perturbation of the $t$-th step attack is defined as {\small$\Delta {x}^{(t)} = \alpha \cdot  g_{x+ \delta^{(t-1)}}$} in Eq.~\eqref{eqn:adv_per_step}.
In order to simplify the perturbation {\small$\Delta {x}^{(t)}$}, we use the Taylor expansion to decompose the gradient {\small$g_{x+ \delta^{(t-1)}}$} of the loss~\textit{w.r.t.} the adversarial example {\small$x+ \delta^{(t-1)}$}.
Here, {\small$R_2( \delta^{(t-1)})$} denotes the term of the perturbation $\delta^{(t-1)}$ no less than the second order.
\begin{align}
\label{eqn:adv_per_step_taylor}
g_{x+ \delta^{(t-1)}} = g_x + H_x \delta^{(t-1)} + R_2( \delta^{(t-1)}).
\end{align}

Substituting Eq.~\eqref{eqn:adv_per_step_taylor} back to Eq.~\eqref{eqn:adv_per_step}, the perturbation {\small$\Delta {x}^{(t)}$} can be re-written as
\begin{equation}
\begin{aligned}
\label{eqn:adv_per_step_taylor_new}
\Delta {x}^{(t)} &= \alpha \cdot  g_{x+ \delta^{(t-1)}} \\
& = \alpha \cdot g_x + \alpha \cdot  H_x \delta^{(t-1)} + \alpha \cdot  R_2( \delta^{(t-1)}).
\end{aligned}
\end{equation}

In this way, we apply the mathematical induction to prove {\small$\forall 1\le t\le m, \Delta {x}^{(t)}=\alpha \cdot (I+\alpha H_x)^{t-1}g_x + \alpha \cdot  R_2( \delta^{(t-1)})$} of Lemma~\ref{lemma:adv_per_step} in Appendix.

\emph{Base case:} When $t=1$, we have {\small$\Delta {x}^{(1)}=\alpha \cdot  g_x=\alpha \cdot  (I+\alpha H_x)^{0}g_x$}.

\emph{Inductive step:}
For $t>1$, assuming {\small$\Delta {x}^{(t-1)}=\alpha \cdot  (I+\alpha H_x)^{t-2}g_x + \alpha \cdot  R_2( \delta^{(t-2)})$}, we have
\begin{equation}
\label{eqn:delta_final}
\begin{aligned}
\Delta {x}^{(t)}&=\alpha  \cdot \bigg(g_x+H_x\,\sum\nolimits_{i=1}^{t-1}\Delta {x}^{(i)} + R_2( \delta^{(t-1)})\bigg)
\quad // \quad \text{According to Eq.~\eqref{eqn:adv_per_step_taylor_new}}
\\
&=\alpha   \cdot \bigg[g_x+H_x\,\sum\nolimits_{i=1}^{t-2}\Delta {x}^{(i)} + R_2( \delta^{(t-2)})\bigg]
+ \alpha \cdot H_x \Delta {x}^{(t-1)} + \alpha \cdot R_2( \Delta {x}^{(t-1)})
\\
&=\Delta {x}^{(t-1)}+ \alpha \cdot H_x \Delta {x}^{(t-1)} + \alpha \cdot R_2( \Delta {x}^{(t-1)})
\quad // \quad \text{According to Eq.~\eqref{eqn:adv_per_step_taylor_new}}
\\
&=(I+\alpha \cdot H_x)\Delta {x}^{(t-1)} + \alpha \cdot R_2( \Delta {x}^{(t-1)})
\\
&=(I+\alpha \cdot H_x)\, \alpha \cdot \bigg[(I+\alpha H_x)^{(t-2)}g_x + R_2( \delta^{(t-2)})  \bigg]
+ \alpha \cdot R_2( \Delta {x}^{(t-1)})
\\
&=\alpha  \cdot (I+\alpha H_x)^{t-1}g_x + \alpha  \cdot  R_2( \delta^{(t-1)}),
\end{aligned}
\end{equation}
where {\small$R_2( \Delta {x}^{(t-1)})$} is referred to as the term of the perturbation {\small$\Delta {x}^{(t-1)}$} no less than the second order.

\emph{Conclusion:} Since both the base case and the inductive step have been proven to be true, we have {\small$\Delta {x}^{(t)}=\alpha(I+\alpha H_x)^{t-1}g_x + \alpha R_2( \delta^{(t-1)})$}.

Thus, Lemma~\ref{lemma:adv_per_step} in Appendix is proven.
\end{proof}

\textbf{Theorem 1.}
\textit{(Dynamics of perturbations of the m-step attack).
Based on Assumption 1,
the adversarial perturbation {\small$\delta^{(m)}$} is given as follows,
where {\small$\lambda_{i}$} and {\small$v_{i}$} denote the $i$-th largest eigenvalue of the Hessian matrix {\small$H_x= \frac{\partial^{2}}{\partial x \partial x^{T}} L(f(x),y)$} and its corresponding eigenvector, respectively.
{\small$\gamma_i = g_x^{T} v_{i} \in \mathbb{R}$} represents the projection of the gradient {\small$g_x=\frac{\partial}{\partial x} L(f(x),y)$} on the eigenvector {\small$v_{i}$}.
{\small$\mathcal{R}_2(\alpha)$} denote the sum of terms no less than the second order in Taylor expansion~\textit{w.r.t.} {\small$\delta^{(m)}$}.}

\begin{small}
\begin{equation}
\label{eqn:solution_adv_per}
\delta^{(m)} =  \sum_{i=1}^{n}  \frac{(1+\alpha \lambda_{i} ) ^{m}-1}{\lambda_{i}} \gamma_{i} v_{i} + \mathcal{R}_2(\alpha),\quad
g_{x+\delta^{(m)}} = \sum_{i=1}^{n} (1+\alpha \lambda_{i} ) ^{m} \gamma_{i} v_{i}.
\end{equation}\end{small}

\begin{proof}
According to Eq.~\eqref{eqn:adv_per_sum} and Lemma~\ref{lemma:adv_per_step} in Appendix, the perturbation {\small$\delta^{(m)}$} generated after the $m$-step attack can be re-written as
\begin{equation}
\label{supp_eqn:perturbation_m_2}
\begin{aligned}
\delta^{(m)}&=\Delta {x}^{(1)} + \Delta {x}^{(2)} + \cdots + \Delta {x}^{(m)}
\\
\\
&=\alpha [I+(I+\alpha H_x)+\cdots +(I+\alpha H_x)^{m-1}]g_x
\\
&\quad + \alpha R_2(\delta^{(1)}) + \alpha R_2(\delta^{(2)})+\cdots + \alpha R_2(\delta^{(m-1)})
\\
\\
&= \alpha [I+(I+\alpha H_x)+\cdots +(I+\alpha H_x)^{m-1}]g_x + \mathcal{R}_2(\alpha).
\end{aligned}
\end{equation}
Here, considering that {\small$\forall1\leq t \leq m, R_2(\delta^{(t)})$} is proportional to {\small$\alpha^{2}$}, the sum {\small$\alpha R_2(\delta^{(1)}) + \alpha R_2(\delta^{(2)})+\cdots + \alpha R_2(\delta^{(m-1)})$} is proportional to {\small$\alpha^{2}$}.
In this way, for simplicity, we use {\small$\mathcal{R}_2(\alpha) \propto \alpha^{2}$} to denote the sum of terms no less than the second order in Taylor expansion,~\textit{i.e.}, {\small$\mathcal{R}_2(\alpha) = \alpha R_2(\delta^{(1)}) + \alpha R_2(\delta^{(2)})+\cdots + \alpha R_2(\delta^{(m-1)})$} .
Moreover, the term {\small$\alpha [I+(I+\alpha H_x)+\cdots +(I+\alpha H_x)^{m-1}]$} is roughly considered to be proportional to $\alpha$.
Since the step size $\alpha$ is small enough, we can ignore the term {\small$\mathcal{R}_2(\alpha)$}, and the perturbation $\delta^{(m)}$ in Eq.~\eqref{supp_eqn:perturbation_m_2} can be roughly approximated as
\begin{equation}
\label{supp_eqn:perturbation_m_approx}
\begin{aligned}
\delta^{(m)}
&=\alpha [I+(I+\alpha H_x)+\cdots +(I+\alpha H_x)^{m-1}]g_x + \mathcal{R}_2(\alpha)
\\
&\approx \alpha [I+(I+\alpha H_x)+\cdots +(I+\alpha H_x)^{m-1}]g_x.
\end{aligned}
\end{equation}

Because the Hessian matrix {\small$H_x$} is a real-valued symmetric matrix, we can use the eigenvalue decomposition to decompose {\small$H_x$} as {\small$H_x=V\Lambda V^{-1}$}.
Here, {\small$\Lambda=diag[\lambda_1,\lambda_2,\cdots,\lambda_p]$} is a diagonal matrix, whose diagonal elements are the corresponding eigenvalues, {\small$\Lambda_{ii}=\lambda_i$}.
The square matrix $V=[v_1, v_2, \cdots, v_n] \in \mathbb{R}^{n\times n}$ contains $n$ linearly independent eigenvectors {\small$v_i$},~\textit{i.e.,} {\small$\forall i\neq k, v_i^Tv_k=0$},
where {\small$v_i$} is the eigenvector corresponding to the eigenvalue {\small$\lambda_i$}.
Without loss generality, we normalize these $n$ eigenvectors {\small$v_i$}, thereby {\small$V^TV=I$}.
In this scenario, $H_x$ can be decomposed as {\small$H_x=V\Lambda V^{T}$}, and the perturbation {\small$\delta^{(m)}$} can be represented as
\begin{equation}
\begin{small}
\begin{aligned}
\label{supp_eqn:perturbation_m_3}
\delta^{(m)}&=\alpha [I+(I+\alpha V\Lambda V^T)+\cdots +(I+\alpha V\Lambda V^T)^{m-1}]g_x +  \mathcal{R}_2(\alpha)
\\
&=\alpha [VV^T+(VV^T+\alpha V\Lambda V^T)+\cdots +(VV^T+\alpha V\Lambda V^T)^{m-1}]g_x +  \mathcal{R}_2(\alpha)
\\
&=\alpha [VIV^T+V(I+\alpha\Lambda)V^T+\cdots +[V(I+\alpha \Lambda )V^T]^{m-1}]g_x +  \mathcal{R}_2(\alpha)
\\
&=\alpha [VIV^T+V(I+\alpha\Lambda)V^T+\cdots +V(I+\alpha \Lambda )^{m-1}V^T]g_x +  \mathcal{R}_2(\alpha)
 \\
&=\alpha V[I+(I+\alpha\Lambda)+\cdots +(I+\alpha \Lambda )^{m-1}]V^Tg_x +  \mathcal{R}_2(\alpha).
\end{aligned}
\end{small}
\end{equation}

For simplicity, let {\small$D=\alpha(I+(I+\alpha\Lambda)+\cdots +(I+\alpha \Lambda )^{m-1})$}, which is a diagonal matrix, since {\small$I$}, {\small$I+\alpha\Lambda$}, ..., {\small$(I+\alpha \Lambda)^{m-1}$} are all diagonal matrices.
In this way, let us focus on the $k$-th diagonal element {\small$D_{kk} \in \mathbb{R}$}.
\begin{equation}
\begin{aligned}
\label{supp_eqn:diagonal_m_1}
D_{kk}&=\alpha(1+(1+\alpha \lambda_k)+\cdots+(1+\alpha \lambda_k)^{m-1})\\
&=\alpha(1\times\frac{1-(1+\alpha\lambda_k)^m}{1-(1+\alpha\lambda_k)})\\
&=\frac{(1+\alpha\lambda_k)^m-1}{\lambda_k}.
\end{aligned}
\end{equation}

Then, combining Eq.~\eqref{supp_eqn:perturbation_m_3} and Eq.~\eqref{supp_eqn:diagonal_m_1}, the perturbation {\small$\delta^{(m)}$} can be written as follows.
Here, considering that $n$ eigenvectors of the Hessian matrix form a set of unit orthogonal basis, the gradient {\small$g_x$} can be represented as {\small$g_x=\sum_{i=1}^p \gamma_iv_i$}, where {\small$\gamma_i$} is referred to as the projection length of {\small$g_x$} on {\small$v_i$}.
\begin{equation}
\label{supp_eqn:perturbation_m}
\begin{aligned}
\delta^{(m)}&=VDV^Tg_x +  \mathcal{R}_2(\alpha)\\
&=VDV^T(\sum_{i=1}^n\gamma_iv_i) +  \mathcal{R}_2(\alpha) \\
&=\sum_{i=1}^{n} D_{ii}v_iv_i^T\sum_{k=1}^n\gamma_k v_k +  \mathcal{R}_2(\alpha) \\
&=\sum_i \frac{(1+\alpha\lambda_i)^m-1}{\lambda_i}\gamma_i v_i +  \mathcal{R}_2(\alpha).
\end{aligned}
\end{equation}

Furthermore, we use the first-order Taylor expansion to decompose the gradient {\small$g_{x+\delta^{(m)}}$} of the loss~\textit{w.r.t.} the adversarial example {\small$x+\delta^{(m)}$}.
Here, {\small$R_2(\delta^{(m)})$} denotes terms of the perturbation {\small$\delta^{(m)}$} no less than the second order.
\begin{equation}
\label{eqn:grad_taylor_m}
g_{x+\delta^{(m)}} = g_x + H_x \; \delta^{(m)} + R_2(\delta^{(m)}).
\end{equation}

Substituting Eq.~\eqref{supp_eqn:perturbation_m} back to Eq.~\eqref{eqn:grad_taylor_m}, the gradient {\small$g_{x+\delta^{(m)}}$} can be written as
\begin{equation}
\label{eqn:grad_taylor_m_1}
\begin{aligned}
g_{x+\delta^{(m)}} &= g_x + H_x  \bigg( \sum_i \frac{(1+\alpha\lambda_i)^m-1}{\lambda_i}\gamma_iv_i + \mathcal{R}_2(\alpha) \bigg)
+ R_2(\delta^{(m)})
\\
& = \sum_{i=1}^n \gamma_iv_i  + H_x \sum_i \frac{(1+\alpha\lambda_i)^m-1}{\lambda_i}\gamma_iv_i
+ H_x \mathcal{R}_2(\alpha) + R_2(\delta^{(m)})
 \\
& = \sum_{i=1}^n \gamma_iv_i  + \sum_i \frac{(1+\alpha\lambda_i)^m-1}{\lambda_i}\gamma_i (H_x v_i)
+ H_x \mathcal{R}_2(\alpha) + R_2(\delta^{(m)})
\\
& = \sum_{i=1}^n \gamma_iv_i  + \sum_i \frac{(1+\alpha\lambda_i)^m-1}{\lambda_i}\gamma_i (\lambda_i v_i)
+H_x \mathcal{R}_2(\alpha) + R_2(\delta^{(m)})
\\
& \approx \sum_i (1+\alpha\lambda_i)^m \;\gamma_i  v_i .
\end{aligned}
\end{equation}

Hence, Theorem~\ref{theorem:adv_per} is proven.
\end{proof}

\section{Proof of Theorem~\ref{theorem:adv_per_2}}
\label{supp_sec:theorem2}
In this section, we prove Theorem~\ref{theorem:adv_per_2} in Section~\ref{sec:3.1} of the main paper, which analyzes the adversarial perturbation of the infinite-step attack.

\textbf{Theorem 2.}
\textit{(Perturbations of the infinite-step attack).
{\small$\beta = \alpha m$} reflects the overall adversarial strength of the infinite-step attack with the step number {\small$m \to + \infty$} and the step size {\small$\alpha=\beta/m \to 0$}.
Then, this infinite-step adversarial perturbation {\small$\hat\delta = \lim_{m\to+ \infty }\alpha \sum\nolimits_{t=0}^{m-1} \frac{\partial}{\partial x} L(f(x+ \delta^{(t)}), y)$} can be re-written as follows, where
{\small$\hat{\mathcal{R}}_2(\alpha)$} denote the sum of terms no less than the second order in Taylor expansion~\textit{w.r.t.} {\small$\hat\delta$}.
}

\begin{small}
\begin{equation}
\label{eqn:solution_adv_per_2}
\hat\delta =  \sum_{i=1}^{n}  \frac{\exp(\beta \lambda_{i})-1}{\lambda_{i}} \gamma_{i} v_{i} +\hat{\mathcal{R}}_2(\alpha),\quad
g_{x+\hat\delta} = \sum_{i=1}^{n} \exp(\beta \lambda_{i}) \gamma_{i} v_{i} .
\end{equation}\end{small}

\begin{proof}
According to Eq.~\eqref{eqn:adv_per_sum} and Lemma~\ref{lemma:adv_per_step} in Appendix, when the step number {\small$m \to + \infty$}, the infinite-step adversarial perturbation {\small$\hat\delta$} can be represented as
\begin{equation}
\begin{small}
\label{supp_eqn:perturbation_2}
\begin{aligned}
\hat\delta
&=\lim_{m\to+ \infty } \Delta {x}^{(1)} + \Delta {x}^{(2)} + \cdots + \Delta {x}^{(m)}
\\
\\
&= \lim_{m\to+ \infty } \alpha [I+(I+\alpha H_x)+\cdots +(I+\alpha H_x)^{m-1}]g_x +
\\
&\quad + \lim_{m\to+ \infty } \alpha R_2(\delta^{(1)}) + \alpha R_2(\delta^{(2)})+\cdots + \alpha R_2(\delta^{(m-1)})
\\
\\
&= \lim_{m\to+ \infty } \alpha [I+(I+\alpha H_x)+\cdots +(I+\alpha H_x)^{m-1}]g_x + \hat{\mathcal{R}}_2(\alpha).
\end{aligned}
\end{small}
\end{equation}

Here, considering that {\small$\forall1\leq t \leq m, R_2(\delta^{(t)})$} is proportional to $\alpha^{2}$, the sum {\small$ \lim_{m\to+ \infty } \alpha R_2(\delta^{(1)}) + \alpha R_2(\delta^{(2)})+\cdots + \alpha R_2(\delta^{(m-1)})$} is proportional to $\alpha^{2}$.
In this way, for simplicity, we use {\small$\hat{\mathcal{R}}_2(\alpha) \propto \alpha^{2}$} to denote the sum of terms no less than the second order in Taylor expansion,~\textit{i.e.}, {\small$\hat{\mathcal{R}}_2(\alpha) = \lim_{m\to+ \infty } \alpha R_2(\delta^{(1)}) + \alpha R_2(\delta^{(2)})+\cdots + \alpha R_2(\delta^{(m-1)})$}.
Moreover, the term {\small$\lim_{m\to+ \infty } \alpha [I+(I+\alpha H_x)+\cdots +(I+\alpha H_x)^{m-1}]$} can be roughly considered to proportional to $\alpha$.
Since the step size is infinitesimal {\small$\alpha=\beta/m \to 0$}, we can ignore the term {\small$\hat{\mathcal{R}}_2(\alpha) $}, and the perturbation $\hat\delta$ in Eq.~\eqref{supp_eqn:perturbation_2} can be roughly approximated as
\begin{equation}
\label{supp_eqn:perturbation_approx}
\begin{aligned}
\hat\delta
&= \lim_{m\to+ \infty } \alpha [I+(I+\alpha H_x)+\cdots +(I+\alpha H_x)^{m-1}]g_x + \hat{\mathcal{R}}_2(\alpha)
\\
&\approx  \lim_{m\to+ \infty } \alpha [I+(I+\alpha H_x)+\cdots +(I+\alpha H_x)^{m-1}]g_x.
\end{aligned}
\end{equation}

Because the Hessian matrix {\small$H_x$} is a real-valued symmetric matrix, we can use the eigenvalue decomposition to decompose {\small$H_x$} as {\small$H_x=V\Lambda V^{-1}=V\Lambda V^{T}$}.
In this scenario,  the perturbation {\small$\hat\delta$} can be further simplified as
\begin{equation}
\begin{small}
\label{supp_eqn:perturbation_3}
\begin{aligned}
\hat\delta
&= \lim_{m\to+ \infty } \alpha [I+(I+\alpha H_x)+\cdots +(I+\alpha H_x)^{m-1}]g_x + \hat{\mathcal{R}}_2(\alpha)
\\
&=  \lim_{m\to+ \infty } \alpha [I+(I+\alpha V\Lambda V^T)+\cdots +(I+\alpha V\Lambda V^T)^{m-1}]g_x + \hat{\mathcal{R}}_2(\alpha)\\
&=\lim_{m\to+ \infty }  \alpha V[I+(I+\alpha\Lambda)+\cdots +(I+\alpha \Lambda )^{m-1}]V^Tg_x + \hat{\mathcal{R}}_2(\alpha).
\end{aligned}
\end{small}
\end{equation}

For simplicity, let  {\small$D=\alpha(I+(I+\alpha\Lambda)+\cdots +(I+\alpha \Lambda )^{m-1})$}.
Then, when the step number {\small$m \to + \infty$}, the $k$-th diagonal element {\small$\lim_{m\to+ \infty }  D_{kk}$} can be written as
\begin{equation}
\begin{aligned}
\label{supp_eqn:diagonal_2}
\lim_{m\rightarrow +\infty}D_{kk}
&=\lim_{m\rightarrow +\infty} \big[\alpha(1+(1+\alpha \lambda_k)+\cdots+(1+\alpha \lambda_k)^{m-1})\big]\\
&=\frac{\lim_{m\rightarrow +\infty} (1+\alpha\lambda_k)^m -1}{\lambda_k}\\
&=\frac{\lim_{m\rightarrow +\infty} (1+\frac{\alpha m\lambda_k}{m})^m-1}{\lambda_k}\\
&=\frac{\exp(\alpha m\lambda_k)-1}{\lambda_k}\\
&= \frac{\exp(\beta\lambda_k)-1}{\lambda_k}.
\end{aligned}
\end{equation}

Then, combining Eq.~\eqref{supp_eqn:diagonal_2} and Eq.~\eqref{supp_eqn:perturbation_2}, the perturbation {\small$\hat\delta$} can be written as
\begin{equation}
\label{supp_eqn:perturbation_5}
\begin{aligned}
\hat\delta&=\lim_{m\rightarrow +\infty} VDV^Tg_x + \hat{\mathcal{R}}_2(\alpha)\\
&=\lim_{m\rightarrow +\infty} VDV^T(\sum_{i=1}^n\gamma_iv_i) +  \hat{\mathcal{R}}_2(\alpha) \\
&=\sum_{i=1}^{n} \lim_{m\rightarrow +\infty} D_{ii} v_iv_i^T\sum_{k=1}^n\gamma_k v_k +  \hat{\mathcal{R}}_2(\alpha) \\
&=\sum_i  \frac{\exp(\beta\lambda_i)-1}{\lambda_i} \gamma_iv_i +  \hat{\mathcal{R}}_2(\alpha).
\end{aligned}
\end{equation}

Furthermore, we use the first-order Taylor expansion to decompose the gradient {\small$g_{x+\hat\delta}$} of the loss~\textit{w.r.t.} the adversarial example {\small$x+\hat\delta$}.
\begin{equation}
\label{eqn:grad_taylor}
g_{x+\hat\delta} = g_x + H_x \;\hat\delta + R_2 (\hat\delta).
\end{equation}

Substituting Eq.~\eqref{supp_eqn:perturbation_5} back to Eq.~\eqref{eqn:grad_taylor}, the gradient {\small$g_{x+\hat\delta}$} can be written as
\begin{equation}
\label{eqn:grad_taylor_2}
\begin{aligned}
g_{x+\hat\delta} &= g_x + H_x  \bigg( \sum_i \frac{\exp(\beta\lambda_i)-1}{\lambda_i} \gamma_iv_i + \hat{\mathcal{R}}_2(\alpha) \bigg) + R_2 (\hat\delta)
\\
& = \sum_{i=1}^n \gamma_iv_i  + H_x \sum_i \frac{\exp(\beta\lambda_i)-1}{\lambda_i} \gamma_iv_i
+ H_x \hat{\mathcal{R}}_2(\alpha) + R_2 (\hat\delta)
\\
& = \sum_{i=1}^n \gamma_iv_i  + \sum_i \frac{\exp(\beta\lambda_i)-1}{\lambda_i} \gamma_i (H_x v_i)
+ H_x \hat{\mathcal{R}}_2(\alpha) + R_2 (\hat\delta)
\\
& = \sum_{i=1}^n \gamma_iv_i  + \sum_i \frac{\exp(\beta\lambda_i)-1}{\lambda_i} \gamma_i (\lambda_i v_i)
+ H_x \hat{\mathcal{R}}_2(\alpha) + R_2 (\hat\delta)
\\
& \approx  \sum_i \exp(\beta\lambda_i) \;\gamma_i  v_i.
\end{aligned}
\end{equation}

Hence, Theorem~\ref{theorem:adv_per_2} is proven.
\end{proof}
\section{Detailed explanation for Corollary~\ref{theorem:adv_per_norm}}

In this section, we consider $\ell_2$ attacks and $\ell_\infty$ attacks.
As two typical attacking methods, the $\ell_2$ attack and the $\ell_\infty$ attack usually regularize/normalize the adversarial strength in each step by applying {\small$g_{x+ \delta^{(t)}}^{(\ell_2)}=\frac{\partial}{\partial x} L(f(x+ \delta^{(t)}), y)/\Vert \frac{\partial}{\partial x} L(f(x+ \delta^{(t)}), y) \Vert$},
and {\small$g_{x+ \delta^{(t)}}^{(\ell_\infty)}=\text{sign} (\frac{\partial}{\partial x} L(f(x+ \delta^{(t)}), y))$}, respectively.
In fact, for the $\ell_\infty$ attack, we can roughly consider that only the gradient component {\small$o_{x}^{T}g_{x+ \delta^{(t)}}^{(\ell_\infty)} o_{x}$} disentangled from {\small$g_{x+ \delta^{(t)}}^{(\ell_\infty)}$} along {\small$\frac{\partial}{\partial x} L(f(x), y)$} is effective,
where {\small$o_{x}=\frac{\partial}{\partial x} L(f(x), y)/ \Vert \frac{\partial}{\partial x} L(f(x), y) \Vert$} is the unit vector in the direction of {\small$\frac{\partial}{\partial x} L(f(x), y)$}.
However, it is quite complex to analyze the exact attacking behavior.
Therefore, in Corollary~\ref{theorem:adv_per_norm}, we just normalize the perturbation in Theorem~\ref{theorem:adv_per_2} to roughly approximate the regularization/normalization of perturbations in  $\ell_2$ attacks and $\ell_\infty$ attacks.

\textbf{Corollary 1.}
\textit{
(Normalized perturbation of the infinite-step attack).
Based on Theorem~\ref{theorem:adv_per_2}, if we ignore residual terms {\small$R_2(\hat\delta)$},
the perturbation of the infinite-step $\ell_2$ attack generated via  {\small$g_{x+ \delta^{(t)}}^{(\ell_2)}$}, and the perturbation of the infinite-step $\ell_\infty$ attack generated via  {\small$g_{x+ \delta^{(t)}}^{(\ell_\infty)}$} can be approximated as follows.
Here, $C \in \mathbb{R}$ is roughly considered to be proportional to the step number $m$ of the adversarial attack.}
\vspace{-6pt}

\begin{small}
\begin{equation}
\label{eqn:solution_adv_per_norm}
\hat\delta^{\text{(norm)}} \approx  C \cdot \hat\delta/{\Vert \hat\delta \Vert}
=  C \cdot \sum\nolimits_{i=1}^{n}  \frac{\exp(\beta \lambda_{i})-1}{\lambda_{i}} \gamma_{i} v_{i} \Bigg/
{ \sqrt {\sum\nolimits_{i=1}^{n}  (\frac{\exp(\beta \lambda_{i})-1}{\lambda_{i}}\gamma_{i})^{2}}}.
\end{equation}\end{small}

The reason why the scalar $C$ is proportional to the step number $m$ of the adversarial attack is as follows.
According to the definition of the perturbation of the $\ell_2$ attack,
{\small$\hat\delta =\lim_{m\rightarrow +\infty} \sum\nolimits_{t=0}^{m-1}  \alpha \cdot  g_{x+ \delta^{(t)}}^{(\ell_2)}$},
and the definition of the perturbation of the $\ell_\infty$ attack,
{\small$\hat\delta =\lim_{m\rightarrow +\infty} \sum\nolimits_{t=0}^{m-1}  \alpha \cdot  g_{x+ \delta^{(t)}}^{(\ell_\infty)}$},
the adversarial strength in each step is regularized/normalized.
Hence, norms of perturbations generated via the $\ell_2$ attack and the $\ell_\infty$ attack increase along with the step number $m$.
In this way, we can roughly consider that $C$ is proportional to the step number $m$ of the adversarial attack.

\section{Proof of Assumption~\ref{assumption:sigmoid} in main paper}

In this section, we prove Assumption~\ref{assumption:sigmoid} in Section~\ref{sec:3.2} of the main paper.

\textbf{Assumption 2 in main paper}.
\textit{The analysis of binary classification based on a sigmoid function, {\small$f(x)= \frac{1}{1+exp(-z(x))}, z(x) \in \mathbb{R}$},
can also explain the multi-category classification with a softmax function, {\small$f(x) = \frac{exp(z’_{1})}{\sum_{i=1}^{c} exp(z’_{i})}, z’ \in \mathbb{R}^{c}$}, if the second-best category is much stronger than other categories.
In this case, attacks on the multi-category classification can be approximated by attacks on the binary classification between the best and the second-best categories,~\textit{i.e.,} {\small$f(x) \approx \frac{1}{1+exp(-z)}$}, subject to {\small$z=z'_1-z'_2\in \mathbb{R}$}.
$z'_1$ and $z'_{2}$ are referred to as network outputs corresponding to the best category and the second-best category, respectively.}

\begin{proof}
Given an input sample $x$ and a ReLU network $f$ trained for multi-category classification based on a softmax function,
let {\small$z'_{i}\in\mathbb{R}, 1\leq i \leq c$} denote the network output of the $i$-th confident category,~\textit{i.e., } {\small$z'_{1}>z'_{2}>\cdots>z'_{c}$}.
Then, the probability for the most confident category is given as follows.
\begin{equation}
\begin{aligned}
\label{eqn:best_cate1}
p_1&=\frac{\exp(z_1')}{\sum_{i=1}^c\exp(z_i')}\\
&=\frac{1}{\sum_{i=1}^c\exp(z_i'-z_1')}.
\end{aligned}
\end{equation}

When the second-best category is much stronger than other categories, we have $\forall i>2, \exp(z_i'-z_1')\ll\exp(z_2'-z_1')<\exp(z_1'-z_1')=1$.
In this way, Eq.~\eqref{eqn:best_cate1} can be re-written as
\begin{equation}
\label{eqn:best_cate2}
p_1=\frac{1}{\sum_{i=1}^c\exp(z_i'-z_1')}\approx \frac{1}{\exp(z_2'-z_1')+1}=\frac{1}{1+\exp(-(z_1'-z_2'))}.
\end{equation}

Let {\small$z=z_1'-z_2' \in \mathbb{R}$}, and we have {\small$f(x) = p_1 \approx \frac{1}{1+\exp(-z)}$}.
In this way, attacks on the multi-category classification can be approximated by attacks on the binary classification between the best and the second-best categories.
Hence, Assumption~\ref{assumption:sigmoid} in main paper is proven.
\end{proof}

\section{Proof of Lemma~\ref{lemma:hessian} in main paper}

In this section, we prove Lemma~\ref{lemma:hessian} in Section~\ref{sec:3.2} of the main paper.

\textbf{Lemma 1 in main paper}.
\textit{Let us focus on the cross-entropy loss {\small$L(f(x),y)$}.
If the classification is based on a softmax operation, then the Hessian matrix {\small$H_z =\frac{\partial^{2}}{\partial z \partial z^{T}}   L(f(x),y)$} is positive semi-definite.
If the classification is based on a sigmoid operation, the scalar {\small$H_z \geq g_z^{2} \geq 0$}, as long as the attacking has not finished (still {\small$z(x) \cdot y>0, y \in \{-1,+1\}$}).
Here, {\small$g_z = \frac{\partial}{\partial z} L(f(x),y)  \in \mathbb{R}$}.}

\begin{proof}
Let us first consider the classification based on a softmax operation.
Given an input sample $x$ and a ReLU network $f$, the output of the network can be written as $z(x)=f(x) \in\mathbb{R}^c$.
In this case, let {\small$p_i=\exp(z_i)/\sum_{k=1}^c\exp{(z_{k})}$} denote the probability that the network $f$ classifies the input sample $x$ as the $i$-th category, where $z_i\in\mathbb{R} $ is referred to as the network output of the $i$-th category.
Then, the cross-entropy loss can be represented as  {\small$L(f(x),y)=-\sum_{i=1}^c y_i \log(p_i)$}, where $y_i\in \{0,1\}$ denotes the label.
Here, let $i$ denote the ground-truth label for the input sample $x$,~\textit{i.e.,} $y_i=1$, and $\forall k \neq i, y_k=0$.
In this way, the gradient of the loss {\small$L(f(x),y)$}~\textit{w.r.t} the network output $z(x)\in\mathbb{R}^c$ is given as
\begin{equation}
\label{eqn:grad_z}
g_{z}=\frac{\partial L(f(x),y)}{\partial z(x)}=-\frac{y_i}{p_i}\cdot\frac{\partial p_i}{\partial z(x)}=-\frac{1}{p_i}\cdot\frac{\partial p_i}{\partial z(x)}.
\end{equation}

Let us first focus on the network output $z_i$~\textit{w.r.t.} the ground-truth category $i$.
In this scenario, we have
\begin{equation}
\begin{aligned}
\label{eqn:grad_z_i}
\frac{\partial p_i}{\partial z_i}&=\frac{\exp(z_i)(\sum_{k=1}^c\exp(z_{k}))-\exp(z_i)\exp(z_i)}{(\sum_{k=1}^c\exp(z_{k}))^2}\\
&=\frac{\exp(z_i)}{\sum_{k=1}^c\exp(z_{k})}\cdot(1-\frac{\exp(z_i)}{\sum_{k=1}^c\exp(z_{k})})\\
&=p_i(1-p_i)=p_i(y_i-p_i).
\quad // \quad y_i=1
\end{aligned}
\end{equation}

As for $z_k, k\neq i$, we have
\begin{equation}
\begin{aligned}
\label{eqn:grad_z_j}
\frac{\partial p_i}{\partial z_k}&=\frac{-\exp(z_i)\exp(z_k)}{(\sum_{k'=1}^c\exp(z_{k'}))^2}\\
&=-\frac{\exp(z_i)}{\sum_{k'=1}^c\exp(z_{k'})}\cdot\frac{\exp(z_k)}{\sum_{k'=1}^c\exp(z_{k'})}\\
&=-p_ip_k=p_i(y_k-p_k).
\quad // \quad y_k=0
\end{aligned}
\end{equation}

Combining Eq.~\eqref{eqn:grad_z}, Eq.~\eqref{eqn:grad_z_i}, and Eq.~\eqref{eqn:grad_z_j}, we have
\begin{equation}
\label{eqn:grad_z_new}
g_z=\textbf{p}-\textbf{y},
\end{equation}
where {\small$\textbf{p}=[p_1, p_2, \cdots, p_c] \in \mathbb{R}^{c}$}, and {\small$\textbf{y}=[y_1, y_2, \cdots, y_c] \in \mathbb{R}^{c}$}.

In this way, based on Eq.~\eqref{eqn:grad_z_new}, the Hessian matrix {\small$H_z \overset{\text{def}}{=}\frac{\partial^{2}}{\partial z \partial z^{T}}  L(f(x),y)$} of the loss~\textit{w.r.t} the network output $z(x)$ can be written as
\begin{equation}
\begin{aligned}
\label{eqn:hes_z_def}
H_z&=\frac{\partial^{2} L(f(x),y) }{\partial z \partial z^{T}}= \frac{\partial g_z}{\partial z(x)}\\
&=\frac{\partial (\textbf{p}-\textbf{y})}{\partial z(x)}=\frac{\partial \textbf{p}}{\partial z(x)}.
\end{aligned}
\end{equation}

According to Eq.~\eqref{eqn:grad_z_i} and Eq.~\eqref{eqn:grad_z_j}, we have {\small$\frac{\partial p_i}{\partial z_i}=p_i(1-p_i)=p_i-p_i^2$}, and {\small$\forall  k\neq i, \frac{\partial p_i}{\partial z_k}=-p_ip_k$}.
Then, the Hessian matrix $H_z$ can be re-written as
\begin{equation}
\label{eqn:hes_z}
H_z=\frac{\partial p}{\partial z(x)}=\text{diag}([p_1, p_2, \cdots, p_c])-\textbf{p}\textbf{p}^T .
\end{equation}

In order to prove the Hessian matrix $H_z$ is positive semi-define, we need to verify that all eigenvalues of the Hessian matrix $H_z$ are non-negative.
To this end, we use Gershgorin Circle theorem to estimate the bound of eigenvalues.
Specifically, Eq.~\eqref{eqn:hes_z} shows that for the $k$-th row of the Hessian matrix $H_z$, the $k$-th diagonal element of the Hessian matrix $H_z$ is {\small$p_i(1-p_i)$}, and the sum of absolute values of non-diagonal elements in the $k$-th row is {\small$\sum_{k'=1,k'\neq k}^c|p_kp_{k'}|=p_k(1-p_k)$}.
In this way, according to Gershgorin Circle theorem, each eigenvalue $\lambda$ of the Hessian matrix $H_z$ satisfies $0\le\lambda\le \max_{k} 2p_k(1-p_k)$.
In other words, all eigenvalues of $H_z$ are non-negative.
Hence, the Hessian matrix $H_z$ is proven to be positive semi-definite.

Moreover, let us focus on the classification based on a sigmoid operation.
In this case, the network output $z(x)\in\mathbb{R}$ is a scalar, and the cross-entropy loss can be represented as {\small$L(f(x),y)=-\log \frac{\exp{(z(x)\cdot y)}}{1+\exp(z(x)\cdot y)}$}, where $y\in\{-1,+1\}$.
Then, the gradient of the loss {\small$L(f(x),y)$}~\textit{w.r.t} the network output $z(x)\in\mathbb{R}$ is given as
\begin{equation}
\begin{aligned}
\label{eqn:bin-grad_z}
g_z&=\frac{\partial L(f(x),y)}{\partial z(x)}\\
&=-\frac{1+\exp(z(x)\cdot y)}{\exp(z(x)\cdot y)}\cdot\frac{\exp(z(x)\cdot y)}{(1+\exp(z(x)\cdot y))^2} \cdot y\\
&=-\frac{y}{1+\exp(z(x)\cdot y)} \in\mathbb{R}.
\end{aligned}
\end{equation}

Based on Eq.~\eqref{eqn:bin-grad_z}, {\small$H_z \overset{\text{def}}{=}\frac{\partial^{2}}{\partial z \partial z^{T}}  L(f(x),y) \in\mathbb{R}$} of the loss~\textit{w.r.t} the network output $z(x)$ can be written as
\begin{equation}
\begin{aligned}
\label{eqn:bin-hes_z}
H_z&=\frac{\partial g_z}{\partial z(x)}\\
&=-y\cdot -\frac{y\exp(z(x)\cdot y)}{(1+\exp(z(x)\cdot y))^2}\\
&=\frac{y^2\exp(z(x)\cdot y)}{(1+\exp(z(x)\cdot y))^2} \geq 0.
\end{aligned}
\end{equation}

Combining Eq.~\eqref{eqn:bin-grad_z} and Eq.~\eqref{eqn:bin-hes_z}, we have
\begin{equation}
\label{eqn:bin-Hzgz2}
\frac{H_z}{g_z^2}=\frac{y^2\exp(z(x)\cdot y)}{(1+\exp(z(x)\cdot y))^2}\cdot (-\frac{1+\exp(z(x)\cdot y)}{y})^2=\exp(z(x)\cdot y)
\end{equation}

If the attacking has not finished yet,~\textit{i.e.,} {\small$z(x)\cdot y>0$}, then we have {\small$\exp(z(x) \cdot y)>1$}, thereby {\small$H_z>g_z^2$}.
Based on Eq.~\eqref{eqn:bin-grad_z}, we obtain {\small$g_z^2=y^{2}/{(1+\exp(z\cdot y))^2}\in \mathbb{R} >0$}, thereby {\small$H_z>g_z^2>0$}.

Thus, Lemma~\ref{lemma:hessian} in Section~\ref{sec:3.2} is proven.
\end{proof}

\section{Proof of Theorem~\ref{theorem2}}
In this section, we prove Theorem~\ref{theorem2} in Section~\ref{sec:3.2} of the main paper, which explains training effects of the adversarial perturbation $\hat\delta$ in Theorem~\ref{theorem:adv_per_2} on adversarial training.

Specifically, if we use vanilla training to fine-tune the network on the original input sample $x$ for a single step, then the gradient of the loss~\textit{w.r.t.} the weight  {\small$W$} is given as {\small$g_{W}= \frac{\partial}{\partial W} L(f(x),y)$}.
In comparison, if we train the network on the adversarial example {\small$x+\hat\delta$} for a single step, then we will get the gradient {\small$g^{\text{(adv)}}_{W} =\frac{\partial}{\partial W}  L(f(x+\hat\delta),y)$}.
In this way, {\small $\Delta g_{W} = g^{\text{(adv)}}_{W}- g_{W}$} denotes additional effects of adversarial training on the gradient.
\begin{equation}
\label{eqn:delta_g_w_pre}
\begin{aligned}
\Delta g_{W} &= g^{\text{(adv)}}_{W}- g_{W}= \frac{\partial}{\partial W} L(f(x+\hat\delta),y) -\frac{\partial}{\partial W}  L(f(x),y)\\
& = x(H_h \Delta h)^T+\hat\delta (g_h+H_h\Delta h)^T,
\end{aligned}
\end{equation}
where {\small$\Delta h=W^T\hat\delta$} denotes the change of the intermediate-layer feature $h$ caused by the perturbation {\small$\hat\delta$}, where {\small$W=W_{j}^{T}\Sigma_{j-1}\cdots\Sigma_{2}W_2^{T}\Sigma_{1}W_1^{T}$}.
It is because, according to Assumption 1 in the main paper, we simplify our research into an idealized adversarial attack, whose adversarial perturbation does not significantly change gating states in gating layers.
In this way, to simplify the proof, we can roughly represent the output feature $h$ of the $j$-th layer as {\small$h\approx W^{T} x +b$}.
{\small$H_{h} = \frac{\partial^{2} }{\partial h \partial h^{T}} L(f(x),y)$} represents the Hessian matrix of the loss~\textit{w.r.t.} the output feature $h$ of the $j$-th linear layer.
{\small$g_h = \frac{\partial}{\partial h}  L(f(x),y)$} indicates the gradient of the loss~\textit{w.r.t.} the feature $h$.

\begin{proof}

According to the chain rule, the gradient of the weight $W$ can be written as {\small$g_w=(\frac{\partial L(f(x),y)}{\partial W^T})^T=(\frac{\partial L(f(x),y)}{\partial h^T}\frac{\partial h}{\partial W^T})^T$}.
Without loss of generality, let us first consider the $i$-th dimension of $h$,~\textit{i.e.} {\small$h_i=W_i^Tx\in\mathbb{R}$}, which is only related to the $i$-th row of {\small$W^T$}.
Thus, the gradient of the loss~\textit{w.r.t.} {\small$W_i^T \in\mathbb{R}^{1\times n}$} is given as
\begin{equation}
\begin{aligned}
\label{eqn:gradient_of_wi}
\frac{\partial L(f(x),y)}{\partial W_i^T}=\frac{\partial L(f(x),y)}{\partial h_i}\frac{\partial h_i}{\partial W_i^T}=\frac{\partial L(f(x),y)}{\partial h_i}x^T.
\end{aligned}
\end{equation}

In this way, combining all dimensions of $h$, we have
\begin{equation}
\begin{aligned}
\label{eqn:gradient_of_w_2}
\frac{\partial L(f(x),y)}{\partial W^T} &=[\frac{\partial L(f(x),y)}{\partial W_1^T},\frac{\partial L(f(x),y)}{\partial W_2^T},\cdots,\frac{\partial L(f(x),y)}{\partial W_D^T}]^T\\
&=\frac{\partial L(f(x),y)}{\partial h}x^T.
\end{aligned}
\end{equation}

In other words, the gradient {\small$g_w$} of the loss~\textit{w.r.t} the weight $W$ can be represented as
\begin{equation}
\begin{aligned}
\label{eqn:gradient_of_w}
g_W &=(\frac{\partial L(f(x),y)}{\partial W^T})^T
=(\frac{\partial L(f(x),y)}{\partial h}x^T)^T\\
&=x\frac{\partial L(f(x),y)}{\partial h^T}
=xg_h^T.
\end{aligned}
\end{equation}

According to Eq.~\eqref{eqn:gradient_of_w}, the gradient {\small$g^{\text{(adv)}}_{W} = \frac{\partial}{\partial W}  L(f(x+\hat\delta),y)$} can be re-written as follows, where
{\small$g_{h+\Delta h} =  \frac{\partial}{\partial h+\Delta h}  L(f(x+\hat\delta),y)$}.
\begin{equation}
\begin{aligned}
\label{eqn:gradient_of_w_adv}
g^{\text{(adv)}}_{W}=(x+\hat\delta)(g_{h+\Delta h})^T.
\end{aligned}
\end{equation}

We further use the first-order Taylor expansion to decompose the gradient {\small$g_{h+\Delta h}$}, where {\small$R^{\text{(grad)}}_2(\Delta h)$} denotes the terms no less than the second order.
\begin{equation}
\label{eqn:gradient_of_h_adv}
g_{h+\Delta h}=g_h+H_h\Delta h +R^{\text{(grad)}}_2(\Delta h).
\end{equation}

Substituting Eq.~\eqref{eqn:gradient_of_h_adv} back to Eq.~\eqref{eqn:gradient_of_w_adv}, the gradient {\small$g^{\text{(adv)}}_{W}$} can be represented as
\begin{equation}
\begin{aligned}
\label{eqn:gradient_of_w_adv_2}
g^{\text{(adv)}}_{W}=(x+\hat\delta)\bigg(g_h+H_h\Delta h +R^{\text{(grad)}}_2(\Delta h) \bigg)^T.
\end{aligned}
\end{equation}

Thus, the additional effects of adversarial training on the gradient can be written as follows.
\begin{equation}
\begin{aligned}
\label{eqn:change_gradient_w}
\Delta g_W&=g^{\text{(adv)}}_{W}-g_W
\\
& = x(H_h\Delta h)^T+\hat\delta (g_h+H_h\Delta h)^T + (x+\hat\delta) \bigg(R^{\text{(grad)}}_2(\Delta h) \bigg)^T
\\
&\approx x(H_h\Delta h)^T+\hat\delta (g_h+H_h\Delta h)^T.
\end{aligned}
\end{equation}
\end{proof}

\begin{assumption}[in Appendix]
\label{assumption:w}
Given a ReLU network $f$, let {\small$W=W_{j}^{T}\Sigma_{j-1}\cdots\Sigma_{2}W_2^{T}\Sigma_{1}W_1^{T} \in \mathbb{R}^{n \times D}$}.
Because each column of $W^T W$ is a high-dimensional vector, we can roughly consider that any pair of columns in $W^T W$ is linearly dependent.
Thus, $W^T W$ is a full rank matrix, and there exists $(W^T W)^{-1}$.
\end{assumption}

\begin{lemma}[in Appendix]
\label{lemma:H_x}
Based on Assumption~\ref{assumption:sigmoid} in the main paper, the Hessian matrix {\small$H_{h}\overset{\text{def}}{=}\frac{\partial^{2} }{\partial h \partial h^{T}} L(f(x),y)$} of the loss~\textit{w.r.t.} the feature $h$ can be represented as {\small$H_h = H_z\tilde{g}_h\tilde{g}_h^T$}, where {\small$\tilde{g}_h = \frac{\partial}{\partial h}  z(x)$} indicates the gradient of the network output {\small$z(x)$}~\textit{w.r.t.} the feature $h$, and {\small$H_z\in \mathbb{R}$}.
The Hessian matrix {\small$H_{x}\overset{\text{def}}{=}\frac{\partial^{2} }{\partial x \partial x^{T}} L(f(x),y)$} can be represented as {\small$H_x = H_z\tilde{g}_x\tilde{g}_x^T = W H_h W^T $}.
\end{lemma}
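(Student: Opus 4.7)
My plan is to exploit the fact that, under Assumption~\ref{assumption:sigmoid}, the loss depends on $x$ only through the scalar pre-activation $z(x)\in\mathbb{R}$, so that $L(f(x),y)=\ell(z(x))$ for some scalar function $\ell$. This reduces every Hessian to a rank-one object built out of a scalar second derivative and a gradient-outer-product.

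First I would differentiate once using the chain rule to obtain $g_h = g_z\,\tilde g_h$ and $g_x = g_z\,\tilde g_x$, where $g_z = \partial \ell/\partial z \in \mathbb{R}$. Then I would differentiate a second time in $h$: the product rule gives
\begin{equation*}
H_h = \frac{\partial}{\partial h^T}\bigl(g_z\,\tilde g_h\bigr) = \tilde g_h\,\frac{\partial g_z}{\partial h^T} + g_z\,\frac{\partial \tilde g_h}{\partial h^T} = H_z\,\tilde g_h\tilde g_h^T + g_z\,\frac{\partial^2 z}{\partial h\partial h^T},
\end{equation*}
and the final Hessian-of-$z$ term vanishes by Assumption~\ref{assumption:relu}, because the linear approximation $z(x)\approx \boldsymbol{w}x+\boldsymbol{b}$ makes $z$ linear in the intermediate feature $h$ as well (the path from $h$ to $z$ only traverses linear maps and frozen gating matrices). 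The identical computation carried out in the variable $x$ produces $H_x = H_z\,\tilde g_x\tilde g_x^T$.

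For the last identity $H_x = W H_h W^T$, I would invoke the linear relation $h = W^T x + b$ (again from Assumption~\ref{assumption:relu}) to get $\tilde g_x = W\,\tilde g_h$ by the chain rule, and then simply compute
\begin{equation*}
H_x = H_z\,\tilde g_x\tilde g_x^T = H_z\,(W\tilde g_h)(W\tilde g_h)^T = W\bigl(H_z\,\tilde g_h\tilde g_h^T\bigr)W^T = W H_h W^T.
\end{equation*}

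I expect no serious obstacle here since the argument is essentially bookkeeping; the one subtlety worth emphasizing is the \emph{consistent} use of Assumption~\ref{assumption:relu} to kill the $\partial^2 z/\partial h\partial h^T$ and $\partial^2 z/\partial x\partial x^T$ terms, because without linearity of $z$ the Hessians would not reduce to clean rank-one forms. A second minor point is dimensional: $H_z$ is a scalar under the sigmoid assumption, so the outer products $\tilde g_h\tilde g_h^T$ and $\tilde g_x\tilde g_x^T$ already carry the full matrix structure, and one should verify that the chain-rule transfer $\tilde g_x = W\tilde g_h$ has the right row/column convention to make $W H_h W^T$ come out with the shape of $H_x$.
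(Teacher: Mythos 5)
Your proof is correct and follows essentially the same route as the paper's: chain rule to get $g_h = g_z\tilde g_h$, product rule for the second derivative, linearity of $z$ (from the frozen-gate Assumption~\ref{assumption:relu}) to kill the $g_z\,\partial^2 z/\partial h\partial h^T$ term, and then $\tilde g_x = W\tilde g_h$ to obtain $H_x = WH_hW^T$. If anything you are slightly more careful than the paper, which drops the second-derivative-of-$z$ term silently and cites only Assumption~\ref{assumption:sigmoid} in the lemma statement even though, as you note, Assumption~\ref{assumption:relu} is also needed for that term to vanish.
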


\begin{proof}
\begin{equation}
\begin{small}
\label{supp_eqn:hessian_lemma_1}
\begin{aligned}
H_h&=\frac{\partial^2 L(f(x),y)}{\partial h\partial h^T}
=\frac{\partial (\frac{\partial L(f(x),y)}{\partial z(x)}\frac{\partial z(x)}{\partial h^T})^T}{\partial h^T}\\
&=\frac{ (\frac{\partial z(x)}{\partial h^T})^T}{\partial h^T}
\cdot
\frac{\partial L(f(x),y)}{\partial z(x)}
+(\frac{\partial z(x)}{\partial h^T})^T
\cdot
\frac{\partial(\frac{\partial L(f(x),y)}{\partial z(x)})}{\partial h^T} \\
&=(\frac{\partial z(x)}{\partial h^T})^T \frac{\partial^2 L(f(x),y)}{\partial z(x)\partial z(x)} \frac{\partial z(x)}{\partial h^T}\\
&=H_z\tilde{g}_h\tilde{g}_h^T. \quad // \quad z\in \mathbb{R}, H_z\in \mathbb{R}, \text{according to Assumption~\ref{assumption:sigmoid} in the main paper}
\end{aligned}
\end{small}
\end{equation}

Similarly, the Hessian matrix $H_{x}$ can be written as
\begin{equation}
\begin{small}
\label{supp_eqn:hessian_lemma_2}
\begin{aligned}
H_x&=\frac{\partial^2 L(f(x),y)}{\partial x\partial x^T}\\
&=\frac{\partial (\frac{\partial L(f(x),y)}{\partial z(x)}\frac{\partial z(x)}{\partial x^T})^T}{\partial x^T}\\
&=\frac{ (\frac{\partial z}{\partial x^T})^T}{\partial x^T}
\cdot
\frac{\partial L(f(x),y)}{\partial z(x)}
+(\frac{\partial z(x)}{\partial x^T})^T
\cdot
\frac{\partial(\frac{\partial L(f(x),y)}{\partial z(x)})}{\partial x^T} \\
&=(\frac{\partial z(x)}{\partial x^T})^T \frac{\partial^2 L(f(x),y)}{\partial z\partial z(x)} \frac{\partial z(x)}{\partial x^T}\\
&=H_z\tilde{g}_x\tilde{g}_x^T.
\quad // \quad z\in \mathbb{R}, H_z\in \mathbb{R}, \text{according to Assumption~\ref{assumption:sigmoid} in the main paper}
\end{aligned}
\end{small}
\end{equation}

Furthermore, we use the chain rule to re-write the gradient {\small$\tilde{g}_x$} of the network ouput $z(x)$~\textit{w.r.t} the input sample $x$.
\begin{equation}
\begin{aligned}
\label{supp_eqn:gxgy}
\tilde{g}_x&=(\frac{\partial z(x)}{\partial x^T})^T=(\frac{\partial z(x)}{\partial h^T}\frac{\partial h}{\partial x^T})^T\\
&=(\tilde{g}_h^T W^T)^T=W \tilde{g}_h.
\end{aligned}
\end{equation}

In this way, substituting Eq.~\eqref{supp_eqn:gxgy} back to Eq.~\eqref{supp_eqn:hessian_lemma_2}, we get
\begin{equation}
\begin{aligned}
\label{eqn:H_x_H_h}
H_x=H_z\tilde{g}_x\tilde{g}_x^T=H_z W\tilde{g}_h(W\tilde{g}_h)^T=WH_h W^T.
\end{aligned}
\end{equation}

Thus, Lemma~\ref{lemma:H_x} in Appendix is proven.
\end{proof}

\begin{lemma}[in Appendix]
\label{lemma:H_x_g_w}
Let {\small$\tilde{g}_x = \frac{\partial}{\partial x}  z(x)$} denote the gradient of the network output $z$~\textit{w.r.t} the input sample $x$, and {\small$\mathcal{A} =\beta H_z\Vert\tilde{g}_x\Vert^2 \in \mathbb{R} $}.
Then, we have
\begin{equation}
H_x \; \Delta g_{W} =(e^{\mathcal{A}}-1)H_x xg_h^T + \frac{1}{H_z\Vert\tilde{g}_x\Vert^2}(e^{2\mathcal{A}}-e^{\mathcal{A}})H_xg_x g_h^T.
\end{equation}
\end{lemma}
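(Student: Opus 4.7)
The plan is to reduce $\hat\delta$ to a scalar multiple of the input gradient $g_x$, then plug into the expression for $\Delta g_W$ from Eq.~\eqref{eqn:delta_g_w_pre}, simplify, and finally left-multiply by $H_x$.

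First I would use Lemma~\ref{lemma:H_x} in Appendix to note that, under Assumption~\ref{assumption:sigmoid} (so $H_z\in\mathbb R$), the Hessian $H_x=H_z\,\tilde g_x\tilde g_x^T$ is rank one. Its only nonzero eigenvalue is $\lambda_1=H_z\|\tilde g_x\|^2=\mathcal A/\beta$, with eigenvector $v_1=\tilde g_x/\|\tilde g_x\|$. Since $g_x=g_z\tilde g_x$ (by the chain rule on the scalar $z$), the projection $\gamma_i=g_x^Tv_i$ vanishes for $i>1$ and equals $g_z\|\tilde g_x\|$ for $i=1$. Substituting into Theorem~\ref{theorem:adv_per_2} collapses the sum to a single term, giving
\begin{equation*}
\hat\delta \;\approx\; \frac{e^{\mathcal A}-1}{H_z\|\tilde g_x\|^2}\,g_x.
\end{equation*}

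Next I would compute $H_h\Delta h$ with $\Delta h=W^T\hat\delta$. Using $H_h=H_z\tilde g_h\tilde g_h^T$, it suffices to evaluate the scalar $\tilde g_h^T\Delta h$. The key algebraic identity is $\tilde g_h^TW^T=(W\tilde g_h)^T=\tilde g_x^T$ (from Eq.~\eqref{supp_eqn:gxgy}), so
\begin{equation*}
\tilde g_h^T\Delta h=\tfrac{e^{\mathcal A}-1}{H_z\|\tilde g_x\|^2}\,\tilde g_h^TW^Tg_x=\tfrac{e^{\mathcal A}-1}{H_z\|\tilde g_x\|^2}\,g_z\|\tilde g_x\|^2=\tfrac{(e^{\mathcal A}-1)g_z}{H_z}.
\end{equation*}
Consequently $H_h\Delta h=(e^{\mathcal A}-1)g_z\tilde g_h=(e^{\mathcal A}-1)g_h$, a clean multiple of $g_h$.

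I would then substitute both $\hat\delta$ and $H_h\Delta h$ into the expression $\Delta g_W\approx x(H_h\Delta h)^T+\hat\delta(g_h+H_h\Delta h)^T$ proven above. The two inner terms combine to $e^{\mathcal A}g_h^T$, yielding
\begin{equation*}
\Delta g_W=(e^{\mathcal A}-1)\,x g_h^T+\frac{e^{2\mathcal A}-e^{\mathcal A}}{H_z\|\tilde g_x\|^2}\,g_xg_h^T.
\end{equation*}
Left-multiplying by $H_x$ gives exactly the stated identity. The main obstacle is the first step: justifying the rank-one collapse of $\hat\delta$ cleanly and tracking the fact that $g_x$ lies entirely along the single nontrivial eigendirection of $H_x$. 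Once that reduction is in hand, the remaining manipulation is a short chain-rule bookkeeping exercise exploiting $W\tilde g_h=\tilde g_x$ and $g_h=g_z\tilde g_h$.
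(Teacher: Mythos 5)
Your proposal is correct, and it reaches the stated identity by a genuinely different and more economical route than the paper. The paper works directly with the truncated geometric series $\hat\delta \approx \alpha[I+(I+\alpha H_x)+\cdots+(I+\alpha H_x)^{m-1}]g_x$, runs two separate inductions over matrix powers (one to commute $W^T$ past $(I+\alpha WH_hW^T)^t$, another to show $(I+\alpha WH_hW^T)^t-I=\frac{1}{\mathcal B}[(1+\alpha\mathcal B)^t-1]WH_hW^T$ with $\mathcal B=H_z\tilde g_h^TW^TW\tilde g_h$), and needs an extra assumption (Assumption~\ref{assumption:w} in Appendix, invertibility of $W^TW$) to cancel $W^TW$ from both sides of an intermediate identity before passing to the limit $m\to\infty$. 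You instead exploit the rank-one structure $H_x=H_z\tilde g_x\tilde g_x^T$ from Lemma~\ref{lemma:H_x}: since $g_x=g_z\tilde g_x$ lies entirely along the unique nontrivial eigendirection (and the $\lambda_i=0$ terms carry vanishing projections $\gamma_i$), Theorem~\ref{theorem:adv_per_2} collapses to $\hat\delta\approx\frac{e^{\mathcal A}-1}{H_z\Vert\tilde g_x\Vert^2}g_x$, after which $H_h\Delta h=(e^{\mathcal A}-1)g_h$ and the rest is scalar bookkeeping via $W\tilde g_h=\tilde g_x$ and $g_h=g_z\tilde g_h$. Your route buys two things: it dispenses with the $(W^TW)^{-1}$ assumption entirely, and it produces the stronger statement $\Delta g_W=(e^{\mathcal A}-1)xg_h^T+\frac{e^{2\mathcal A}-e^{\mathcal A}}{H_z\Vert\tilde g_x\Vert^2}g_xg_h^T$ before any multiplication by the (singular) $H_x$, from which the lemma follows immediately. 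The only caveats are the same ones the paper accepts: you discard the residual terms $\hat{\mathcal R}_2(\alpha)$ and rely on the sigmoid/binary reduction of Assumption~\ref{assumption:sigmoid} so that $H_z$ is a scalar and $H_x$ is genuinely rank one.
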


\begin{proof}
To prove Lemma~\ref{lemma:H_x_g_w} in Appendix, we multiply {\small$H_x$} on both sides of Eq.~\eqref{eqn:delta_g_w_pre}.
\begin{equation}
\label{eqn:delta_g_w_expansion}
\begin{aligned}
H_x \; \Delta g_{W} &= H_x \; (g^{\text{(adv)}}_{W}- g_{W})
 =H_x \; x(H_h \Delta h)^T+ H_x \; \hat\delta (g_h+H_h\Delta h)^T.
\end{aligned}
\end{equation}

Let us first focus on the first term {\small$H_x x(H_h \Delta h)^T$} in Eq.~\eqref{eqn:delta_g_w_expansion}.
According to Eq.~\eqref{supp_eqn:perturbation_approx} and  Lemma~\ref{lemma:H_x} in Appendix, we can write $\Delta h$ as follows.
Note that since the step size is infinitesimal {\small$\alpha \to 0$}, the perturbation $\hat\delta$ is mainly dominated by the term  {\small$[I+(I+\alpha H_x)+\cdots+(I+\alpha H_x)^{m-1}]g_x$}.
Thus, we ignore the error $\hat{\mathcal{R}}_2(\alpha)$ in Eq.~\eqref{supp_eqn:perturbation_2}.
\begin{equation}
\begin{small}
\label{eqn:delta_y}
\begin{aligned}
\Delta h&=W^T \hat\delta=\alpha W^T[I+(I+\alpha H_x)+\cdots+(I+\alpha H_x)^{m-1}]g_x\\
&=\alpha W^T[I+(I+\alpha WH_hW^T)+\cdots+(I+\alpha WH_hW^T)^{m-1}]g_x
\quad // \quad \text{according to Eq.~\eqref{eqn:H_x_H_h}}
\\
&=\alpha[W^T+W^T(I+\alpha WH_hW^T)+\cdots+W^T(I+\alpha WH_hW^T)^{m-1}]g_x.\\
\end{aligned}
\end{small}
\end{equation}

To simplify $\Delta h$, we apply the mathematical induction to prove that {\small$\forall t,  1\leq t \leq m, W^T(I+\alpha WH_h W^T)^t=(I+\alpha W^TWH_h)^t W^T$}.

\emph{Base case:}
When {\small$t=1$}, {\small$W^T(I+\alpha WH_hW^T)=(W^T+\alpha W^TWH_hW^T)=(I+\alpha W^TWH_h)W^T$}.

\emph{Inductive step:}
For $t>1$, assuming {\small$W^T(I+\alpha WH_hW^T)^{t-1}=(I+\alpha W^TWH_h)^{t-1}W^T$}, we have
\begin{equation}
\label{eqn:wTwH}
\begin{aligned}
    W^T(I+\alpha WH_hW^T)^{t}&=W^T(I+\alpha WH_hW^T)^{t-1}(I+\alpha WH_hW^T)\\
    &=(I+\alpha W^TWH_h)^{t-1}W^T(I+\alpha WH_hW^T)\\
    &=(I+\alpha W^TWH_h)^{t-1}(I+\alpha W^TWH_h)W^T\\
    &=(I+\alpha W^TWH_h)^{t}W^T
\end{aligned}
\end{equation}

\emph{Conclusion:} Since both the base case and the inductive step have been proven to be true, we obtain {\small$W^T(I+\alpha WH_hW^T)^t=(I+\alpha W^TWH_h)^tW^T$}.

In this way, we combine Eq.~\eqref{eqn:delta_y} and Eq.~\eqref{eqn:wTwH}.
The change of the intermediate-layer feature $h$ caused by the perturbation $\hat\delta$ can be represented as
\begin{equation}
\label{eqn:delta_y_2}
\Delta h=\alpha[I+(I+\alpha W^TWH_h)+\cdots+(I+\alpha W^TWH_h)^{m-1}]W^Tg_x.
\end{equation}

Multiply $(I+\alpha W^TWH_h)$ on the both sides of Eq.~\eqref{eqn:delta_y_2}, and we get
\begin{equation}
\label{eqn:delta_y_sum}
(I+\alpha W^TWH_h)\Delta h=\alpha[(I+\alpha W^TWH_h)+\cdots+(I+\alpha W^TWH_h)^{m}]W^Tg_x.
\end{equation}

Then, the difference between Eq.~\eqref{eqn:delta_y_sum} and Eq.~\eqref{eqn:delta_y_2} is
\begin{equation}
\label{eqn:delta_y_sum_minus}
\begin{aligned}
(I+\alpha W^TWH_h)\Delta h - \Delta h &= \alpha[(I+\alpha W^TWH_h)^{m}-I]W^Tg_x
\\
\Rightarrow \alpha W^TWH_h \Delta h &= \alpha[(I+\alpha W^TWH_h)^{m}-I]W^Tg_x
\\
\Rightarrow W^T W H_h \Delta h &= [(I+\alpha W^TWH_h)^{m}-I]W^Tg_x.
\end{aligned}
\end{equation}

Therefore, based on Eq.~\eqref{eqn:delta_y_sum_minus}, we have
\begin{equation}
\begin{aligned}
    \label{eqn:second_term_1}
    (H_h\Delta h)^T W^T W&=(W^T W H_h\Delta h)^T\\
    &=([(I+\alpha W^T W H_h)^{m}-I]W^Tg_x)^T\\
    &=g_x^T W [(I+\alpha W^T W H_h)^{m}-I]^T\\
    &=g_h^T W^T W [(I+\alpha W^TW H_h)^{m}-I]^T\\
    &=g_h^T W^T W (I+\alpha W^TW H_h)^{m}-g_h^T W^T W.
\end{aligned}
\end{equation}

Furthermore, the term {\small$g_h^T W^T W (I+\alpha W^T W H_h)^{m}$} in Eq.~\eqref{eqn:second_term_1} can be re-written as
\begin{equation}
\begin{small}
\label{eqn:lemma_7}
\begin{aligned}
g_h^T W^T W (I+\alpha H_h W^T W)^{m}
& =g_h^T W^T W  (I+\alpha H_h W^T W ) (I+\alpha H_h W^T W)^{m-1}\\
& =g_h^T (W^T W +\alpha W^T W H_h W^T W) (I+\alpha H_h W^T W)^{m-1}\\
& =g_h^T (I +\alpha W^T W H_h)W^T W (I+\alpha H_h W^T W)^{m-1}\\
& =g_h^T (I +\alpha W^T W H_h) W^T W  (I+\alpha H_h W^T W ) (I+\alpha H_h W^T W)^{m-2}\\
& =g_h^T (I +\alpha W^T W H_h) (W^T W +\alpha W^T W H_h W^T W) (I+\alpha H_h W^T W)^{m-2}\\
& =g_h^T (I +\alpha W^T W H_h)^{2} W^T W (I+\alpha H_h W^T W)^{m-2}\\
&\cdots\\
& =g_h^T (I +\alpha W^T W H_h)^{m} W^T W.
\end{aligned}
\end{small}
\end{equation}

Based on Lemma~\ref{lemma:H_x} in Appendix, the term {\small$g_h^T (I +\alpha W^T W H_h)^{m}$} in Eq.~\eqref{eqn:lemma_7} can be simplified as
\begin{equation}
\begin{small}
\label{eqn:lemma_7_1}
\begin{aligned}
g_h^T (I +\alpha W^T W H_h)^{m}
& =g_h^T  (I +\alpha W^T W H_h) (I +\alpha W^T W H_h)^{m-1}
\\
\\
& =g_h^T  (I +\alpha W^T W H_z\tilde{g}_h \tilde{g}_h^{T}) (I +\alpha W^T W H_h)^{m-1}
\quad // \quad \text{according to Eq.~\eqref{supp_eqn:hessian_lemma_1}}
\\
\\
& = (g_h^T +\alpha \; H_z \tilde{g}_h^T W^T W \tilde{g}_h \;g_h^{T}) (I +\alpha W^T W H_h)^{m-1}
\\
\\
& = (1 +\alpha \mathcal{B})g_h^{T} (I +\alpha W^T W H_y)^{m-1}
\quad // \quad \mathcal{B}=H_z\tilde{g}_h^T W^T W\tilde{g}_h\in\mathbb{R}
\\
\\
& =(1 +\alpha \mathcal{B})g_h^{T}  (I +\alpha W^T W H_h) (I +\alpha W^T W H_h)^{m-2}
\\
\\
& =(1 +\alpha \mathcal{B}) g_h^T  (I +\alpha W^T W H_z\tilde{g}_h \tilde{g}_h^{T}) (I +\alpha W^T W H_h)^{m-2}
\\
\\
& = (1 +\alpha \mathcal{B}) (g_h^T +\alpha \; H_z \tilde{g}_h^T W^T W \tilde{g}_h \;g_h^{T}) (I +\alpha W^T W H_h)^{m-2}
\\
\\
& = (1 +\alpha \mathcal{B})^{2} g_h^{T} (I +\alpha W^T W H_y)^{m-2}
\\
& \cdots \\
& = (1 +\alpha \mathcal{B})^{m}g_h^{T}.
\end{aligned}
\end{small}
\end{equation}

In this way, combining Eq.~\eqref{eqn:lemma_7_1} and Eq.~\eqref{eqn:lemma_7}, we get
\begin{equation}
\begin{small}
\label{eqn:lemma_7_2}
\begin{aligned}
g_h^T W^T W (I+\alpha H_h W^T W)^{m}
& = g_h^T (I +\alpha W^T W H_h)^{m} W^T W \\
& = (1 +\alpha \mathcal{B})^{m}g_h^{T} W^T W.
\end{aligned}
\end{small}
\end{equation}

Substitute Eq.~\eqref{eqn:lemma_7_2} back to Eq.~\eqref{eqn:second_term_1}, and we get
\begin{equation}
\begin{aligned}
    \label{eqn:second_term_2}
    (H_h\Delta h)^T W^T W&=g_h^T W^T W (I+\alpha W^T WH_h)^{m}-g_h^T W^T W\\
    &=(1+\alpha \mathcal{B})^mg_h^T W^T W-g_h^T W^T W\\
    &=[(1+\alpha \mathcal{B})^m-1]g_h^T W^T W,
\end{aligned}
\end{equation}
where {\small$\mathcal{B}=H_z\tilde{g}_h^T W^T W\tilde{g}_h\in\mathbb{R}$}.

According to Assumption~\ref{assumption:w} in Appendix, there exists {\small$(W^T W)^{-1}$}.
Hence, multiplying {\small$(W^T W)^{-1}$} on both sides of Eq.~\eqref{eqn:second_term_2}, we get
\begin{equation}
\begin{aligned}
    \label{eqn:second_term_3}
    (H_h\Delta h)^T&=[(1+\alpha \mathcal{B})^m-1]g_h^T.
\end{aligned}
\end{equation}

Since the adversarial perturbation $\hat\delta$ is crafted via the infinite-step attack with the infinitesimal step size,~\textit{i.e.,} $m\to +\infty$, we have
\begin{equation}
\label{eqn:B_infty}
\lim_{m\to +\infty}(1+\alpha  \mathcal{B})^m = e^{\alpha m  \mathcal{B}}=e^{\beta  \mathcal{B}}.
\end{equation}

Hence, combining Eq.~\eqref{supp_eqn:gxgy} and Eq.~\eqref{eqn:B_infty}, we get
\begin{equation}
\label{eqn:A_infty}
\begin{aligned}
\lim_{m\to +\infty}(1+\alpha  \mathcal{B})^m &= e^{\beta  \mathcal{B}}
\\
&= e^{\beta  H_z\tilde{g}_h^T W^T W\tilde{g}_h}
 \\
 &=e^{\beta H_z\Vert\tilde{g}_x\Vert^2} = e^{ \mathcal{A}},
\end{aligned}
\end{equation}
where {\small$\mathcal{A} =e^{\beta H_z\Vert\tilde{g}_x\Vert^2} \in\mathbb{R}$}.

Multiply {\small$H_x x$} to both side of Eq.~\eqref{eqn:second_term_3}, and then the first term {\small$H_x x(H_h\Delta h)^T$} in Eq.~\eqref{eqn:delta_g_w_expansion} can be written as
\begin{equation}
\begin{aligned}
\label{eqn:second_term_4}
 H_x x(H_h\Delta h)^T&=\lim_{m\to +\infty} [(1+\alpha \mathcal{B})^m-1] H_x x g_h^T\\
 &= (e^{ \mathcal{A}}-1) H_x x g_h^T.
\end{aligned}
\end{equation}

Then, let us focus on the second term {\small$H_x \hat \delta (g_h+H_h\Delta h)^T$} in Eq.~\eqref{eqn:delta_g_w_expansion}.
Based on Eq.~\eqref{supp_eqn:perturbation_approx} and  Lemma~\ref{lemma:H_x} in Appendix, the second term {\small$H_x \hat \delta (g_h+H_h\Delta h)^T$} can be re-written as follows.
Note that since the step size is infinitesimal {\small$\alpha \to 0$}, the perturbation $\hat\delta$ is mainly dominated by the term  {\small$[I+(I+\alpha H_x)+\cdots+(I+\alpha H_x)^{m-1}]g_x$}.
Thus, we ignore the error $\hat{\mathcal{R}}_2(\alpha)$ in Eq.~\eqref{supp_eqn:perturbation_2}.
\begin{equation}
\begin{small}
\label{eqn:Delta x g_y^T}
\begin{aligned}
&H_x \hat \delta (g_h+H_h\Delta h)^T
\\
&=H_x\alpha[I+(I+\alpha WH_h W^T)+\cdots+(I+\alpha W H_h W^T)^{m-1}]g_x (g_h+H_h\Delta h)^T
\\
&=H_x\alpha[I+(I+\alpha WH_h W^T)+\cdots+(I+\alpha W H_h W^T)^{m-1}]W  g_h (g_h+H_h\Delta h)^T.
\end{aligned}
\end{small}
\end{equation}

For simplicity, let {\small$S =I+(I+\alpha WH_h W^T)+\cdots+(I+\alpha WH_h W^T)^{m-1}$}.
Then, multiply {\small$(I+\alpha WH_h W^T)$} to both sides of $S$, and we get
\begin{equation}
\begin{small}
\label{eqn:delta_x_g_y_3}
\begin{aligned}
(I+\alpha WH_h W^T)S
&=(I+\alpha WH_h W^T)+\cdots+(I+\alpha WH_h W^T)^{m}\\
\Rightarrow (I+\alpha WH_h W^T)S - S
&=(I+\alpha WH_h W^T)^{m}-I
\\
\Rightarrow H_x\alpha S&=(I+\alpha WH_h W^T)^{m}-I.
\quad // \quad \text{according to Eq.~\eqref{eqn:H_x_H_h}}
\end{aligned}
\end{small}
\end{equation}

Substituting Eq.~\eqref{eqn:delta_x_g_y_3} back to Eq.~\eqref{eqn:Delta x g_y^T}, we have
\begin{equation}
\label{eqn:delta_x_g_y_4}
\begin{aligned}
H_x \hat\delta  (g_h+H_h\Delta h)^T
&=[(I+\alpha WH_h W^T)^{m}-I] W g_h (g_h+H_h\Delta h)^T.
\end{aligned}
\end{equation}

To simplify Eq.~\eqref{eqn:delta_x_g_y_4}, let us first consider the term {\small$(I+\alpha WH_h W^T)^{m}-I$}.
Specifically, we apply the mathematical induction to derive the term {\small$(I+\alpha WH_h W^T)^{m}-I$}, and get {\small$\forall t, 1\leq t \leq m, (I+\alpha WH_h W^T)^{t}-I  =\frac{1}{\mathcal{B}}[(1+\alpha \mathcal{B})^t-1] WH_h W^T$}, where {\small$\mathcal{B}=H_z\tilde{g}_h^T W^T W\tilde{g}_h\in\mathbb{R}$}.

\emph{Base case:} When $t=1$,
\begin{equation}
\label{eqn:first_term_2}
\begin{aligned}
(I+\alpha WH_h W^T)^{1}-I
&=\alpha WH_h W^T  \\
& =\frac{1}{\mathcal{B}}[(1+\alpha \mathcal{B})^{1}-1] WH_h W^T.
\end{aligned}
\end{equation}

\emph{Inductive step:} For $t>1$, assuming {\small$(I+\alpha WH_h W^T)^{t-1}-I = \frac{1}{ \mathcal{B}}[(1+\alpha  \mathcal{B})^{t-1}-1]WH_h W^T$}, we get
\begin{equation}
\begin{small}
\label{eqn:t_term_2}
\begin{aligned}
(I+\alpha WH_h W^T)^{t}-I
& = (I+\alpha  WH_h W^T)^{t-1}(I+\alpha  WH_h W^T)-I
\\
\\
& = (I+\alpha  WH_h W^T)^{t-1}+(I+\alpha  WH_h W^T)^{t-1} \alpha  WH_h W^T-I
\\
\\
& = \frac{1}{\mathcal{B}}[(1+\alpha \mathcal{B})^{t-1}-1] WH_h W^T
\\
&\quad+(I+\alpha  WH_h W^T)^{t-1} \alpha  WH_h W^T.
\end{aligned}
\end{small}
\end{equation}

Since {\small$(I+\alpha WH_h W^T)^{t-1}-I = \frac{1}{ \mathcal{B}}[(1+\alpha  \mathcal{B})^{t-1}-1]WH_h W^T$}, we obtain
{\small$(I+\alpha WH_h W^T)^{t-1} = I+\frac{1}{\mathcal{B}}[(1+\alpha \mathcal{B})^{t-1}-1]WH_h W^T$}.
In this way, based on Lemma~\ref{lemma:H_x} in Appendix, Eq.~\eqref{eqn:t_term_2} can be further simplified as
\begin{equation}
\begin{small}
\label{eqn:t_term_3}
\begin{aligned}
&(I+\alpha WH_h W^T)^{t}-I\\
& =  \frac{1}{\mathcal{B}}\bigg[(1+\alpha \mathcal{B})^{t-1}-1\bigg] WH_h W^T\\
&\quad + \alpha \bigg[I+\frac{1}{\mathcal{B}}[(1+\alpha \mathcal{B})^{t-1}-1]WH_h W^T \bigg] WH_h W^T\\
&\\
& =  \frac{1}{\mathcal{B}}\bigg[(1+\alpha \mathcal{B})^{t-1}-1\bigg] WH_h W^T\\
&\quad + \alpha \bigg[WH_h W^T +\frac{1}{\mathcal{B}}[(1+\alpha \mathcal{B})^{t-1}-1]WH_h W^T WH_h W^T \bigg]\\
&\\
& = \frac{1}{\mathcal{B}}\bigg[(1+\alpha \mathcal{B})^{t-1}-1\bigg] WH_h W^T\\
&\quad + \alpha \bigg[WH_h W^T+\frac{1}{\mathcal{B}}[(1+\alpha \mathcal{B})^{t-1}-1] W H_z\tilde{g}_h\tilde{g}_h^T
W^{T}W H_z\tilde{g}_h\tilde{g}_h^T W^{T} \bigg]
\\
&\\
& = \frac{1}{\mathcal{B}}\bigg[(1+\alpha \mathcal{B})^{t-1}-1\bigg] WH_h W^T\\
&\quad + \alpha \bigg[WH_h W^T+\frac{1}{\mathcal{B}}[(1+\alpha \mathcal{B})^{t-1}-1] \,  \mathcal{B} \,W H_z\tilde{g}_h\tilde{g}_h^T W^{T} \bigg]
\quad // \quad \mathcal{B} = H_z\tilde{g}_h^T W^T W\tilde{g}_h\in\mathbb{R}
\\
&\\
& = \frac{1}{\mathcal{B}}\bigg[(1+\alpha \mathcal{B})^{t-1}-1\bigg] WH_h W^T
+ \alpha \bigg[WH_h W^T+ [(1+\alpha \mathcal{B})^{t-1}-1]  WH_h W^T \bigg]
\\
& = \frac{1}{\mathcal{B}} \bigg[(1+\alpha \mathcal{B})^{t-1}-1
+\alpha \mathcal{B} (1+\alpha \mathcal{B})^{t-1}  \bigg]WH_h W^T
\\
& =\frac{1}{\mathcal{B}}\bigg[(1+\alpha \mathcal{B})^{t}-1\bigg] WH_h W^T.
\end{aligned}
\end{small}
\end{equation}

\emph{Conclusion:} Since both the base case and the inductive step have been proven, we have
\begin{equation}
\label{eqn:t_term_final}
    (I+\alpha WH_h W^T)^{t}-I = \frac{1}{\mathcal{B}}\bigg[(1+\alpha \mathcal{B})^{t}-1\bigg] WH_h W^T,
\end{equation}
where $\mathcal{B}=H_z\tilde{g}_h^T W^T W\tilde{g}_h\in\mathbb{R}$.

Substituting Eq.~\eqref{eqn:t_term_final} back to Eq.~\eqref{eqn:delta_x_g_y_4}, we have
\begin{equation}
\label{eqn:delta_x_g_y_5}
\begin{aligned}
H_x\hat\delta (g_h+H_h\Delta h)^T
&=\frac{1}{\mathcal{B}}[(1+\alpha \mathcal{B})^{m}-1]  WH_h W^T W g_h (g_h+H_h\Delta h)^T\\
&=\frac{1}{\mathcal{B}}[(1+\alpha \mathcal{B})^{m}-1] H_x W g_h (g_h+H_h\Delta h)^T\\
&=\frac{1}{\mathcal{B}}[(1+\alpha \mathcal{B})^{m}-1] H_x g_x (g_h+H_h\Delta h)^T
\quad // \quad \text{ According to Eq.~\eqref{supp_eqn:gxgy}}\\
&=\frac{1}{\mathcal{B}}[(1+\alpha \mathcal{B})^{m}-1] H_x g_x g_h^T+\frac{1}{\mathcal{B}}[(1+\alpha \mathcal{B})^{m}-1] H_x g_x(H_h\Delta h)^T.
\end{aligned}
\end{equation}

Based on Eq.~\eqref{eqn:second_term_3}, the term {\small$H_x g_x (H_h\Delta h)^T$} can be represented as
\begin{equation}
    \label{eqn:Hh_delta_h1}
    H_x g_x (H_h\Delta h)^T=[(1+\alpha \mathcal{B})^m-1]H_x g_x g_h^T.
\end{equation}

Combining Eq.~\eqref{eqn:delta_x_g_y_5} and Eq.~\eqref{eqn:Hh_delta_h1}, we have
\begin{equation}
\label{eqn:delta_x_g_y_6}
\begin{aligned}
H_x \hat\delta (g_h+H_h\Delta h)^T
&=\frac{1}{\mathcal{B}}[(1+\alpha \mathcal{B})^{m}-1] H_x g_x g_h^T+\frac{1}{\mathcal{B}}[(1+\alpha \mathcal{B})^{m}-1] H_x g_x(H_h\Delta h)^T\\
&=\frac{1}{\mathcal{B}}[(1+\alpha \mathcal{B})^{m}-1] H_x g_x g_h^T+\frac{1}{\mathcal{B}}[(1+\alpha \mathcal{B})^{m}-1]^2 H_x g_xg_h^T\\
&=\frac{1}{\mathcal{B}}(1+\alpha \mathcal{B})^m[(1+\alpha \mathcal{B})^m-1]H_xg_xg_h^T.
\end{aligned}
\end{equation}

Based on Eq.~\eqref{eqn:A_infty}, the second term {\small$H_x \hat \delta (g_h+H_h\Delta h)^T$} in Eq.~\eqref{eqn:delta_g_w_expansion} can be written as follows, when the adversarial perturbation {\small$\hat\delta$} is generated via the infinite-step attack, {\small$m\to +\infty$}.
Here, {\small$\mathcal{A} =e^{\beta H_z\Vert\tilde{g}_x\Vert^2} \in \mathbb{R}$}, and  {\small$\mathcal{B}=H_z\tilde{g}_h^T W^T W\tilde{g}_h\in\mathbb{R}$}.
\begin{equation}
\label{eqn:delta_x_g_y_final}
\begin{aligned}
H_x\hat\delta (g_h+H_h\Delta h)^T
&=\lim_{m\to +\infty} \frac{1}{\mathcal{B}}(1+\alpha \mathcal{B})^m[(1+\alpha \mathcal{B})^m-1]H_xg_xg_h^T\\
&=\frac{1}{\mathcal{B}}(e^{2\beta \mathcal{B}}-e^{\beta \mathcal{B}})H_xg_x g_h^T\\
&=\frac{1}{H_z\Vert\tilde{g}_x\Vert^2}(e^{2\beta H_z\Vert \tilde{g}_x\Vert^2}-e^{\beta H_z\Vert \tilde{g}_x\Vert^2})H_xg_x g_h^T\\
&=\frac{1}{H_z\Vert\tilde{g}_x\Vert^2}(e^{2\mathcal{A}}-e^{\mathcal{A}})H_xg_x g_h^T.
\end{aligned}
\end{equation}

In this way, combining Eq.~\eqref{eqn:second_term_4} and Eq.~\eqref{eqn:delta_x_g_y_final},  Eq.~\eqref{eqn:delta_g_w_expansion} can be represented as
\begin{equation}
\label{eqn:delta_g_w_expansion_final }
\begin{aligned}
H_x \; \Delta g_{W}
& =H_x \; x(H_h \Delta h)^T+ H_x \; \hat\delta (g_h+H_h\Delta h)^T\\
&=(e^{\mathcal{A}}-1)H_x x g_h^T + \frac{1}{H_z\Vert\tilde{g}_x\Vert^2}(e^{2\mathcal{A}}-e^{\mathcal{A}})H_xg_x g_h^T.
\end{aligned}
\end{equation}

Thus, Lemma~\ref{lemma:H_x_g_w} in Appendix is proven.
\end{proof}
\subsection{Proof of Theorem~\ref{theorem2}}
\textbf{Theorem 3.}
\textit{Based on Assumptions~\ref{assumption:relu} and~\ref{assumption:sigmoid}, the effect of the adversarial perturbation {\small$\hat\delta$} on the change of the gradient {\small$\tilde{g}_x = \frac{\partial z(x)}{\partial x}$} is formulated as follows.
{\small$\Delta \tilde{g}_x = - \eta \Delta g_{W} \tilde{g}_h$} represents the additional effects of adversarial training on changing {\small$\tilde{g}_x$}, because
adversarial training makes an additional change {\small$- \eta \Delta g_{W}$} on {\small${W}$}\footnote[1]{It is because adversarial training changes {\small${W}$} by {\small$- \eta g_{W}^{\text{(adv)}}$}, and vanilla training changes {\small${W}$} by  {\small$- \eta g_{W}$}, {\small$\eta>0$}.}.
In this way, {\small$\tilde{g}_x ^{T} \Delta \tilde{g}_x$} measures the significance of such additional changes along the direction of the gradient {\small$\tilde{g}_x$}.}

\begin{small}
\begin{equation}
\label{eqn:adv_train}
\tilde{g}_x ^{T} \Delta \tilde{g}_x
=-\eta\tilde{g}_x ^{T} \Delta g_{W} \tilde{g}_h
= (e^{\mathcal{A}} -1) \tilde{g}_x ^{T}  \Delta \tilde{g}_x^{\text{(ori)}}
- \frac{ \eta g_z^{2} \;\Vert \tilde{g}_h \Vert ^{2} }{H_z}  (e^{2\mathcal{A}} -e^{\mathcal{A}}),
\end{equation}\end{small}
\textit{where {\small$\tilde{g}_h = \frac{\partial z(x)}{\partial h}$}, {\small$\mathcal{A} = \beta H_z \Vert \tilde{g}_x \Vert ^{2} \in \mathbb{R}$}, and $\eta$ denotes the learning rate to update the weight.
Considering the footnote\footnotemark[1], {\small$\Delta \tilde{g}_x^{\text{(ori)}} =- \eta g_{W} \tilde{g}_h$} measures the effects of vanilla training on changing {\small$\tilde{g}_x$} in the current back-propagation.
}

\begin{proof}

Based on Lemma~\ref{lemma:H_x} in Appendix and Lemma~\ref{lemma:H_x_g_w} in Appendix, we have
\begin{equation}
\begin{small}
\begin{aligned}
\label{eqn:H_x_change_gradient_w}
H_x\Delta g_W &=(e^{\mathcal{A}}-1)H_x x g_h^T+\frac{1}{H_z\Vert\tilde{g}_x\Vert^2}(e^{2\mathcal{A}}-e^{\mathcal{A}})H_xg_x g_h^T
\\
\Rightarrow \; H_z\tilde{g}_x\tilde{g}_x^T\Delta g_W&=(e^{\mathcal{A}}-1)H_z\tilde{g}_x\tilde{g}_x^T xg_h^T
+\frac{1}{H_z\Vert\tilde{g}_x\Vert^2}(e^{2 \mathcal{A}}-e^{\mathcal{A}})H_z\tilde{g}_x\tilde{g}_x^Tg_x g_h^T
\\
\Rightarrow \;  \tilde{g}_x\tilde{g}_x^T\Delta g_W&=(e^{\mathcal{A}}-1)\tilde{g}_x\tilde{g}_x^T xg_h^T
+\frac{1}{H_z\Vert\tilde{g}_x\Vert^2}(e^{2\mathcal{A}}-e^{\mathcal{A}})\tilde{g}_x\tilde{g}_x^Tg_x g_h^T.
\quad // \quad H_z\in \mathbb{R}
\end{aligned}
\end{small}
\end{equation}

Multiply {\small$\tilde{g}_x^{T}$} and {\small$\tilde{g}_h$} on both sides of Eq.~\eqref{eqn:H_x_change_gradient_w}, and we get
\begin{equation}
\begin{small}
\label{eqn:theorem_2}
\begin{aligned}
\tilde{g}_x^T\tilde{g}_x\tilde{g}_x^T\Delta g_W\tilde{g}_h
&=(e^{\mathcal{A}}-1)\tilde{g}_x^T\tilde{g}_x\tilde{g}_x^T xg_h^T\tilde{g}_h
+\frac{1}{H_z\Vert\tilde{g}_x\Vert^2}(e^{2\mathcal{A}}-e^{\mathcal{A}})\tilde{g}_x^T\tilde{g}_x\tilde{g}_x^Tg_x g_h^T\tilde{g}_h
\\
\Rightarrow \tilde{g}_x^T\tilde{g}_x\tilde{g}_x^T\Delta g_W\tilde{g}_h
&=(e^{\mathcal{A}}-1)\tilde{g}_x^T\tilde{g}_x\tilde{g}_x^T g_W\tilde{g}_h
+\frac{g_z^2}{H_z\Vert\tilde{g}_x\Vert^2}(e^{2\mathcal{A}}-e^{\mathcal{A}})\tilde{g}_x^T\tilde{g}_x\tilde{g}_x^T\tilde{g}_x \tilde{g}_h^T\tilde{g}_h
\\
\Rightarrow \tilde{g}_x^T\Delta g_W\tilde{g}_h
&=(e^{\mathcal{A}}-1)\tilde{g}_x^T g_W\tilde{g}_h
+\frac{g_z^2}{H_z}(e^{2\mathcal{A}}-e^{\mathcal{A}}) \tilde{g}_h^T\tilde{g}_h
\\
\Rightarrow \tilde{g}_x^T\Delta g_W\tilde{g}_h
&=(e^{\mathcal{A}}-1)\tilde{g}_x^T g_W\tilde{g}_h
+\frac{g_z^2 \Vert \tilde{g}_h\Vert^2}{H_z}(e^{2\mathcal{A}}-e^{\mathcal{A}})
\end{aligned}
\end{small}
\end{equation}

Let {\small$\Delta \tilde{g}_x = - \eta \Delta g_{W} \tilde{g}_h$} represent the additional effects of adversarial training on changing {\small$\tilde{g}_x$}, because
adversarial training makes an additional change {\small$- \eta \Delta g_{W}$} on {\small${W}$}\footnotemark[1].
Let {\small$\Delta \tilde{g}_x^{\text{(ori)}} =- \eta g_{W} \tilde{g}_h$} reflect the effects of vanilla training on changing {\small$\tilde{g}_x$} in the current back-propagation, considering the footnote\footnotemark[1].
In this way, Eq.~\eqref{eqn:theorem_2} can be re-written as
\begin{equation}
\label{eqn:theorem_2_final}
\begin{aligned}
 \tilde{g}_x^T (- \eta) \Delta g_W\tilde{g}_h
&=(e^{\mathcal{A}}-1)\tilde{g}_x^T (- \eta) g_W\tilde{g}_h
- \frac{ \eta g_z^{2} \;\Vert \tilde{g}_h \Vert ^{2} }{H_z}  (e^{2\mathcal{A}} -e^{\mathcal{A}})
\\
\Rightarrow \tilde{g}_x ^{T} \Delta \tilde{g}_x
&=(e^{\mathcal{A}} -1) \tilde{g}_x ^{T}  \Delta \tilde{g}_x^{\text{(ori)}}
- \frac{ \eta g_z^{2} \;\Vert \tilde{g}_h \Vert ^{2} }{H_z}  (e^{2\mathcal{A}} -e^{\mathcal{A}}).
\end{aligned}
\end{equation}

Thus, Theorem~\ref{theorem2} is proven.
\end{proof}

\section{Proof of Theorem~\ref{theorem3}}
In this section, we prove Theorem~\ref{theorem3} in Section~\ref{sec:3.2} of the main paper, which explains training effects of the adversarial perturbation $\hat\delta$ in Theorem~\ref{theorem:adv_per_2} on adversarial training.

\textbf{Theorem 4.}
\textit{Based on Assumptions~\ref{assumption:relu} and~\ref{assumption:sigmoid}, we derived the following equation~\textit{w.r.t.} adversarial training based on perturbations {\small$\hat\delta$} in Theorem~\ref{theorem:adv_per_2}.
Considering the footnote\footnotemark[1], {\small$\Delta \tilde{g}_x^{\text{(adv)}} =-\eta g_{W}^{\text{(adv)}} \tilde{g}_h$} reflects effects of adversarial training on changing the gradient {\small$\tilde{g}_x$}.
In this way, {\small$\tilde{g}_x ^{T}  \Delta \tilde{g}_x^{\text{(adv)}}$} represents the significance of such effects along the direction of the gradient {\small$\tilde{g}_x$}.}

\begin{small}
\begin{equation}
\label{eqn:adv_train_1}
\tilde{g}_x ^{T}  \Delta \tilde{g}_x^{\text{(adv)}}
= -\eta\tilde{g}_x ^{T} g_{W}^{\text{(adv)}} \tilde{g}_h
= e^{ \mathcal{A}} \tilde{g}_x ^{T} \Delta \tilde{g}_x^{\text{(ori)}}
- \frac{ \eta g_z^{2}  (e^{2 \mathcal{A}} -e^{ \mathcal{A}})}{H_z}  \Vert \tilde{g}_h \Vert ^{2}.
\end{equation}
\end{small}


\begin{proof}
Based on Eq.~\eqref{eqn:delta_g_w_pre}, {\small$\Delta g_{W} = g^{\text{(adv)}}_{W}- g_{W}$}, we add {\small$\tilde{g}_x^T g_W\tilde{g}_h$} on both sides of Eq.~\eqref{eqn:theorem_2}.
\begin{equation}
\begin{small}
\label{eqn:theorem_3}
\begin{aligned}
 \tilde{g}_x^T\Delta g_W\tilde{g}_h + \tilde{g}_x^T g_W\tilde{g}_h
&=(e^{\mathcal{A}}-1)\tilde{g}_x^T g_W\tilde{g}_h + \tilde{g}_x^T g_W\tilde{g}_h
+\frac{g_z^2 \Vert \tilde{g}_h\Vert^2}{H_z}(e^{2\mathcal{A}}-e^{\mathcal{A}})
\\
\Rightarrow  \tilde{g}_x^T (\Delta g_W + g_W) \tilde{g}_h
 &=e^{\mathcal{A}}\tilde{g}_x^T g_W\tilde{g}_h
+\frac{g_z^2 \Vert \tilde{g}_h\Vert^2}{H_z}(e^{2\mathcal{A}}-e^{\mathcal{A}})
\\
\Rightarrow  \tilde{g}_x^T  g^{\text{(adv)}}_{W} \tilde{g}_h
 &=e^{\mathcal{A}}\tilde{g}_x^T g_W\tilde{g}_h
+\frac{g_z^2 \Vert \tilde{g}_h\Vert^2}{H_z}(e^{2\mathcal{A}}-e^{\mathcal{A}}).
\end{aligned}
\end{small}
\end{equation}

Let {\small$\Delta \tilde{g}_x^{\text{(adv)}} =-\eta g_{W}^{\text{(adv)}} \tilde{g}_h$} represent effects of adversarial training on changing the gradient {\small$\tilde{g}_x$}.
Then, Eq.~\eqref{eqn:theorem_3} can be simplified as
\begin{equation}
\label{eqn:theorem_2_final}
\begin{aligned}
 \tilde{g}_x^T (- \eta) g^{\text{(adv)}}_{W}\tilde{g}_h
&=e^{\mathcal{A}}\tilde{g}_x^T (- \eta) g_W\tilde{g}_h
- \frac{ \eta g_z^{2} \;\Vert \tilde{g}_h \Vert ^{2} }{H_z}  (e^{2\mathcal{A}} -e^{\mathcal{A}})
\\
\Rightarrow \tilde{g}_x ^{T} \Delta \tilde{g}_x^{\text{(adv)}}
&=e^{\mathcal{A}}  \tilde{g}_x ^{T}  \Delta \tilde{g}_x^{\text{(ori)}}
- \frac{ \eta g_z^{2} \;\Vert \tilde{g}_h \Vert ^{2} }{H_z}  (e^{2\mathcal{A}} -e^{\mathcal{A}}).
\end{aligned}
\end{equation}

Thus, Theorem~\ref{theorem3} is proven.
\end{proof}


\section{Proof of Theorem~\ref{theorem4}}

In this section, we prove Theorem~\ref{theorem4} in Section~\ref{sec:3.2}  of the main paper, which approximately explains adversarial training based on perturbations of the $\ell_2$ attack and the $\ell_\infty$ attack.

Specifically, if we use vanilla training to fine-tune the network on the original input sample $x$ for a single step, then the gradient of the loss~\textit{w.r.t.} the weight {\small$W$} is given as {\small$g_{W}= \frac{\partial}{\partial W} L(f(x),y)$}.
In comparison, if we train the network on the adversarial example {\small$x+\hat\delta^{\text{(norm)}}$} for a single step, then we will get the gradient {\small$g^{\text{(adv,norm)}}_{W} =\frac{\partial}{\partial W}  L(f(x+\hat\delta^{\text{(norm)}}),y)$}.
In this way, {\small $\Delta g_{W}^{\text{(norm)}} = g^{\text{(adv,norm)}}_{W}- g_{W}$} represents additional effects on the gradient brought by adversarial training, when we use the normalized perturbation {\small$\hat\delta^{\text{(norm)}}$} in Corollary~\ref{theorem:adv_per_norm} (related to the $\ell_2$ attack and the $\ell_\infty$ attack).
\begin{equation}
\label{eqn:delta_g_w_norm_pre}
\begin{small}
\begin{aligned}
\Delta g_{W}^{\text{(norm)}} &= g^{\text{(adv,norm)}}_{W}- g_{W}= \frac{\partial}{\partial W} L(f(x+\hat\delta^{\text{(norm)}}),y) - \frac{\partial}{\partial W}  L(f(x),y).\\
& = x(H_h \Delta h^\text{(norm)})^T+\hat\delta^\text{(norm)} (g_h+H_h\Delta h^\text{(norm)})^T\\
& =  x(\frac{C}{\Vert \hat\delta \Vert} H_h  \Delta h)^T+ \frac{C \cdot \hat\delta}{\Vert \hat\delta \Vert}  (g_h+ \frac{C}{\Vert \hat\delta \Vert} H_h\Delta h)^T,
\end{aligned}
\end{small}
\end{equation}
where {\small$\Delta h^\text{(norm)}=W^T\hat\delta^{\text{(norm)}} = \frac{C}{\Vert \hat\delta \Vert} W^T\hat\delta = \frac{C}{\Vert \hat\delta \Vert}  \Delta h$} denotes the change of the intermediate-layer feature $h$ caused by the perturbation {\small$\hat\delta^{\text{(norm)}}$}.
Here, {\small$W=W_{j}^{T}\Sigma_{j-1}\cdots\Sigma_{2}W_2^{T}\Sigma_{1}W_1^{T}$}.
It is because, according to Assumption 1 in the main paper, we simplify our research into an idealized adversarial attack, whose adversarial perturbation does not significantly change gating states in gating layers.
In this way, to simplify the proof, we can roughly represent the output feature $h$ of the $j$-th layer as {\small$h\approx W^{T} x +b$}.

\begin{proof}

According to Eq.~\eqref{eqn:gradient_of_w}, {\small$g_W =xg_h^T$}, the gradient {\small$g^{\text{(adv,norm)}}_{W} = \frac{\partial}{\partial W}  L(f(x+\hat\delta^{\text{(norm)}}),y)$} can be re-written as follows, where
{\small$g_{h+\Delta h^\text{(norm)}} =  \frac{\partial}{\partial h+\Delta h^\text{(norm)}}  L(f(x+\hat\delta^{\text{(norm)}}),y)$}.
\begin{equation}
\begin{aligned}
\label{eqn:gradient_of_w_adv_norm}
g^{\text{(adv,norm)}}_{W}=(x+\hat\delta^\text{(norm)})(g_{h+\Delta h^\text{(norm)}})^T.
\end{aligned}
\end{equation}

We further use the first-order Taylor expansion to decompose the gradient {\small$g_{h+\Delta h^\text{(norm)}}$}, where {\small$R^{\text{(grad)}}_2(\Delta h^\text{(norm)})$} denotes the terms no less than the second order.
\begin{equation}
\label{eqn:gradient_of_h_adv_norm}
g_{h+\Delta h^\text{(norm)}} =g_h+H_h \Delta h^\text{(norm)} + R^{\text{(grad)}}_2(\Delta h^\text{(norm)}).
\end{equation}

Substituting Eq.~\eqref{eqn:gradient_of_h_adv_norm} back to Eq.~\eqref{eqn:gradient_of_w_adv_norm}, the gradient {\small$g^{\text{(adv,norm)}}_{W}$} can be represented as
\begin{equation}
\begin{aligned}
\label{eqn:gradient_of_w_adv_2_norm}
g^{\text{(adv,norm)}}_{W}=(x+\hat\delta^\text{(norm)}) \bigg(g_h+H_h\Delta h^\text{(norm)} + R^{\text{(grad)}}_2(\Delta h^\text{(norm)}) \bigg)^T.
\end{aligned}
\end{equation}

Thus, the additional effects of adversarial training on the gradient can be written as follows.
\begin{equation}
\begin{small}
\begin{aligned}
\label{eqn:change_gradient_w_norm}
\Delta g_{W}^{\text{(norm)}} &=g^{\text{(adv,norm)}}_{W}-g_W
\\
&= x(H_h\Delta h^\text{(norm)})^T+\hat\delta^\text{(norm)} (g_h+H_h\Delta h^\text{(norm)})^T +
(x+\hat\delta^\text{(norm)}) \bigg( R^{\text{(grad)}}_2(\Delta h^\text{(norm)}) \bigg)^T
\\
&\approx x(H_h\Delta h^\text{(norm)})^T+\hat\delta^\text{(norm)} (g_h+H_h\Delta h^\text{(norm)})^T
\\
&=x(\frac{C}{\Vert \hat\delta \Vert} H_h  \Delta h)^T
+ \frac{C \cdot \hat\delta}{\Vert \hat\delta \Vert}  (g_h+ \frac{C}{\Vert \hat\delta \Vert} H_h\Delta h)^T
\end{aligned}
\end{small}
\end{equation}
\end{proof}

\begin{lemma}[in Appendix]
\label{lemma:H_x_g_w_norm}
Let {\small$\tilde{g}_x = \frac{\partial}{\partial x}  z(x)$} denote the gradient of the network output $z$~\textit{w.r.t} the input sample $x$, and {\small$\mathcal{A} =\beta H_z\Vert\tilde{g}_x\Vert^2 \in \mathbb{R} $}.
Then, we have

\begin{small}
\begin{equation}
H_x \; \Delta g_{W}^{\text{(norm)}} =\frac{C}{\Vert \hat\delta \Vert} (e^{\mathcal{A}}-1)H_x xg_h^T
+ \frac{C}{ \Vert \hat\delta \Vert H_z\Vert\tilde{g}_x\Vert^2} (e^{\mathcal{A}}-1) \big[ 1+ \frac{C}{\Vert \hat\delta \Vert} (e^{\mathcal{A}}-1) \big] H_xg_x g_h^T.
\end{equation}
\end{small}
\end{lemma}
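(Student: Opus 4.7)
The plan is to exploit the fact that the normalized perturbation $\hat\delta^{\text{(norm)}} = \frac{C}{\Vert\hat\delta\Vert}\hat\delta$ is just a scalar multiple of $\hat\delta$, so that every intermediate result already established in the proof of Lemma~\ref{lemma:H_x_g_w} in Appendix can be reused after inserting appropriate factors of $\frac{C}{\Vert\hat\delta\Vert}$. Concretely, since $\Delta h^{\text{(norm)}} = W^T \hat\delta^{\text{(norm)}} = \frac{C}{\Vert\hat\delta\Vert}\Delta h$, the expansion of $\Delta g_W^{\text{(norm)}}$ in Eq.~\eqref{eqn:delta_g_w_norm_pre} can be pre-multiplied by $H_x$ and split into three elementary pieces:
\begin{equation*}
H_x\Delta g_W^{\text{(norm)}} = \tfrac{C}{\Vert\hat\delta\Vert} H_x x (H_h\Delta h)^T + \tfrac{C}{\Vert\hat\delta\Vert} H_x\hat\delta\, g_h^T + \tfrac{C^2}{\Vert\hat\delta\Vert^2} H_x\hat\delta (H_h\Delta h)^T.
\end{equation*}

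First, I would recall from Eq.~\eqref{eqn:second_term_3} in the proof of Lemma~\ref{lemma:H_x_g_w} that, taking $m\to+\infty$ and using Eq.~\eqref{eqn:A_infty}, we have $(H_h\Delta h)^T = (e^{\mathcal{A}}-1) g_h^T$. This immediately handles the first piece, yielding $\frac{C}{\Vert\hat\delta\Vert}(e^{\mathcal{A}}-1) H_x x g_h^T$, which matches the first term on the right-hand side of the stated identity. Next, I would isolate the quantity $H_x\hat\delta$ from the intermediate steps used in Eq.~\eqref{eqn:delta_x_g_y_4} and Eq.~\eqref{eqn:t_term_final}: using $[(I+\alpha WH_hW^T)^m - I] = \frac{1}{\mathcal{B}}[(1+\alpha\mathcal{B})^m-1] WH_hW^T$ applied to $Wg_h$, and noting that $WH_hW^T\cdot Wg_h = H_x g_x$ (via Eq.~\eqref{supp_eqn:gxgy} and Lemma~\ref{lemma:H_x} in Appendix), the infinite-step limit gives
\begin{equation*}
H_x\hat\delta = \tfrac{1}{H_z\Vert\tilde{g}_x\Vert^2}(e^{\mathcal{A}}-1) H_x g_x.
\end{equation*}

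Substituting this into the second piece yields $\frac{C}{\Vert\hat\delta\Vert H_z\Vert\tilde{g}_x\Vert^2}(e^{\mathcal{A}}-1) H_x g_x g_h^T$; combining with $(H_h\Delta h)^T = (e^{\mathcal{A}}-1) g_h^T$, the third piece becomes $\frac{C^2}{\Vert\hat\delta\Vert^2 H_z\Vert\tilde{g}_x\Vert^2}(e^{\mathcal{A}}-1)^2 H_x g_x g_h^T$. Adding the second and third pieces and factoring $\frac{C(e^{\mathcal{A}}-1)}{\Vert\hat\delta\Vert H_z\Vert\tilde{g}_x\Vert^2} H_x g_x g_h^T$ produces the bracketed factor $\bigl[1 + \frac{C}{\Vert\hat\delta\Vert}(e^{\mathcal{A}}-1)\bigr]$, which is exactly the second term in the stated expression. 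Collecting the two parts completes the identity.

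The main obstacle is essentially bookkeeping rather than any new argument: every nontrivial cancellation (the algebra that turns a telescoping geometric sum of $(I+\alpha WH_hW^T)$ powers into $\frac{1}{\mathcal{B}}[(1+\alpha\mathcal{B})^m-1]WH_hW^T$, and the passage to the continuous limit giving $e^{\mathcal{A}}$) has already been done in the proof of Lemma~\ref{lemma:H_x_g_w} in Appendix. The only subtlety is making sure that the factor $\frac{C}{\Vert\hat\delta\Vert}$ is applied once in the first and second pieces and twice (as $\frac{C^2}{\Vert\hat\delta\Vert^2}$) in the third piece, so that after factoring the coefficient of $H_x g_x g_h^T$ has the precise shape $\frac{C}{\Vert\hat\delta\Vert H_z\Vert\tilde{g}_x\Vert^2}(e^{\mathcal{A}}-1)\bigl[1+\frac{C}{\Vert\hat\delta\Vert}(e^{\mathcal{A}}-1)\bigr]$ claimed in the lemma.
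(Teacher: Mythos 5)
Your proposal is correct and follows essentially the same route as the paper: both reduce the claim to the two identities $(H_h\Delta h)^T=(e^{\mathcal{A}}-1)g_h^T$ and $H_x\hat\delta=\frac{1}{H_z\Vert\tilde{g}_x\Vert^2}(e^{\mathcal{A}}-1)H_xg_x$ already established in the proof of Lemma~\ref{lemma:H_x_g_w}, and insert the scalar factor $C/\Vert\hat\delta\Vert$ accordingly. Splitting the second term into two pieces rather than keeping $(g_h+\frac{C}{\Vert\hat\delta\Vert}H_h\Delta h)^T$ together is only a cosmetic rearrangement of the paper's argument.
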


\begin{proof}
To prove Lemma~\ref{lemma:H_x_g_w_norm} in Appendix, we multiply {\small$H_x$} on both sides of Eq.~\eqref{eqn:delta_g_w_norm_pre}
\begin{equation}
\label{eqn:delta_g_w_expansion_norm}
\begin{small}
\begin{aligned}
H_x \; \Delta g_{W}^{\text{(norm)}} &= H_x \; (g^{\text{(adv,norm)}}_{W} - g_{W})\\
 &=H_x \; x(\frac{C}{\Vert \hat\delta \Vert} H_h  \Delta h)^T
+ H_x \; \frac{C \cdot \hat\delta}{\Vert \hat\delta \Vert}  (g_h+ \frac{C}{\Vert \hat\delta \Vert} H_h\Delta h)^T
\end{aligned}
\end{small}
\end{equation}

Let us first focus on the first term {\small$H_x \; x(\frac{C}{\Vert \hat\delta \Vert} H_h  \Delta h)^T$} in Eq.~\eqref{eqn:delta_g_w_expansion_norm}.
Based on Eq.~\eqref{eqn:second_term_4}, {\small$H_x x(H_h \Delta h)^T= (e^{ \mathcal{A}}-1) H_x x g_h^T$},
we have
\begin{equation}
\begin{aligned}
\label{eqn:first_term_norm}
H_x x(\frac{C}{\Vert \hat\delta \Vert} H_h  \Delta h)^T
 &= \frac{C}{\Vert \hat\delta \Vert} (e^{ \mathcal{A}}-1) H_x x g_h^T.
\end{aligned}
\end{equation}

Then, let us focus on the second term {\small$ \frac{C}{\Vert \hat\delta \Vert} H_x \hat \delta (g_h+\frac{C}{\Vert \hat\delta \Vert} H_h\Delta h)^T$} in Eq.~\eqref{eqn:delta_g_w_expansion_norm}.
Based on Eq.~\eqref{supp_eqn:perturbation_approx} and  Lemma~\ref{lemma:H_x} in Appendix, the second term {\small$ \frac{C}{\Vert \hat\delta \Vert} H_x \hat \delta (g_h+\frac{C}{\Vert \hat\delta \Vert} H_h\Delta h)^T$} can be re-written as follows.
Note that since the step size is infinitesimal {\small$\alpha \to 0$}, the perturbation $\hat\delta$ is mainly dominated by the term  {\small$[I+(I+\alpha H_x)+\cdots+(I+\alpha H_x)^{m-1}]g_x$}.
Thus, we ignore the error $\hat{\mathcal{R}}_2(\alpha)$ in Eq.~\eqref{supp_eqn:perturbation_2}.
\begin{equation}
\begin{small}
\label{eqn:Delta x g_y^T_norm}
\begin{aligned}
&\frac{C}{\Vert \hat\delta \Vert} H_x \hat \delta (g_h+\frac{C}{\Vert \hat\delta \Vert} H_h\Delta h)^T
\\
&= \frac{C}{\Vert \hat\delta \Vert} H_x\alpha[I+(I+\alpha WH_h W^T)+\cdots+(I+\alpha W H_h W^T)^{m-1}]g_x (g_h+ \frac{C}{\Vert \hat\delta \Vert} H_h\Delta h)^T
\\
&=\frac{C}{\Vert \hat\delta \Vert} H_x\alpha[I+(I+\alpha WH_h W^T)+\cdots+(I+\alpha W H_h W^T)^{m-1}]W  g_h (g_h+ \frac{C}{\Vert \hat\delta \Vert}H_h\Delta h)^T.
\end{aligned}
\end{small}
\end{equation}

For simplicity, let {\small$S =I+(I+\alpha WH_h W^T)+\cdots+(I+\alpha WH_h W^T)^{m-1}$}.
According to Eq.~\eqref{eqn:delta_x_g_y_3}, we have proven {\small$H_x\alpha S =(I+\alpha WH_h W^T)^{m}-I$}.
In this way, Eq.~\eqref{eqn:Delta x g_y^T_norm} can be further simplified as
\begin{equation}
\label{eqn:delta_x_g_y_norm}
\begin{small}
\begin{aligned}
\frac{C}{\Vert \hat\delta \Vert} H_x \hat \delta (g_h+\frac{C}{\Vert \hat\delta \Vert} H_h\Delta h)^T
&= \frac{C}{\Vert \hat\delta \Vert} \big[(I+\alpha WH_h W^T)^{m}-I\big] W g_h (g_h+ \frac{C}{\Vert \hat\delta \Vert} H_h\Delta h)^T.
\end{aligned}
\end{small}
\end{equation}

Moreover, we have proven {\small$(I+\alpha WH_h W^T)^{m}-I = \frac{1}{\mathcal{B}}[(1+\alpha \mathcal{B})^t-1] WH_h W^T$} in Eq.~\eqref{eqn:t_term_final}, where {\small$\mathcal{B}=H_z\tilde{g}_h^T W^T W\tilde{g}_h\in\mathbb{R}$}.
In this way, we get
\begin{equation}
\label{eqn:delta_x_g_y_norm_1}
\begin{small}
\begin{aligned}
&\frac{C}{\Vert \hat\delta \Vert} H_x \hat \delta (g_h+\frac{C}{\Vert \hat\delta \Vert} H_h\Delta h)^T\\
&=\frac{C}{\Vert \hat\delta \Vert \cdot \mathcal{B}} \bigg[(1+\alpha \mathcal{B})^{m}-1\bigg]  WH_h W^T W g_h (g_h+ \frac{C}{\Vert \hat\delta \Vert} H_h\Delta h)^T\\
&=\frac{C}{\Vert \hat\delta \Vert \cdot \mathcal{B}}\bigg[(1+\alpha \mathcal{B})^{m}-1\bigg] H_x W g_h (g_h+ \frac{C}{\Vert \hat\delta \Vert} H_h\Delta h)^T\\
&=\frac{C}{\Vert \hat\delta \Vert \cdot \mathcal{B}}\bigg[(1+\alpha \mathcal{B})^{m}-1\bigg] H_x g_x (g_h+ \frac{C}{\Vert \hat\delta \Vert} H_h\Delta h)^T
 \quad// \quad \text{ According to Eq.~\eqref{supp_eqn:gxgy}}\\
&=\frac{C}{\Vert \hat\delta \Vert \cdot \mathcal{B}}\bigg[(1+\alpha \mathcal{B})^{m}-1\bigg] H_x g_x g_h^T\\
&\quad+\frac{C}{\Vert \hat\delta \Vert \cdot \mathcal{B}}\bigg[(1+\alpha \mathcal{B})^{m}-1\bigg] H_x g_x( \frac{C}{\Vert \hat\delta \Vert} H_h\Delta h)^T.
\end{aligned}
\end{small}
\end{equation}

Based on Eq.~\eqref{eqn:second_term_3}, the term {\small$H_x g_x ( \frac{C}{\Vert \hat\delta \Vert}  H_h\Delta h)^T$} can be represented as
\begin{equation}
\begin{small}
 \label{eqn:Hh_delta_h1_norm}
 \begin{aligned}
    H_x g_x (\frac{C}{\Vert \hat\delta \Vert}  H_h\Delta h)^T= \frac{C}{\Vert \hat\delta \Vert}  \bigg[(1+\alpha \mathcal{B})^m-1\bigg]H_x g_x g_h^T.
  \end{aligned}  \end{small}
\end{equation}

Combining Eq.~\eqref{eqn:Hh_delta_h1_norm} and Eq.~\eqref{eqn:delta_x_g_y_norm_1}, we have
\begin{equation}
\label{eqn:delta_x_g_y_norm_2}
\begin{small}
\begin{aligned}
\frac{C}{\Vert \hat\delta \Vert} H_x \hat \delta (g_h+\frac{C}{\Vert \hat\delta \Vert} H_h\Delta h)^T
&=\frac{C}{\Vert \hat\delta \Vert \cdot \mathcal{B}}\bigg[(1+\alpha \mathcal{B})^{m}-1\bigg] H_x g_x g_h^T\\
&\quad+\frac{C}{\Vert \hat\delta \Vert \cdot \mathcal{B}}\bigg[(1+\alpha \mathcal{B})^{m}-1\bigg] H_x g_x( \frac{C}{\Vert \hat\delta \Vert} H_h\Delta h)^T
\\
\\
&=\frac{C}{\Vert \hat\delta \Vert \cdot \mathcal{B}} \bigg[(1+\alpha \mathcal{B})^{m}-1\bigg] H_x g_x g_h^T\\
&\quad+\frac{C^2}{\Vert \hat\delta \Vert^2 \cdot \mathcal{B}} \bigg[(1+\alpha \mathcal{B})^{m}-1\bigg]^2 H_x g_xg_h^T
\\
\\
&=\frac{C}{\Vert \hat\delta \Vert \cdot \mathcal{B}} \bigg[(1+\alpha \mathcal{B})^{m}-1\bigg]
\bigg[1+ \frac{C}{\Vert \hat\delta \Vert }   \big[(1+\alpha \mathcal{B})^{m}-1\big]    \bigg]H_xg_xg_h^T.
\end{aligned}
\end{small}
\end{equation}

It is because in Eq.~\eqref{eqn:A_infty}, we have proven {\small$\lim_{m\to +\infty}(1+\alpha  \mathcal{B})^m = e^{ \mathcal{A}}$}, where {\small$\mathcal{A} =e^{\beta H_z\Vert\tilde{g}_x\Vert^2} \in \mathbb{R}$}.
Then, the second term {\small$ \frac{C}{\Vert \hat\delta \Vert} H_x \hat \delta (g_h+\frac{C}{\Vert \hat\delta \Vert} H_h\Delta h)^T$} in Eq.~\eqref{eqn:delta_g_w_expansion_norm} can be further written as follows, when the adversarial perturbation {\small$\hat\delta$} is generated via the infinite-step attack, {\small$m\to +\infty$}.
\begin{equation}
\label{eqn:delta_x_g_y_final_norm}
\begin{small}
\begin{aligned}
&\frac{C}{\Vert \hat\delta \Vert} H_x \hat \delta (g_h+\frac{C}{\Vert \hat\delta \Vert} H_h\Delta h)^T\\
&=\lim_{m\to +\infty} \frac{C}{\Vert \hat\delta \Vert \cdot \mathcal{B}} \bigg[(1+\alpha \mathcal{B})^{m}-1\bigg]
\bigg[1+ \frac{C}{\Vert \hat\delta \Vert }   \big[(1+\alpha \mathcal{B})^{m}-1\big]    \bigg]H_xg_xg_h^T
\\
&=\frac{C}{\Vert \hat\delta \Vert \cdot \mathcal{B}} (e^{ \mathcal{A}}-1)
\bigg[1+ \frac{C}{\Vert \hat\delta \Vert }   (e^{ \mathcal{A}}-1)    \bigg]H_xg_xg_h^T
\\
&=\frac{C}{\Vert \hat\delta \Vert  H_z \Vert\tilde{g}_x\Vert^2} (e^{ \mathcal{A}}-1)
\bigg[1+ \frac{C}{\Vert \hat\delta \Vert }   (e^{ \mathcal{A}}-1)    \bigg]H_xg_xg_h^T.
\end{aligned}\end{small}
\end{equation}

In this way, combining Eq.~\eqref{eqn:first_term_norm} and Eq.~\eqref{eqn:delta_x_g_y_final_norm},  Eq.~\eqref{eqn:delta_g_w_expansion_norm} can be represented as
\begin{equation}
\begin{small}
\label{eqn:delta_g_w_expansion_final_norm}
\begin{aligned}
H_x \; \Delta g_{W}^{\text{(norm)}}
 &=H_x \; x(\frac{C}{\Vert \hat\delta \Vert} H_h  \Delta h)^T
+ H_x \; \frac{C \cdot \hat\delta}{\Vert \hat\delta \Vert}  (g_h+ \frac{C}{\Vert \hat\delta \Vert} H_h\Delta h)^T
\\
&=\frac{C}{\Vert \hat\delta \Vert} (e^{ \mathcal{A}}-1) H_x x g_h^T
+\frac{C}{\Vert \hat\delta \Vert  H_z \Vert\tilde{g}_x\Vert^2} (e^{ \mathcal{A}}-1)
\bigg[1+ \frac{C}{\Vert \hat\delta \Vert }   (e^{ \mathcal{A}}-1)    \bigg]H_xg_xg_h^T.
\end{aligned}\end{small}
\end{equation}

Thus, Lemma~\ref{lemma:H_x_g_w_norm} in Appendix is proven.
\end{proof}

\subsection{Proof of Theorem~\ref{theorem4}}
\begin{theorem}
Based on Assumptions~\ref{assumption:relu} and~\ref{assumption:sigmoid}, we derived the following equation~\textit{w.r.t.} adversarial training based on normalized perturbations {\small$\hat\delta^{\text{(norm)}}$}  in Corollary~\ref{theorem:adv_per_norm}.
Considering the footnote\footnotemark[1], {\small$\Delta \tilde{g}_x^{\text{(norm)}}=- \eta \Delta g_{W}^{\text{(norm)}} \tilde{g}_h = - \eta( g_{W}^{\text{(adv, norm)}}- g_{W}) \tilde{g}_h$}
represents additional effects of adversarial training on changing {\small$\tilde{g}_x$}.
In this way, {\small $\tilde{g}_x ^{T} \Delta \tilde{g}_x^{\text{(norm)}} =- \eta \tilde{g}_x ^{T}\Delta g_{W}^{\text{(norm)}} \tilde{g}_h$} reflects the significance of such additional effects along the direction of the gradient {\small$ \tilde{g}_x$}.

\begin{small}
\begin{equation}
\label{eqn:norm_adv_train}
\tilde{g}_x ^{T} \Delta \tilde{g}_x^{\text{(norm)}}
= C \cdot \big(\frac{e^{\mathcal{A}}}{\Vert \hat\delta \Vert} -\frac{1}{\Vert \hat\delta \Vert}\big) \tilde{g}_x ^{T}  \Delta \tilde{g}_x^{\text{(ori)}}
- C \cdot \frac{   \eta g_z^{2}\; \Vert \tilde{g}_h \Vert ^{2} }{H_z }
\bigg(\frac{e^{\mathcal{A}}}{\Vert \hat\delta \Vert} -\frac{1}{\Vert \hat\delta \Vert} + C \cdot (\frac{e^{\mathcal{A}}}{\Vert \hat\delta \Vert} -\frac{1}{\Vert \hat\delta \Vert})^{2} \bigg).
\end{equation}
\end{small}
\end{theorem}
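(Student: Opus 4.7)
The plan is to mirror the proof of Theorem~\ref{theorem2} exactly, but starting from Lemma~\ref{lemma:H_x_g_w_norm} in Appendix rather than Lemma~\ref{lemma:H_x_g_w} in Appendix. Concretely, I would take the expression derived there for $H_x\,\Delta g_W^{\text{(norm)}}$, substitute $H_x = H_z\,\tilde g_x\,\tilde g_x^T$ from Lemma~\ref{lemma:H_x} in Appendix on the left-hand side and inside both occurrences of $H_x$ on the right-hand side, and then left-multiply by $\tilde g_x^T$ and right-multiply by $\tilde g_h$ to reduce the matrix identity to a scalar one.

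After this reduction, the first term on the right produces a multiple of $\tilde g_x^T (x g_h^T)\,\tilde g_h = \tilde g_x^T g_W \tilde g_h$, using $g_W = x g_h^T$ from Eq.~\eqref{eqn:gradient_of_w}. The second term, which contains $H_x g_x g_h^T$, simplifies via the chain-rule identities $g_x = g_z\,\tilde g_x$ and $g_h = g_z\,\tilde g_h$ into a scalar of the form $g_z^2\,\Vert\tilde g_x\Vert^2\,\Vert\tilde g_h\Vert^2 / H_z$ multiplied by the bracketed $C/\Vert\hat\delta\Vert$ factor inherited from Lemma~\ref{lemma:H_x_g_w_norm} in Appendix. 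A common $\Vert\tilde g_x\Vert^2$ then cancels across the equation.

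Finally I would multiply both sides by $-\eta$. Using the definitions $\Delta \tilde g_x^{\text{(norm)}} = -\eta\,\Delta g_W^{\text{(norm)}}\,\tilde g_h$ and $\Delta \tilde g_x^{\text{(ori)}} = -\eta\, g_W\,\tilde g_h$ stated in the theorem, the left-hand side becomes $\tilde g_x^T \Delta \tilde g_x^{\text{(norm)}}$, and the first term on the right becomes the claimed $\frac{C(e^{\mathcal{A}}-1)}{\Vert\hat\delta\Vert}\,\tilde g_x^T \Delta \tilde g_x^{\text{(ori)}}$. The second term matches the theorem's bracketed expression after rewriting $\frac{e^{\mathcal{A}}-1}{\Vert\hat\delta\Vert}\bigl[1+\tfrac{C(e^{\mathcal{A}}-1)}{\Vert\hat\delta\Vert}\bigr]$ as $\bigl(\frac{e^{\mathcal{A}}}{\Vert\hat\delta\Vert}-\frac{1}{\Vert\hat\delta\Vert}\bigr) + C\bigl(\frac{e^{\mathcal{A}}}{\Vert\hat\delta\Vert}-\frac{1}{\Vert\hat\delta\Vert}\bigr)^2$, which is exactly Eq.~\eqref{eqn:norm_adv_train}.

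The real obstacle is not this final bookkeeping but establishing Lemma~\ref{lemma:H_x_g_w_norm} in Appendix, which is the normalized counterpart of Lemma~\ref{lemma:H_x_g_w} in Appendix. That lemma requires re-running the induction behind Eq.~\eqref{eqn:t_term_final} while tracking the additional $C/\Vert\hat\delta\Vert$ factors that enter both through the outer $\hat\delta^{\text{(norm)}} = C\hat\delta/\Vert\hat\delta\Vert$ and through the nested $\Delta h^{\text{(norm)}} = (C/\Vert\hat\delta\Vert)\Delta h$ inside $g_{h+\Delta h^{\text{(norm)}}}$; the latter is what produces the nonlinear $[1+\tfrac{C}{\Vert\hat\delta\Vert}(e^{\mathcal{A}}-1)]$ factor and is easy to drop by accident. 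My tactic would be to keep that bracket intact throughout, rather than expanding it early, so that the structural match with Eq.~\eqref{eqn:norm_adv_train} is transparent at the end.
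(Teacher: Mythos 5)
Your proposal follows the paper's own proof essentially verbatim: the paper likewise first establishes the normalized counterpart of the key lemma (tracking the outer $C\hat\delta/\Vert\hat\delta\Vert$ and the nested $\Delta h^{\text{(norm)}}=(C/\Vert\hat\delta\Vert)\Delta h$, which is exactly what produces the quadratic $\bigl[1+\tfrac{C}{\Vert\hat\delta\Vert}(e^{\mathcal{A}}-1)\bigr]$ factor), then substitutes $H_x=H_z\tilde g_x\tilde g_x^T$, left-multiplies by $\tilde g_x^T$ and right-multiplies by $\tilde g_h$, cancels $\Vert\tilde g_x\Vert^2$, and multiplies by $-\eta$ before the final algebraic regrouping into $\bigl(\tfrac{e^{\mathcal{A}}}{\Vert\hat\delta\Vert}-\tfrac{1}{\Vert\hat\delta\Vert}\bigr)$ terms. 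The approach and the identified key difficulty are the same as the paper's, so the proposal is correct.
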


\begin{proof}

Based on Lemma~\ref{lemma:H_x} in Appendix and Lemma~\ref{lemma:H_x_g_w_norm} in Appendix, we have
\begin{equation}
\begin{small}
\begin{aligned}
\label{eqn:H_x_change_gradient_w_norm}
H_x \; \Delta g_{W}^{\text{(norm)}}
 &=\frac{C}{\Vert \hat\delta \Vert} (e^{ \mathcal{A}}-1) H_x x g_h^T
+\frac{C}{\Vert \hat\delta \Vert  H_z \Vert\tilde{g}_x\Vert^2} (e^{ \mathcal{A}}-1)
\bigg[1+ \frac{C}{\Vert \hat\delta \Vert }   (e^{ \mathcal{A}}-1)    \bigg]H_xg_xg_h^T
\\
\\
\Rightarrow \; H_z\tilde{g}_x\tilde{g}_x^T \Delta g_{W}^{\text{(norm)}}
&=\frac{C}{\Vert \hat\delta \Vert} (e^{\mathcal{A}}-1)H_z\tilde{g}_x\tilde{g}_x^T xg_h^T\\
&\quad+\frac{C}{\Vert \hat\delta \Vert  H_z \Vert\tilde{g}_x\Vert^2} (e^{ \mathcal{A}}-1)
\bigg[1+ \frac{C}{\Vert \hat\delta \Vert }   (e^{ \mathcal{A}}-1)    \bigg] H_z\tilde{g}_x\tilde{g}_x^Tg_x g_h^T
\\
\\
\Rightarrow \;  \tilde{g}_x\tilde{g}_x^T \Delta g_{W}^{\text{(norm)}}
&=\frac{C}{\Vert \hat\delta \Vert} (e^{\mathcal{A}}-1)\tilde{g}_x\tilde{g}_x^T xg_h^T
\\
&\quad+\frac{C}{\Vert \hat\delta \Vert  H_z \Vert\tilde{g}_x\Vert^2} (e^{ \mathcal{A}}-1)
\bigg[1+ \frac{C}{\Vert \hat\delta \Vert }   (e^{ \mathcal{A}}-1)    \bigg] \tilde{g}_x\tilde{g}_x^Tg_x g_h^T.
\quad // \quad H_z\in \mathbb{R}
\end{aligned}
\end{small}
\end{equation}

Multiply {\small$\tilde{g}_x^{T}$} and {\small$\tilde{g}_h$} on both sides of Eq.~\eqref{eqn:H_x_change_gradient_w_norm}, and we get
\begin{equation}
\begin{small}
\label{eqn:theorem_norm}
\begin{aligned}
\tilde{g}_x^T\tilde{g}_x\tilde{g}_x^T \Delta g_{W}^{\text{(norm)}} \tilde{g}_h
&=\frac{C}{\Vert \hat\delta \Vert} (e^{\mathcal{A}}-1)\tilde{g}_x^T\tilde{g}_x\tilde{g}_x^T xg_h^T\tilde{g}_h\\
&\quad+\frac{C}{\Vert \hat\delta \Vert  H_z \Vert\tilde{g}_x\Vert^2} (e^{ \mathcal{A}}-1)
\bigg[1+ \frac{C}{\Vert \hat\delta \Vert }   (e^{ \mathcal{A}}-1)    \bigg]\tilde{g}_x^T\tilde{g}_x\tilde{g}_x^Tg_x g_h^T\tilde{g}_h
\\
\\
\Rightarrow \tilde{g}_x^T\tilde{g}_x\tilde{g}_x^T \Delta g_{W}^{\text{(norm)}} \tilde{g}_h
&=\frac{C}{\Vert \hat\delta \Vert}(e^{\mathcal{A}}-1)\tilde{g}_x^T\tilde{g}_x\tilde{g}_x^T g_W\tilde{g}_h\\
&\quad+\frac{C g_z^2}{\Vert \hat\delta \Vert  H_z \Vert\tilde{g}_x\Vert^2} (e^{ \mathcal{A}}-1)
\bigg[1+ \frac{C}{\Vert \hat\delta \Vert }   (e^{ \mathcal{A}}-1)    \bigg] \tilde{g}_x^T\tilde{g}_x\tilde{g}_x^T\tilde{g}_x \tilde{g}_h^T\tilde{g}_h
\\
\\
\Rightarrow \tilde{g}_x^T   \Delta g_{W}^{\text{(norm)}} \tilde{g}_h
&=\frac{C}{\Vert \hat\delta \Vert} (e^{\mathcal{A}}-1)\tilde{g}_x^T g_W\tilde{g}_h\\
&\quad+\frac{C g_z^2}{\Vert \hat\delta \Vert  H_z } (e^{ \mathcal{A}}-1)
\bigg[1+ \frac{C}{\Vert \hat\delta \Vert }   (e^{ \mathcal{A}}-1)    \bigg] \tilde{g}_h^T\tilde{g}_h
\\
\\
\Rightarrow \tilde{g}_x^T \Delta g_{W}^{\text{(norm)}} \tilde{g}_h
&=\frac{C}{\Vert \hat\delta \Vert}  (e^{\mathcal{A}}-1)\tilde{g}_x^T g_W\tilde{g}_h
+\frac{C g_z^2  \Vert\tilde{g}_h\Vert^2}{\Vert \hat\delta \Vert  H_z } (e^{ \mathcal{A}}-1)
\bigg[1+ \frac{C}{\Vert \hat\delta \Vert }   (e^{ \mathcal{A}}-1)    \bigg]
\\
\\
\Rightarrow \tilde{g}_x^T \Delta g_{W}^{\text{(norm)}} \tilde{g}_h
&=C \cdot  \big(\frac{e^{\mathcal{A}}}{\Vert \hat\delta \Vert} -\frac{1}{\Vert \hat\delta \Vert}\big) \tilde{g}_x^T g_W\tilde{g}_h
\\
&\quad+C \cdot \frac{ g_z^2  \Vert\tilde{g}_h\Vert^2}{H_z } \bigg(\frac{e^{\mathcal{A}}}{\Vert \hat\delta \Vert} -\frac{1}{\Vert \hat\delta \Vert} + C \cdot (\frac{e^{\mathcal{A}}}{\Vert \hat\delta \Vert} -\frac{1}{\Vert \hat\delta \Vert})^{2} \bigg).
\end{aligned}
\end{small}
\end{equation}

Let {\small$\Delta \tilde{g}_x^{\text{(norm)}}=- \eta \Delta g_{W}^{\text{(norm)}} \tilde{g}_h$} represent the additional effects of adversarial training on changing {\small$\tilde{g}_x$}, considering the footnote\footnotemark[1].
In this way, Eq.~\eqref{eqn:theorem_norm} can be re-written as
\begin{equation}
\begin{small}
\label{eqn:theorem_norm_final}
\begin{aligned}
\tilde{g}_x^T(- \eta)  \Delta g_{W}^{\text{(norm)}} \tilde{g}_h
&=C \cdot  \big(\frac{e^{\mathcal{A}}}{\Vert \hat\delta \Vert} -\frac{1}{\Vert \hat\delta \Vert}\big) \tilde{g}_x^T (- \eta) g_W\tilde{g}_h
- \frac{ \eta g_z^{2} \;\Vert \tilde{g}_h \Vert ^{2} }{H_z}
\bigg(\frac{e^{\mathcal{A}}}{\Vert \hat\delta \Vert} -\frac{1}{\Vert \hat\delta \Vert} + C \cdot (\frac{e^{\mathcal{A}}}{\Vert \hat\delta \Vert} -\frac{1}{\Vert \hat\delta \Vert})^{2} \bigg)
\\
\Rightarrow \tilde{g}_x ^{T} \Delta \tilde{g}_x^{\text{(norm)}}
&=C \cdot  \big(\frac{e^{\mathcal{A}}}{\Vert \hat\delta \Vert} -\frac{1}{\Vert \hat\delta \Vert}\big) \tilde{g}_x^T \Delta \tilde{g}_x^{\text{(ori)}}
- \frac{ \eta g_z^{2} \;\Vert \tilde{g}_h \Vert ^{2} }{H_z}
\bigg(\frac{e^{\mathcal{A}}}{\Vert \hat\delta \Vert} -\frac{1}{\Vert \hat\delta \Vert} + C \cdot (\frac{e^{\mathcal{A}}}{\Vert \hat\delta \Vert} -\frac{1}{\Vert \hat\delta \Vert})^{2} \bigg).
\end{aligned}\end{small}
\end{equation}

Thus, Theorem~\ref{theorem4} is proven.
\end{proof}

\subsection{Proof for the strength of the training effect {\small$\tilde{g}_x ^{T} \Delta \tilde{g}_x^{\text{(norm)}}$} in Theorem~\ref{theorem4}}

Given a relatively strong attack, Theorem~\ref{theorem:adv_per_2} shows {\small$\Vert \hat\delta \Vert \to \exp(\beta  \Vert \tilde{g}_x \Vert ^{2} g_z^{2} )/{ \Vert {g}_x \Vert}$}.
In this way, we can ignore the term {\small$1/ \Vert \hat\delta \Vert \to 0$} in Eq.~\eqref{main_eqn:norm_adv_train}, and prove that the strength of
the training effect {\small$\tilde{g}_x ^{T} \Delta \tilde{g}_x^{\text{(norm)}}$} is mainly determined by the term {\small$\exp(\mathcal{A})/\Vert \hat\delta \Vert \approx \Vert {g}_x \Vert \cdot \exp( \beta  \Vert \tilde{g}_x \Vert ^{2} (H_z-g_z^{2}))$}.
The proof is as follows.

\begin{proof}
%

Given a relatively strong attack, we can ignore the term {\small$1/ \Vert \hat\delta \Vert \to 0$} in Eq.~\eqref{main_eqn:norm_adv_train}, because a a relatively strong adversarial strength $\beta$ usually makes {\small$\Vert \hat\delta \Vert \to \exp(\beta  \Vert \tilde{g}_x \Vert ^{2} g_z^{2} )/{ \Vert {g}_x \Vert}$} with an exponential strength.
In this way, Eq.~\eqref{main_eqn:norm_adv_train} can be re-written as
\begin{equation}
\begin{small}
\begin{aligned}
\label{eqn:strong_attack_norm_adv_train}
\tilde{g}_x^T\Delta\tilde{g}_x^{\text{(norm)}}&=C \cdot  \big(\frac{e^{\mathcal{A}}}{\Vert \hat\delta \Vert} -\frac{1}{\Vert \hat\delta \Vert}\big) \tilde{g}_x^T \Delta \tilde{g}_x^{\text{(ori)}}
- \frac{ \eta g_z^{2} \;\Vert \tilde{g}_h \Vert ^{2} }{H_z}
\bigg(\frac{e^{\mathcal{A}}}{\Vert \hat\delta \Vert} -\frac{1}{\Vert \hat\delta \Vert} + C \cdot (\frac{e^{\mathcal{A}}}{\Vert \hat\delta \Vert} -\frac{1}{\Vert \hat\delta \Vert})^{2} \bigg)\\
&\approx C \cdot \frac{e^{\mathcal{A}}}{\Vert \hat\delta \Vert} \tilde{g}_x^T \Delta \tilde{g}_x^{\text{(ori)}}
- \frac{ \eta g_z^{2} \;\Vert \tilde{g}_h \Vert ^{2} }{H_z}
\bigg(\frac{e^{\mathcal{A}}}{\Vert \hat\delta \Vert} + C \cdot (\frac{e^{\mathcal{A}}}{\Vert \hat\delta \Vert})^{2} \bigg)\\
&=\frac{e^{\mathcal{A}}}{\Vert\hat{\delta}\Vert}\left[ C\cdot \tilde{g}_x^T\Delta\tilde{g}_x^{\text{ori}} -
\frac{\eta g_z^2\Vert\tilde{g}_h\Vert^2}{H_z}\left(1 + C\cdot \frac{e^\mathcal{A}}{\Vert\hat{\delta}\Vert}
\right)
\right].
\end{aligned}\end{small}
\end{equation}

Thus, {\small$\tilde{g}_x^T\Delta\tilde{g}_x^{\text{(norm)}}$} is determined by the term {\small$\frac{e^{\mathcal{A}}}{\Vert\hat{\delta}\Vert}$}.
Since the attack is relatively strong, we have {\small$\Vert\hat{\delta}\Vert\approx \exp(\beta\Vert\tilde{g}_x\Vert^2g_z^2)/\Vert g_x\Vert$}. In this case, the term {\small$\frac{e^{\mathcal{A}}}{\Vert\hat{\delta}\Vert}$} can be  represented as
\begin{equation}
\begin{aligned}
\label{eqn:determine_term}
\frac{e^{\mathcal{A}}}{\Vert\hat{\delta}\Vert}&\approx\frac{\Vert g_x\Vert\exp(\mathcal{A})}{\exp(\beta\Vert\tilde{g}_x\Vert^2g_z^2)}\\
&=\Vert g_x\Vert\exp \bigg[\beta\Vert\tilde{g}_x\Vert^2(H_z-g_z^2)\bigg].
\end{aligned}
\end{equation}

Hence, we can consider the strength of the training effect {\small$\tilde{g}_x ^{T} \Delta \tilde{g}_x^{\text{(norm)}}$} is mainly determined by the term {\small$\exp(\mathcal{A})/\Vert \hat\delta \Vert \approx \Vert {g}_x \Vert \cdot \exp( \beta  \Vert \tilde{g}_x \Vert ^{2} (H_z-g_z^{2}))$}.
\end{proof}


\section{Experimental verification 2 of Theorem~\ref{theorem:adv_per_2}}
\label{sec:exp_verify_theorem2}

To verify the correctness of Theorem~\ref{theorem:adv_per_2}, we conducted experiments to generate adversarial perturbations on four types of ReLU networks, and examined whether the derived analytic solution well fit the real perturbation measured in practice.
Specifically, we calculated the metric {\small$\kappa = {\mathbb{E}_{x}[ \| \delta^{*} - \hat\delta \|}/{ \|\delta^{*}\|}]$} to  evaluate the error between the derived analytic solution $\hat\delta$ in Theorem~\ref{theorem:adv_per_2} and the real perturbation $\delta^{*}$ measured in experiments.
To this end, we learned four types of ReLU networks, including MLPs, CNNs, MLPs with skip connections (namely ResMLP), and CNNs with skip connections (namely ResCNN), on the MNIST dataset~\cite{lecun1998gradient} via adversarial training.
Here, we followed settings in~\cite{ren2022towards} to construct five different MLPs, which consisted of $1,2,3,4,5$ fully-connected (FC) layers, respectively.
Each FC layer contained  $200$ neurons.
We also built five different CNNs, which consisted of $1,2,3,4,5$ convolutional layers, respectively, with a FC layer on the top.
Each convolutional layer contained $32$ filters.
Additionally, we added a skip-connection to each block of a FC layer and a ReLU layer in the above MLPs to construct different ResMLPs.
We also added a skip connection to each block consisting of a convolutional layer and a ReLU layer in the above CNNs to build different ResCNNs.

To generate adversarial perturbations, we constructed four baseline attacks.
In the first baseline, we set the loss function to the MSE loss, and controlled the gating states of each ReLU layer in each step of the adversarial attack to be the same as those corresponding to the original input sample $x$.
In this way, this baseline attack ignored the higher term {\small$\hat{\mathcal{R}}_2(\alpha)$} in Theorem~\ref{theorem:adv_per_2}, and neglected changes of gating states in Assumption 1, thereby being termed~\textbf{attack w/o {\small$\hat{\mathcal{R}}_2(\alpha)$} w/o {\small$\Delta \Sigma$}}.
For the second baseline attack, we did not fix the gating states of each ReLU layer, thereby being termed~\textbf{attack w/o {\small$\hat{\mathcal{R}}_2(\alpha)$}}
For the third baseline attack, we controlled the gating states of ReLU layer, and set the loss function to the cross-entropy loss without ignoring the higher term {\small$\hat{\mathcal{R}}_2(\alpha)$}, thereby being named as~\textbf{attack w/o {\small$\Delta \Sigma$}}.
For the fourth baseline attack, we both set the loss function to the cross-entropy loss and did not fix the gating states, thereby being named as~\textbf{attack}.
Table~\ref{tab:adv_per} reports errors $\kappa$  computed in four different experimental settings, which successfully verified Theorem~\ref{theorem:adv_per_2}.

\begin{table}[h!]
\begin{center}
\caption{The error $\kappa$ between the derived analytic solution $\hat\delta$ in Theorem~\ref{theorem:adv_per_2} and the real perturbation based on different ReLU networks. The error $\kappa$ based on each network is small, which successfully verifies Theorem~\ref{theorem:adv_per_2}.}
\vspace{-10pt}
\label{tab:adv_per}
\resizebox{\linewidth}{!}{\
\begin{tabular}{c|ccccc|ccc}
\toprule
\makecell[c]{Attacking\\ methods}&\makecell[c]{1-layer \\MLP}& \makecell[c]{2-layer\\ MLP}&\makecell[c]{3-layer \\MLP}& \makecell[c]{4-layer\\ MLP}&\makecell[c]{5-layer\\ MLP}&\makecell[c]{3-layer \\ResMLP}& \makecell[c]{4-layer\\ ResMLP}&\makecell[c]{5-layer\\ ResMLP}\\
\midrule

Attack w/o {\small$\hat{\mathcal{R}}_2(\alpha)$} w/o {\small$\Delta \Sigma$}
& 4.8$\times 10^{-4}$ & 6.5$\times 10^{-5}$ &  8.0$\times 10^{-6}$&  1.8$\times 10^{-6}$ & 2.9$\times 10^{-7}$
& 6.9$\times 10^{-5}$ & 8.0$\times 10^{-5}$ &  7.4$\times 10^{-5}$  \\
\midrule
Attack w/o {\small$\hat{\mathcal{R}}_2(\alpha)$}
&4.8$\times 10^{-4}$ & 5.4$\times 10^{-2}$ &  4.7$\times 10^{-2}$&  4.1$\times 10^{-2}$ & 2.4$\times 10^{-2}$
& 6.2$\times 10^{-2}$ & 1.2$\times 10^{-1}$ &  8.9$\times 10^{-2}$ \\
\midrule

Attack w/o {\small$\Delta \Sigma$}
& 4.9$\times 10^{-4}$ & 1.4$\times 10^{-5}$ &  1.0$\times 10^{-6}$&  2.2$\times 10^{-7}$ & 3.6$\times 10^{-8}$
& 7.7$\times 10^{-6}$ & 2.4$\times 10^{-5}$ &  1.0$\times 10^{-5}$  \\
\midrule

Attack
& 4.9$\times 10^{-4}$ & 2.4$\times 10^{-2}$ &  2.6$\times 10^{-2}$&  2.7$\times 10^{-2}$ & 8.9$\times 10^{-3}$
& 3.5$\times 10^{-2}$ & 4.9$\times 10^{-2}$ &  5.5$\times 10^{-2}$   \\

\midrule
\midrule
\makecell[c]{Attacking\\ methods}&\makecell[c]{1-layer \\CNN}& \makecell[c]{2-layer\\ CNN}&\makecell[c]{3-layer \\CNN}& \makecell[c]{4-layer\\ CNN}&\makecell[c]{5-layer\\ CNN}&\makecell[c]{3-layer \\ResCNN}& \makecell[c]{4-layer\\ ResCNN}&\makecell[c]{5-layer\\ ResCNN}\\
\midrule

Attack w/o {\small$\hat{\mathcal{R}}_2(\alpha)$} w/o {\small$\Delta \Sigma$}
& 6.7$\times 10^{-8}$ & 2.4$\times 10^{-7}$ &  1.7$\times 10^{-8}$&  8.5$\times 10^{-9}$ & 2.3$\times 10^{-9}$
& 6.4$\times 10^{-7}$ & 7.9$\times 10^{-7}$ &  2.0$\times 10^{-6}$  \\
\midrule

Attack w/o {\small$\hat{\mathcal{R}}_2(\alpha)$}
&1.1$\times 10^{-2}$ & 6.4$\times 10^{-3}$ &  6.3$\times 10^{-3}$&  6.5$\times 10^{-3}$ & 1.9$\times 10^{-3}$
& 2.5$\times 10^{-2}$ & 2.7$\times 10^{-2}$ &  2.5$\times 10^{-2}$ \\
\midrule

Attack w/o {\small$\Delta \Sigma$}
& 8.4$\times 10^{-9}$ & 2.9$\times 10^{-8}$ &  2.1$\times 10^{-9}$&  3.7$\times 10^{-10}$ & 2.9$\times 10^{-10}$
& 7.9$\times 10^{-8}$ & 9.8$\times 10^{-8}$ &  2.4$\times 10^{-7}$  \\
\midrule

Attack
& 9.7$\times 10^{-3}$ & 5.1$\times 10^{-3}$ &  4.4$\times 10^{-3}$&  4.4$\times 10^{-3}$ & 1.6$\times 10^{-3}$
& 1.3$\times 10^{-2}$ & 2.3$\times 10^{-2}$ &  1.2$\times 10^{-2}$  \\

\bottomrule

\end{tabular}}
\vspace{-8pt}
\end{center}
\end{table}
\newpage

\section{Experimental verification 1 of Theorem~\ref{theorem2}}

To verify the correctness of Theorem~\ref{theorem2}, we conducted experiments to examine whether the derived training effect well fit the real effect, based on sixteen adversarially trained ReLU networks in Section~\ref{sec:exp_verify_theorem2}.
Specifically, we calculated the metric {\small$\kappa = {\mathbb{E}_{x}[ \| \phi^{*} - \hat\phi \|}/{ \|\phi^{*}\|}]$} to evaluate the fitting between the theoretical derivation {\small$\hat\phi$} computed using the right side of Eq.~\eqref{main_eqn:adv_train} and {\small$\phi^{*}=\tilde{g}_x ^{T} \Delta \tilde{g}_x$} measured in experiments,
where {\small$\phi^{*}$} was computed using real measurements of {\small$\tilde{g}_x, \eta, g_{W}^{\text{(adv)}}, g_{W}$}, and {\small$\tilde{g}_h$} on each ReLU network.

To generate adversarial perturbations, we set the loss function to the MSE loss.
Considering Theorem~\ref{theorem2} was based on the assumption of consistent gating states in Assumption 1, we measured an additional effect {\small$\phi'$} in experiments by manually forcing gating states of each ReLU layer in the process of generating adversarial perturbations to be the same as gating states for the input sample without being perturbed.
To this end, we calculated a new error {\small$\kappa' = {\mathbb{E}_{x}[ \| \phi' - \hat\phi \|}/{ \|\phi'\|}]$}.
Such a setting well fit Assumption 1.
Table~\ref{tab:adv_train} reports errors $\kappa$ and $\kappa’$ computed in two different experimental settings, which both verified the correctness of Theorem~\ref{theorem2}.
Particularly, the change of gating states was unpredictable, which brought significant instability in the computation of  {\small$\phi^{*}$} on a few adversarial examples,~\textit{e.g.,} causing dividing $0$.
Thus, we used $90\%$ samples corresponding to the smallest errors between {\small$\hat\phi$} and {\small$\phi^{*}$} to calculate the metric {\small$\kappa$}.
Experimental results show that the derived training effect {\small$\phi^{*}$} still well explained real effects on most adversarial examples.

\begin{table}[h!]
\begin{center}
\caption{Experimental verification of Theorem~\ref{theorem2} on different adversarially trained ReLU networks.
Both the error $\kappa$ and the error $\kappa'$ are small, which verifies Theorem~\ref{theorem2}.}
\vspace{-10pt}
\label{tab:adv_train}
\resizebox{\linewidth}{!}{\
\begin{tabular}{c|ccccc|ccc}
\toprule
&\makecell[c]{1-layer MLP}& \makecell[c]{2-layer MLP}&\makecell[c]{3-layer MLP}& \makecell[c]{4-layer MLP}&\makecell[c]{5-layer MLP}&\makecell[c]{3-layer ResMLP}& \makecell[c]{4-layer ResMLP}&\makecell[c]{5-layer ResMLP}\\
\midrule

$\kappa$ & 6.8 $\times 10^{-4}$ & 2.7 $\times 10^{-2}$ & 9.1 $\times 10^{-2}$  & 2.7 $\times 10^{-1}$ & 4.1 $\times 10^{-2}$  & 3.4 $\times 10^{-2}$ & 1.7 $\times 10^{-1}$ & 3.9 $\times 10^{-2}$ \\
$\kappa'$ & 6.8 $\times 10^{-4}$ & 6.9 $\times 10^{-5}$  &  7.9 $\times 10^{-6}$ & 1.7 $\times 10^{-6}$ & 2.7 $\times 10^{-7}$ & 7.5 $\times 10^{-5}$  & 8.4 $\times 10^{-5}$  & 7.9 $\times 10^{-5}$\\

\midrule

&\makecell[c]{1-layer CNN}& \makecell[c]{2-layer CNN}&\makecell[c]{3-layer CNN}& \makecell[c]{4-layer CNN}&\makecell[c]{5-layer CNN}&\makecell[c]{3-layer ResCNN}& \makecell[c]{4-layer ResCNN}&\makecell[c]{5-layer ResCNN}\\
\midrule

$\kappa$ &  9.5$\times 10^{-3}$ & 2.0$\times 10^{-2}$ & 2.6$\times 10^{-1}$  & 1.8$\times 10^{-1}$ &  4.8$\times 10^{-2}$ & 4.8$\times 10^{-2}$  &  2.6$\times 10^{-1}$ & 4.6$\times 10^{-2}$\\
$\kappa'$ & 6.5$\times 10^{-8}$ & 2.3$\times 10^{-7}$  &  1.7$\times 10^{-8}$ &  2.7$\times 10^{-9}$ &  2.3$\times 10^{-9}$ & 6.3$\times 10^{-7}$ & 7.8$\times 10^{-7}$ & 2.0$\times 10^{-6}$  \\
\bottomrule
\end{tabular}}
\vspace{-5pt}
\end{center}
\end{table}


\section{Detailed Discussions}

\textbullet\;
In \textit{Experimental verification 1 of Theorem~\ref{theorem:adv_per_2}}, we used SGD with learning rate $0.01$ for $50$ epochs, and set the batch size to $128$ to train
VGG-11~\cite{simonyan2014very}, AlexNet~\cite{krizhevsky2012imagenet}, and ResNet-18~\cite{he2016deep} on MNIST dataset, respectively.
Based on these networks, we crafted adversarial perturbations $\hat\delta$ in Theorem~\ref{theorem:adv_per_2} by the gradient {\small$g_{x+ \hat\delta^{(t)}} =\frac{\partial}{\partial x} L(f(x+ \hat\delta^{(t)}), y)$} for $500$ steps with the step size {\small$\alpha = \frac{1}{100} \epsilon = 0.02$}.
We further generated adversarial perturbations of the $\ell_2$ attack by {\small$g_{x+ \delta^{(t)}}^{(\ell_2)}=\frac{\partial}{\partial x} L(f(x+ \delta^{(t)}), y)/\Vert \frac{\partial}{\partial x} L(f(x+ \delta^{(t)}), y) \Vert$} for $200$ steps with the step size {\small$\alpha = \frac{1}{100} \epsilon = 0.02$}.
Besides, we also crafted adversarial perturbations of the $\ell_\infty$ attack by applying {\small$g_{x+ \delta^{(t)}}^{(\ell_\infty)}=\text{sign} (\frac{\partial}{\partial x} L(f(x+ \delta^{(t)}), y))$} for $20$ steps with the step size {\small$\alpha = \frac{1}{100} \epsilon = 0.02$}.
Note that the goal of this experiment was to verify whether the norm of the gradient {\small$\Vert g_{x+\hat\delta} \Vert$}, and the norm of the adversarial perturbation {\small$\Vert\hat\delta\Vert$} increased with the step number $m$ in an approximately exponential manner.
Hence, we ignored the constraint {\small$\Vert \hat\delta \Vert_{p} < \epsilon$} of adversarial perturbations, in order to prevent the analysis from being affected by the constraint {\small$\Vert \hat\delta \Vert_{p} < \epsilon$}.

\textbullet\;
In \textit{Experimental verification 2 of Theorem~\ref{theorem2}}, we trained VGG-11 and AlexNet on MNIST dataset against a PGD adversary with $20$ steps of the step size {\small$\frac{1}{10} \epsilon = 0.2$}.
We learned the above networks using SGD  with learning rate $0.01$ for $50$ epochs.
The adversarial perturbation $\hat\delta$ for evaluation was generated via the gradient {\small$g_{x+ \hat\delta^{(t)}} =\frac{\partial}{\partial x} L(f(x+ \hat\delta^{(t)}), y)$} for $100$ steps with the step size {\small$\alpha = \frac{1}{100} \epsilon = 0.02$}.
Here, we still neglected the constraint of adversarial perturbations.

\textbullet\;
In \textit{Experimental verification 3 of Theorem~\ref{theorem2}}, we trained VGG-11 and AlexNet on MNIST dataset against a PGD adversary with $20$ steps of the step size {\small$\frac{1}{10} \epsilon = 0.2$}.
We learned the above networks using SGD  with learning rate $0.01$ for $50$ epochs.
Adversarial perturbation $\hat\delta$ for evaluation were generated via the gradient {\small$g_{x+ \hat\delta^{(t)}} =\frac{\partial}{\partial x} L(f(x+ \hat\delta^{(t)}), y)$} for $100, 150$ and $200$ steps with the step size {\small$\alpha = \frac{1}{100} \epsilon = 0.02$}, respectively.
Here, we ignored the constraint of adversarial perturbations.

\textbullet\;
In \textit{Experimental verification of Theorem~\ref{theorem:oscillation}}, we used SGD with learning rate $0.01$ for $50$ epochs, and set the batch size to $128$ to train
VGG-11 and AlexNet~\cite{krizhevsky2012imagenet} on MNIST dataset, respectively.
Given an input sample $x$, we generated adversarial example $x+\delta$ of the $\ell_2$ attack by {\small$g_{x+ \delta^{(t)}}^{(\ell_2)}=\frac{\partial}{\partial x} L(f(x+ \delta^{(t)}), y)/\Vert \frac{\partial}{\partial x} L(f(x+ \delta^{(t)}), y) \Vert$} for $20$ steps with the step size {\small$\alpha = \frac{1}{10} \epsilon = 0.2$}.
To verify Theorem 6, we used the original input sample $x$ and the corresponding adversarial example {\small$x+\delta$} to update the weight {\small$W_{j}$} in the $j$-th layer by the same length {\small$\Vert \Delta W_{j}\Vert = \Vert \Delta W_{j}^{(\text{adv})} \Vert = 0.001$}.


\end{document}